\newtheorem{definition}{Definition}
\newtheorem{theorem}{Theorem}
\newtheorem{lemma}{Lemma}
\newtheorem{proposition}{Proposition}
\newtheorem{corollary}{Corollary}
\newtheorem{principle}{Principle}
\newtheorem{remark}{Remark}
\newtheorem{example}{Example}
\title{What is Intelligence? A Cycle Closure Perspective}
\author{ \href{https://orcid.org/0000-0003-2067-2763}{\includegraphics[scale=0.06]{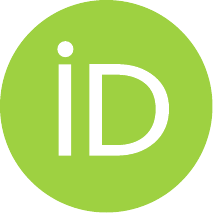}\hspace{1mm}Xin Li}\thanks{This work was partially supported by NSF IIS-2401748 and BCS-2401398. The author has used ChatGPT and Gemini models to assist the development of theoretical ideas and visual illustrations presented in this paper.} \\
	Department of Computer Science\\
	University at Albany\\
	Albany, NY 12222 \\
	\texttt{xli48@albany.edu} 
}
\begin{document}
\maketitle

\begin{abstract}
What is intelligence? We argue for a structural–dynamical account rooted in a
topological closure law: \emph{the boundary of a boundary vanishes}
($\partial^2=0$). This principle forces transient fragments to cancel while
closed cycles persist as invariants, yielding the cascade
$\partial^2\!=\!0 \Rightarrow \text{cycles (invariants)} \Rightarrow \text{memory}
\Rightarrow \text{prediction (intelligence)}$. Prediction requires invariance:
only order-invariant cycles can stabilize the predictive substrate. This
motivates the \textbf{Structure-before-Specificity (SbS)} principle, where
persistent structures ($\Phi$) must stabilize before contextual specificities
($\Psi$) can be meaningfully interpreted, and is formalized by the
\textbf{Context-Content Uncertainty Principle (CCUP)}, which casts cognition as
dynamic alignment that minimizes the joint uncertainty $H(\Phi,\Psi)$. We show
that \textbf{Memory–Amortized Inference (MAI)} is the computational mechanism
that implements SbS\,$\rightarrow$\,CCUP through dual bootstrapping:
\emph{temporal} bootstrapping consolidates episodic specifics into reusable
latent trajectories, while \emph{spatial} bootstrapping reuses these invariants
across latent manifolds. This framework explains why \emph{semantics precedes
syntax}: stable cycles anchor meaning, and symbolic syntax emerges only after
semantic invariants are in place.  
In an evolutionary perspective, the same closure law unifies the
trajectory of natural intelligence: from primitive memory traces in microbes, to
cyclic sensorimotor patterns in bilaterians, to semantic generalization in
mammals, culminating in human symbolic abstraction by natural language. In
sum, intelligence arises from the progressive collapse of
specificity into structure, grounded in the closure-induced emergence of
invariants.
\end{abstract}

\keywords{
boundary cancellation \and order invariance \and 
Structure-before-Specificity (SbS) \and 
Context-Content Uncertainty Principle (CCUP) \and 
Memory-Amortized Inference (MAI) \and 
temporal bootstrapping \and spatial bootstrapping \and social bootstrapping \and
semantics-before-syntax \and perception-action loop \and 
cycle closure \and evolutionary intelligence
}

\section{Introduction}
\label{sec:1}

What is intelligence? Despite decades of research in cognitive science, 
neuroscience, and artificial intelligence, a unified account remains elusive \cite{hawkins2021thousand}. 
Competing definitions emphasize information processing, problem solving, 
adaptation, or symbol manipulation. In this paper, we argue for a structural 
and dynamical definition rooted in physics itself. John Archibald Wheeler, in 
his “It from Bit” vision \cite{wheeler2018information}, pointed to the fundamental clue that \emph{the 
boundary of a boundary vanishes} ($\partial^2=0$). This closure law ensures 
that transient fragments cancel while only closed cycles persist as 
invariants. From this topological root, a cascade follows: cycles provide the 
substrate of memory, memory enables prediction, and prediction defines 
intelligence. Thus, the origin of intelligence is not arbitrary or 
anthropocentric, but grounded in the same closure principle that underlies the 
physical fabric of reality:
$\partial^2 = 0 \;\;\Rightarrow\;\; \text{Cycles (invariants)} \;\;\Rightarrow\;\; 
\text{Memory} \;\;\Rightarrow\;\; \text{Prediction (intelligence)}$.

\paragraph{First Principle: Prediction Requires Invariance.}
The first step in this chain is that prediction is impossible without 
invariance. A system that treats every episode as novel cannot generalize; 
only by identifying stable patterns, invariants that persist across 
perturbations and contexts, can it reuse the past to anticipate the future \cite{zhu2021understanding,deng2022strong}. 
Cycles, guaranteed by the closure law $\partial^2=0$, supply precisely such 
invariants: they collapse order-dependent variability into robust structures. 
Biological and artificial systems alike rely on these cycles, from cyclic 
attractors in locomotion to recurrent embeddings in machine learning \cite{spalla2021continuous}. In this 
sense, invariance is the substrate of memory, and memory the substrate of 
prediction.
If invariance is the prerequisite for prediction, then the form this invariance 
must take is \emph{order invariance}. Forecasting cannot depend on the exact 
sequence in which sensory fragments arrive, since permutations and local 
deformations of order carry no stable predictive value. Instead, predictive 
stability arises when order-dependent variations cancel out unless they close 
into a cycle. In algebraic terms, the boundary of a boundary vanishes 
($\partial^2 = 0$), so only closed trajectories or nontrivial cycles persist as order-invariant carriers of memory \cite{khona2022attractor}. 
Open chains, by contrast, dissolve as transient boundaries. Cycle closure thus 
implements the filter that reduces entropy: among the many possible orderings, 
only those that stabilize into loops survive as predictive substrates. 

\paragraph{Ordering Requirement: Structure-before-Specificity (SbS).}
How are invariants extracted? We propose the 
SbS principle: low-entropy, persistent 
structures ($\Phi$) must be established before high-entropy, context-bound 
specificities ($\Psi$) can be meaningfully interpreted. In development, this is 
manifest in central pattern generators that scaffold motor control before fine 
movements are learned \cite{marder2001central}. In evolution, cyclic invariants such as navigation loops 
emerge before symbolic reasoning \cite{hauser2014mystery}. SbS explains why generalization precedes 
specialization: the backbone must stabilize before details can be elaborated.
The persistency of memory and meaning arises from the closure property of
boundaries: $\partial^2=0$ ensures that open fragments cancel while only closed
cycles endure. This mechanism turns transient episodes into stable carriers of
information. A trajectory fragment that fails to close is erased at the next
boundary operation, but one that completes a loop becomes persistent in
homology. In cognitive terms, persistency means that fleeting variations in
order or context do not destabilize memory; only those experiences that form
closed cycles survive as invariants. These invariants are precisely what can be
recalled, recombined, and communicated. 

\paragraph{Alignment Law: Context-Content Uncertainty Principle (CCUP).} The CCUP states that cognition minimizes joint uncertainty by dynamically
aligning high-entropy contextual scaffolds ($\Psi$) with low-entropy content
cycles ($\Phi$) \cite{friston2006free}. Context without content is noise; content without context is
rigid. Alignment ensures that variability is funneled into stable invariants,
yielding both robustness and adaptability. CCUP operationalizes
the SbS principle as a law of dynamic balance: $\Phi$ anchors prediction,
while $\Psi$ supplies flexibility, and their mutual alignment reduces
uncertainty to enable coherent intelligence \cite{tishby2000information}.
Under the CCUP framework, cognition is cast as the minimization of joint 
uncertainty $H(\Phi,\Psi)$, achieved through dynamic alignment between context 
and content. This principle explains both robustness (cycles that persist under 
perturbation) and flexibility (contexts that reconfigure invariants across 
situations).

\paragraph{Computational Mechanism: Memory-Amortized Inference as Implementation.}
How can SbS and CCUP be realized computationally? 
\textbf{Memory-Amortized Inference (MAI)} provides the mechanism. 
Through \emph{temporal bootstrapping}, MAI consolidates specific episodes into 
reusable latent trajectories, reducing future inference cost. Through 
\emph{spatial bootstrapping}, MAI generalizes invariants across latent 
manifolds, enabling cross-modal reuse. Together, these bootstrapping processes 
progressively collapse specificity ($\Psi$) into structure ($\Phi$), enacting 
the SbS principle in practice \cite{lee2025memory}.
The MAI framework also clarifies the relationship between semantics and syntax. 
Semantics corresponds to the stable invariants ($\Phi$) extracted by MAI; syntax 
arises only once a sufficiently rich library of cycles exists to be compressed 
and recombined. Therefore, \emph{semantics precedes syntax}, both phylogenetically 
(in evolution) and ontogenetically (in development). Human language is the 
culmination of this trajectory: symbols emerge only after semantic cycles have 
stabilized across perception, action, and memory \cite{hauser2014mystery}.

\paragraph{Evolutionary Continuity: Bootstrapping is All You Need.}
This account is consistent with the evolutionary trajectory of intelligence
\cite{bennett2023brief} (Table~\ref{tab:duality}). Primitive
organisms encode predictive traces in chemical gradients, a rudimentary form of
memory \cite{kamino2016rescaling}. With motile animals, invariants emerge through perception–action
loops \cite{fuster2004upper}, such as navigation cycles and locomotor gaits. Vertebrates add a new
layer: hippocampal–cortical dynamics align context and content, implementing
the first form of uncertainty minimization across scales \cite{buzsaki1996hippocampo}. Mammals consolidate
these dynamics through sleep and replay \cite{zhang2025replay}, a clear instantiation of
\textbf{temporal bootstrapping}: episodic specifics are compressed into durable
cycles that can be reused for future prediction. Cortical expansion further
enables \textbf{spatial bootstrapping} \cite{mountcastle1997columnar}: invariants discovered in one modality
(e.g., visual trajectories \cite{dicarlo2012does}) are redeployed in another (e.g., motor planning \cite{todorov2002optimal}),
promoting cross-modal generalization. In humans, a novel dimension emerges:
\textbf{social bootstrapping} \cite{barnes1983social}, in which cycles of
perception and action extend into cycles of communication and shared inference.
\emph{Symmetry breaking} among social agents, differences in perspective, intention, and context, creates the need to align otherwise idiosyncratic self-models. 
Language then emerges as \emph{social bootstrapping}: shared symbols compress and transmit these stabilized invariants so that multiple agents can glue their partial views into a common structure, collapsing divergent specificities into intersubjective cycles \cite{hauser2014mystery}. In this view, language builds upon, and does not create, conscious integration; it externalizes and coordinates pre-existing invariant cycles across agents. Thus, the evolutionary trajectory runs from individual closure-induced unity to socially shared global sections, where temporal, spatial, and social bootstraps converge to yield the felt unity of experience.

\begin{table}[h!]
\centering
\renewcommand{\arraystretch}{1.3}
\begin{tabular}{p{0.38\textwidth} | p{0.55\textwidth}}
\hline
\textbf{Flow of Intelligence (Conceptual)} & \textbf{Evolutionary Stage (Natural History)} \\ \hline


\textbf{Prediction requires invariance} 
& Organisms generalize across noisy inputs (robust chemotaxis, adaptive phototaxis). Without invariance, learning fails. \\ \hline \hline

\textbf{Invariance generates cycles} 
& Motile bilaterians: sensorimotor loops (locomotion, foraging) encode invariants through cyclic trajectories. \\ 

\textbf{Structure-before-Specificity principle} 
& Central pattern generators and motor primitives form stable backbones; fine-grained specificity appears later. \\ \hline \hline

\textbf{Cycle Closure / Persistence Principle} 
& Robust cyclic attractors emerge: insect gaits, vertebrate locomotion, hippocampal replay of navigation loops. \\ 

\textbf{Context-Content Uncertainty Principle (CCUP)-based dynamic alignment} 
& Vertebrates: hippocampus–entorhinal alignment; context–content separation enables flexible planning and social coordination. \\ \hline \hline

\textbf{Memory-Amortized Inference (MAI)} 
& Sleep consolidation and replay amortize single episodes into durable invariants; mammalian memory systems refine over time. \\ 

\textbf{Temporal/spatial/social bootstrapping} 
& Cortical expansion reuses cycle-structured invariants across modalities (vision→planning, audition→speech–motor). \\ \hline \hline


\textbf{Semantics abstracted into syntax} 
& Humans: stable semantic cycles scaffold symbolic language and culture; syntax emerges as compressed recombination of cycles. \\ \hline

\end{tabular}
\caption{Alignment of the conceptual flow of intelligence with evolutionary stages in natural history.}
\vspace{-0.2in}
\label{tab:duality}
\end{table}

In this paper, we therefore propose a unifying account of intelligence that 
traces its origin to topological closure. 
(i) Intelligence is rooted in the closure principle of physics: the boundary of 
a boundary vanishes ($\partial^2=0$), ensuring that transient fragments cancel 
while only closed cycles persist. 
(ii) Prediction requires invariance, because only order-invariant cycles can 
stabilize memory and enable generalization beyond rote storage. 
(iii) Such invariants emerge through the \textbf{Structure-before-Specificity 
(SbS)} principle and are dynamically maintained by the 
\textbf{Context-Content Uncertainty Principle (CCUP)}, which jointly enforce 
that persistent cycles ($\Phi$) form the backbone while transient scaffolds 
($\Psi$) supply adaptive variation. 
(iv) \textbf{Memory-Amortized Inference (MAI)} implements these principles 
computationally via dual bootstrapping: temporally, by consolidating episodic 
fragments into stable loops, and spatially, by reusing cycles across latent 
domains. 
(v) Once cycles stabilize, semantics can be abstracted into syntax: the 
dot-cycle dichotomy provides the structural bridge, where dots (fragments) 
remain ephemeral but cycles persist as carriers of meaning, memory, and 
communication.
By explicitly grounding intelligence in the \emph{topological law of cycle 
closure} and the persistence of invariants, this framework bridges cognitive 
science, machine learning, and evolutionary biology. It offers both a 
\emph{principle} (closure, SbS, CCUP) and a \emph{mechanism} (MAI 
bootstrapping) for understanding how coherence, generalization, and symbolic 
abstraction emerge from the progressive collapse of specificity into structure.


The rest of the paper is organized as follows. In Sec. \ref{sec:2}, we lay out the theoretical foundation for probing into the physical origin of intelligence in the spirit of John Wheeler. Inspired by the It-from-Bit dictum, we propose that $\partial^2 = 0$ $\Rightarrow$ topological invariance as the root principle and derive prediction-requires-invariance as the first principle. In Sec. \ref{sec:3}, we present the structure-before-specificity (SbS) principle as the consequence of cycle generation by invariance. In Sec. \ref{sec:4}, we study the dynamic alignment between context and content via cycle closure under the context-content uncertainty principle (CCUP). In Secs. \ref{sec:5} and \ref{sec:6}, we propose MAI as the computational implementation of SbS/CCUP and bootstrapping as the unified meta-strategy driving the evolution of intelligence in nature.

\section{Foundation: The Physical Origin of Intelligence}
\label{sec:2}

\subsection{Dot-Cycle Dichotomy and Topological Closure}

\paragraph{Boundary of a boundary vanishes as root principle.}
The deepest premise of our framework is a closure law from physics, $\partial^2=0$, which Wheeler flagged as a first clue in his \emph{It-from-Bit} program \cite{wheeler2018information}.
At the base of our account lies a physical closure law: the boundary of a boundary
vanishes ($\partial^2 = 0$). This identity, central in algebraic topology and echoed
in Wheeler’s \emph{It-from-Bit} dictum, enforces that transient fragments cancel
while only closed cycles persist as invariants. By organizing information into cycles
rather than open chains, closure makes invariants possible; invariants are the
substrate of memory, and memory is the substrate of prediction. Hence intelligence is
not an arbitrary construct but a corollary of a fundamental constraint on
information:
$\partial^2 = 0 \;\;\Rightarrow\;\; \text{cycles} \;\;\Rightarrow\;\; 
\text{memory} \;\;\Rightarrow\;\; \text{prediction}$.
The root of intelligence is thus a physics-level requirement: predictive substrates
must be closed under boundary operations.

\begin{figure}[t]
\centering
\resizebox{\textwidth}{!}{
\begin{tikzpicture}[
  >=Latex,
  scale=1.0,
  every node/.style={font=\small},
  panel/.style={rounded corners=6pt, draw=gray!60, line width=0.8pt, minimum width=5.6cm, minimum height=4.6cm},
  title/.style={font=\small\bfseries, text=gray!70},
  chainline/.style={line width=1.2pt, gray!70},
  cycleline/.style={line width=1.4pt, blue!70},
  endpoint/.style={circle, fill=gray!70, inner sep=1.8pt},
  arrowlab/.style={midway, above, sloped, fill=white, inner sep=1pt}
]

\node[panel] (LeftBox)  at (-6.0,0) {};
\node[panel] (MidBox)   at (  0.0,0) {};
\node[panel] (RightBox) at (  6.0,0) {};

\node[title] at ([yshift=2.5cm]LeftBox)  {Open chain $\Rightarrow$ dot in $H_0$};
\node[title] at ([yshift=2.5cm]MidBox)   {$\partial^2=0$ (boundary of boundary vanishes)};
\node[title] at ([yshift=2.5cm]RightBox) {Closed cycle $\Rightarrow$ class in $H_1$};

\draw[chainline] ($(LeftBox)+(-2.0,0.6)$) -- ($(LeftBox)+(-0.6,1.2)$) --
                 ($(LeftBox)+( 0.6,0.1)$) -- ($(LeftBox)+( 2.0,0.9)$);

\node[endpoint] (LStart) at ($(LeftBox)+(-2.0,0.6)$) {};
\node[endpoint] (LEnd)   at ($(LeftBox)+( 2.0,0.9)$) {};

\node at ($(LeftBox)+(0,1.65)$) {$\sigma \in C_1(\mathcal{Z}),\ \ \partial\sigma = \text{(start)} - \text{(end)} \neq 0$};

\draw[->, gray!70, line width=0.9pt] ($(LeftBox)+(0.1,-0.1)$) -- ($(LeftBox)+(0.1,-1.0)$)
  node[arrowlab] {\scriptsize collapse / forgetting};
\node[circle, fill=gray!70, inner sep=2pt] at ($(LeftBox)+(0,-1.4)$) {};
\node at ($(LeftBox)+(0,-1.8)$) {$H_0(\mathcal{Z})$ (dot)};

\node at ($(MidBox)+(0,1.45)$) {$\sigma$};
\draw[chainline] ($(MidBox)+(-1.7,1.0)$) -- ($(MidBox)+(1.7,1.0)$);

\draw[->] ($(MidBox)+(0,0.8)$) -- ($(MidBox)+(0,0.2)$) node[arrowlab] {\scriptsize $\partial$};
\node[endpoint] at ($(MidBox)+(-1.7,0.0)$) {};
\node[endpoint] at ($(MidBox)+( 1.7,0.0)$) {};
\node at ($(MidBox)+(0,-0.35)$) {$\partial\sigma$ (pair of endpoints)};

\draw[->] ($(MidBox)+(0,-0.6)$) -- ($(MidBox)+(0,-1.2)$) node[arrowlab] {\scriptsize $\partial$};
\node at ($(MidBox)+(0,-1.55)$) {$\partial(\partial\sigma)=0$};

\draw[cycleline] ($(RightBox)+(0,0.5)$) circle (1.2);
\node at ($(RightBox)+(0,1.85)$) {$\gamma \in C_1(\mathcal{Z}),\ \ \partial\gamma = 0$};

\draw[->, gray!70, line width=0.9pt] ($(RightBox)+(0,-0.1)$) -- ($(RightBox)+(0,-1.0)$)
  node[arrowlab] {\scriptsize persistence};
\node[blue!70] at ($(RightBox)+(0,-1.45)$) {$[\gamma] \in H_1(\mathcal{Z})$};
\node at ($(RightBox)+(0,-1.85)$) {nontrivial cycle (memory)};

\end{tikzpicture}
}
\caption{\textbf{$\partial^2=0$ enforces the dot-cycle dichotomy.} 
\emph{Left:} An open chain $\sigma$ has a nonzero boundary $\partial\sigma$ and
collapses to a dot (class in $H_0$), carrying no relational content. 
\emph{Middle:} The boundary operator squares to zero: $\partial(\partial\sigma)=0$. 
\emph{Right:} A closed chain $\gamma$ with $\partial\gamma=0$ persists as a homology
class $[\gamma]\in H_1$, i.e., a cycle that encodes order-invariant structure.}
\vspace{-0.2in}
\label{fig:dot-cycle-dichotomy}
\end{figure}

\paragraph{From topological closure to cycles.}
The identity $\partial^2=0$ means that the boundary of a boundary always 
vanishes. Algebraically, this forces a dichotomy in how information is 
organized: fragments that do not close into cycles are annihilated, while 
those that do close persist. We call this the \emph{dot-cycle dichotomy}: 1)
\emph{dots} are the ephemeral remnants of open chains: isolated events, 
transient edges, or local orderings that cannot be closed consistently. 
Because $\partial^2=0$ guarantees that every boundary fragment is canceled at 
the next level, dots never accumulate into stable carriers of meaning. They 
are cognitively analogous to momentary impressions or noise, content that 
quickly dissipates. 
2) \emph{cycles}, in contrast, are closed structures in the kernel of $\partial$. 
They survive the cancellation enforced by $\partial^2=0$ because they possess 
no further open boundaries. In homological terms, they represent equivalence 
classes $[\gamma]\in H_k(\mathcal{Z})$, immune to local perturbations. 
Cognitively, cycles correspond to stable patterns that endure across time and 
context, forming the first invariants on which memory can be built. 
In summary, topological closure is not a mere algebraic curiosity but the generative rule that separates noise (dots) from structure (cycles). This dot-cycle dichotomy 
is the first step in the emergence of intelligence: only cycles, not dots, 
become the persistent invariants that memory can retain and prediction can reuse (refer to Fig. \ref{fig:dot-cycle-dichotomy}). Formally, we have

\begin{lemma}[$\partial^2=0$ Enforces the Dot-Cycle Dichotomy]
\label{lemma:dot-cycle}
Let $C_\ast(\mathcal{Z})$ denote the chain complex of a neural state space $\mathcal{Z}$. 
The homological identity $\partial^2=0$ implies that:
1) Any open chain $\sigma \in C_1(\mathcal{Z})$ with $\partial \sigma \neq 0$ 
must collapse to a trivial 0-cycle in $H_0(\mathcal{Z})$, encoding mere connectivity 
without relational content.
2) Any closed chain $\gamma \in C_1(\mathcal{Z})$ with $\partial \gamma = 0$ 
defines a homology class $[\gamma]\in H_1(\mathcal{Z})$. If $\gamma$ is not the 
boundary of a higher-dimensional chain, it represents a nontrivial cycle that 
persists as a stable memory trace.
Thus, $\partial^2=0$ acts as a topological filter: boundaries of boundaries vanish, 
ensuring that only two outcomes are possible, collapse into trivial dots ($H_0$) 
or persistence as nontrivial cycles ($H_1$). 
\end{lemma}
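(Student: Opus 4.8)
\emph{Proof sketch.} The plan is to unwind the definitions of the chain complex $(C_\ast(\mathcal{Z}),\partial)$ and of homology, and to show that the dot/cycle split is precisely the split between (i) $1$-chains that fail to be cycles, whose boundaries are automatically null-homologous in $H_0$, and (ii) $1$-cycles, which either bound (and are then homologically indistinguishable from a dot) or persist as a nonzero class in $H_1$. The identity $\partial^2=0$ enters in two places: once to guarantee $\mathrm{im}\,\partial_{n+1}\subseteq\ker\partial_n$, so that $H_n(\mathcal{Z})=\ker\partial_n/\mathrm{im}\,\partial_{n+1}$ is well defined, and once to force $\partial\sigma$ to be a $0$-cycle for every $\sigma\in C_1$.

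First I would fix notation: write $Z_n=\ker\partial_n$ for the $n$-cycles and $B_n=\mathrm{im}\,\partial_{n+1}$ for the $n$-boundaries; $\partial^2=0$ then gives $B_n\subseteq Z_n$ and hence $H_n(\mathcal{Z})=Z_n/B_n$. For part (1), take $\sigma\in C_1$ with $\partial\sigma\neq 0$. Then $\sigma\notin Z_1$, so $\sigma$ represents no class in $H_1$ at all. Its boundary $\partial\sigma\in C_0$ satisfies $\partial(\partial\sigma)=0$ by $\partial^2=0$, so $\partial\sigma\in Z_0$ (equivalently, it has zero augmentation in the reduced setting); and $\partial\sigma\in B_0$ by construction. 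Therefore $[\partial\sigma]=0$ in $H_0(\mathcal{Z})$ — it "collapses." Since $H_0$ is freely generated by path-components, the only invariant $\sigma$ records is that its two endpoints lie in the same component: this is the "mere connectivity without relational content" assertion.

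For part (2), take $\gamma\in C_1$ with $\partial\gamma=0$, i.e. $\gamma\in Z_1$; then $[\gamma]\in H_1(\mathcal{Z})=Z_1/B_1$ is well defined. If moreover $\gamma\notin B_1=\mathrm{im}\,\partial_2$, i.e. $\gamma$ is not the boundary of any $2$-chain, then $[\gamma]\neq 0$, a nontrivial class. Persistence then follows from the standard fact that homology is an invariant of the complex — stable under the equivalences the complex is built to respect — so a nonzero $[\gamma]$ cannot be "filled in" or cancelled; it is retained. Finally, the dichotomy: every $\sigma\in C_1$ satisfies exactly one of $\partial\sigma=0$ or $\partial\sigma\neq 0$; in the latter case $\partial\sigma$ is a homologically trivial $0$-class (a "dot"), and in the former case $\sigma\in Z_1$ is either in $B_1$ (a bounding cycle, again homologically trivial) or not (a persistent generator of $H_1$). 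Hence the only carriers of genuine structure are the nontrivial classes in $H_1$, while every open chain and every bounding cycle alike degenerate to a trivial dot.

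The step I expect to require the most care is not any computation but the bridging claim — pinning down in what precise sense "$[\partial\sigma]=0$ in $H_0$" corresponds to "transient/forgotten," and "$0\neq[\gamma]\in H_1$" to "persistent memory trace." I would handle this by stating explicitly that the topological content is exhausted by the well-definedness and (non)vanishing of homology classes; the memory interpretation is the modeling overlay, justified by the perturbation-stability (invariance) of homology rather than derived from $\partial^2=0$ alone. A secondary point worth flagging carefully is that it is $\partial\sigma$, not $\sigma$ itself, that "becomes" the trivial $0$-cycle in part (1): an open chain never enters $H_1$ to begin with, so "collapse to a dot" should be read as "its only homological shadow lives in $H_0$ and is null there."
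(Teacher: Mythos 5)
Your proposal is correct, and it follows exactly the route the paper itself implicitly takes: the paper supplies no explicit proof of this lemma (it is absent from the appendix of proofs), treating it as the definitional unpacking $B_n\subseteq Z_n$, $H_n=Z_n/B_n$ that you spell out, and which the paper restates in Corollary~\ref{cor:closure-cycles} and the surrounding discussion. Your two flagged caveats are exactly right and worth keeping: it is $\partial\sigma$, not $\sigma$, whose homological shadow is null in $H_0$, and the ``persists as a stable memory trace'' clause rests on the stability of (persistent) homology, which the paper invokes separately in Proposition~\ref{prop:cycles-to-memory} rather than deriving from $\partial^2=0$.
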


The dot-cycle dichotomy in Lemma~\ref{lemma:dot-cycle} exhibits the
one-dimensional shadow of a general phenomenon: once $\partial^2=0$ holds,
open fragments cannot propagate across levels and only closed structures can
survive. For $1$-chains this yields a binary fate (collapse into trivial
$0$-cycles or persist as nontrivial $1$-cycles), mirroring the simplest
computational states. In higher dimensions, the same cancellation logic applies:
a $k$-chain contributes stably to the predictive substrate only if its boundary
vanishes (it lies in $\ker \partial_k$), and two such closed $k$-chains are
equivalent whenever they differ by a $(k\!+\!1)$-boundary (an element of
$\mathrm{im}\,\partial_{k+1}$). Thus, persistence is measured not merely by
closure, but by closure \emph{modulo} deformation through higher-dimensional
fills. This lifts the 1D dichotomy to the canonical algebraic form of
homology. The foregoing argument motivates a formal passage from intuitive persistence to algebraic structure. Once $\partial^2=0$ is assumed, the only candidates for
stable carriers are chains whose boundaries vanish, and persistence must be
assessed up to higher-dimensional fillings. This is precisely what homology
captures: closure selects $\ker \partial_k$, while indifference to deformations
through $(k\!+\!1)$-dimensional fills quotients by $\mathrm{im}\,\partial_{k+1}$.
We therefore record the following corollary.

\begin{corollary}[Topological closure yields cycles]
\label{cor:closure-cycles}
Let $(C_\bullet,\partial)$ be a chain complex with $\partial^2=0$. Then every
$k$-chain with vanishing boundary belongs to the kernel of $\partial_k$, and
the quotient $\ker \partial_k / \mathrm{im}\,\partial_{k+1}$ defines nontrivial
homology classes, i.e., cycles that are not boundaries.  
\end{corollary}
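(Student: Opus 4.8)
The plan is to treat this corollary as an unpacking of the standard definitions of homology, the one essential input being the identity $\partial^2=0$, which is granted by hypothesis. First I would fix notation, writing $Z_k := \ker\partial_k$ for the $k$-cycles and $B_k := \mathrm{im}\,\partial_{k+1}$ for the $k$-boundaries. The first assertion is then immediate: to say a $k$-chain $\gamma$ has vanishing boundary is, verbatim, to say $\partial_k\gamma = 0$, i.e. $\gamma \in Z_k$; no argument is needed beyond matching the statement to the definition.

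The substantive step is to verify that the quotient $Z_k/B_k$ is well-formed, i.e. that $B_k$ is a subgroup (submodule) of $Z_k$. Here is where $\partial^2=0$ enters: given any $c \in B_k$, write $c = \partial_{k+1}d$ for some $(k+1)$-chain $d$; then $\partial_k c = \partial_k\partial_{k+1}d = 0$ by $\partial^2=0$, so $c \in Z_k$. Thus $B_k \subseteq Z_k$, and since $C_\bullet$ is a complex of abelian groups with each $\partial$ a homomorphism, $B_k$ is a subgroup and $H_k := Z_k/B_k$ is defined. A class $[\gamma] = \gamma + B_k$ is then by construction represented by a cycle, and $[\gamma] \neq 0$ precisely when $\gamma \in Z_k \setminus B_k$, i.e. when $\gamma$ is a cycle that is not the boundary of any $(k+1)$-chain — this is the formal content of ``cycles that are not boundaries.''

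The one point that needs care — and the place where the statement is informal rather than purely algebraic — is the word \emph{nontrivial}. The construction above always produces the group $H_k$, but $H_k$ is the zero group unless some cycle fails to bound; nontriviality is therefore a property of the particular chain complex $C_\bullet(\mathcal{Z})$, not a consequence of $\partial^2=0$ alone. I would close this gap by invoking Lemma~\ref{lemma:dot-cycle}: in the setting of interest, a closed $1$-chain $\gamma$ traced out by a recurrent neural trajectory that is not filled in by any $2$-chain furnishes an explicit element of $Z_1 \setminus B_1$, hence a nontrivial class $[\gamma] \in H_1(\mathcal{Z})$, and the same reasoning lifts verbatim to higher $k$ whenever $\mathcal{Z}$ supports a non-bounding $k$-cycle. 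So the main obstacle is not a calculation but a matter of precise phrasing: the corollary should be read as asserting (a) the well-definedness of $H_k = \ker\partial_k / \mathrm{im}\,\partial_{k+1}$ for any complex with $\partial^2=0$, and (b) the existence of nontrivial classes whenever the underlying state space has the requisite topology — the latter being exactly what the dot-cycle dichotomy of Lemma~\ref{lemma:dot-cycle} supplies.
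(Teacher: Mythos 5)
Your proposal is correct and takes essentially the same route as the paper, which offers no separate proof of Corollary~\ref{cor:closure-cycles} and instead treats it as an immediate unpacking of the definitions: $\partial^2=0$ gives $\mathrm{im}\,\partial_{k+1}\subseteq\ker\partial_k$, so the quotient $H_k=\ker\partial_k/\mathrm{im}\,\partial_{k+1}$ is well-defined and its nonzero classes are exactly cycles that are not boundaries --- precisely your argument. Your added caveat that ``nontrivial'' is a property of the particular complex (requiring a non-bounding cycle, as supplied by Lemma~\ref{lemma:dot-cycle}) rather than a consequence of $\partial^2=0$ alone is a fair sharpening of the paper's informal phrasing, not a departure from its approach.
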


\begin{remark}[Topological vs. cycle closure]
The identity $\partial^2=0$ ensures that boundaries of chains cannot propagate
indefinitely: every open fragment is eventually canceled. What remains are
closed loops, formally the kernel of $\partial$. These cycles are the first
stable invariants of structure, surviving permutations, deformations, and local
noise. In cognition, this dot-cycle dichotomy captures 
the distinction between forgotten fragments and consolidated memories. In this sense, \emph{cycle closure} is a 
special case of \emph{topological closure}, one that grounds memory and invariance 
in biological and cognitive systems. Topological closure is the structural 
principle; cycle closure is its operational realization.
\end{remark}

\paragraph{From cycles to memory.}
Once cycles are distinguished from transient fragments, they naturally serve as
substrates of memory. In algebraic terms, a cycle is an element of 
$\ker \partial_k$ that resists annihilation by boundary cancellation; if it is
also not a boundary of a higher chain, it represents a nontrivial homology
class $[\gamma]\in H_k(\mathcal{Z})$. Such classes are stable under
perturbations: small deformations of the underlying space leave the homology
group unchanged. This \emph{stability theorem of persistent homology} implies
that cycles act as invariants, surviving noise, reparameterization, or
redundant embedding. 
Cognitively, this stability corresponds to memory. Experiences that fail to
close into cycles, like dots and fragments, dissipate and cannot be reliably
recalled. In contrast, experiences that form closed loops become encoded as
persistent traces, available for recall, recombination, and communication. A
navigation loop, for instance, can be replayed even when environmental details
change, just as a rhythmic motor cycle can be executed despite
perturbations \cite{grillner2006biological}. In both cases, the cycle furnishes a compact invariant that
outlives its specific instantiations \cite{marder2001central}. Therefore, cycles become memory not by mere storage of instances, but by persisting
as structural invariants. They collapse many distinct inputs into the same
homology class, compressing specificity into structure. This compression is the
essence of memory: it prunes away ephemeral variation, while retaining the
persistent core that can be reused for future inference.

\begin{proposition}[Nontrivial homology classes as substrates of memory]
\label{prop:cycles-to-memory}
Let $\mathcal{Z}$ be a latent space equipped with a chain complex
$(C_\bullet(\mathcal{Z}),\partial)$ and let $S:\mathcal{X}\!\to\! H_k(\mathcal{Z})$
map observables $x\!\in\!\mathcal{X}$ to their induced $k$-dimensional
homology classes $[\gamma_x]$. Suppose the observation process is (i) stable
under small perturbations (in the sense of persistent homology), and (ii)
admits a quotient $\pi:\mathcal{X}\!\to\!\mathcal{X}/\!\sim$ with
$x\!\sim\!x'$ iff $S(x)=S(x')$. Then we have:
1) (\emph{Persistence}) Each nontrivial class $[\gamma]\in H_k(\mathcal{Z})$
  is invariant under sufficiently small deformations of the data/metric, i.e.,
  $[\gamma]$ persists across a nonzero interval in the filtration.
2) (\emph{Compression}) The map $\pi$ collapses many episodic instances
  into a single class $[\gamma]$, reducing description length and empirical
  entropy on $\mathcal{X}$.
3) (\emph{Retrievability}) For any future $x^\star$ with $S(x^\star)=[\gamma]$,
  retrieval of $[\gamma]$ reconstructs a representative trajectory that is
  behaviorally adequate up to the persistence tolerance.
Hence nontrivial homology classes serve as \emph{memory substrates}: stable,
compressive, and retrievable carriers of past experience.
\end{proposition}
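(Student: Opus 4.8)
The plan is to prove the three claims in sequence, using the stability theory of persistent homology for (1), an information-theoretic argument on the quotient map $\pi$ for (2), and a choice-of-representative argument for (3). The only substantive external input is the bottleneck stability theorem for persistence diagrams; the rest is bookkeeping once a filtration on $\mathcal{Z}$ and a reference measure on $\mathcal{X}$ are fixed.

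\medskip\noindent\textbf{Step 1 (Persistence).} First I would fix a filtration $\emptyset=\mathcal{Z}_{t_0}\subseteq\mathcal{Z}_{t_1}\subseteq\cdots\subseteq\mathcal{Z}$ induced by the observation process --- sublevel sets of a scoring function on $\mathcal{Z}$, or a Vietoris--Rips complex on sampled activity --- and let $\mathrm{Dgm}_k$ be the degree-$k$ persistence diagram. By Corollary~\ref{cor:closure-cycles}, a nontrivial class $[\gamma]\in H_k(\mathcal{Z})$ that is not a boundary corresponds to a point $(b,d)\in\mathrm{Dgm}_k$ with $d>b$; set $\delta_\gamma:=d-b>0$. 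Assumption (i) says the data and metric enter the filtration in a $1$-Lipschitz way, so a perturbation of size $\varepsilon<\delta_\gamma/2$ moves $\mathrm{Dgm}_k$ by less than $\delta_\gamma/2$ in bottleneck distance; hence $(b,d)$ cannot be matched to the diagonal and the class survives across the entire interval $[b,d)$. This is claim (1), with explicit tolerance $\delta_\gamma$.

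\medskip\noindent\textbf{Step 2 (Compression).} Since $\mathcal{Z}$ carries a finite-type chain complex, $H_k(\mathcal{Z})$ is finitely generated, so any element of $S(\mathcal{X})\subseteq H_k(\mathcal{Z})$ is specifiable with $O(\log\operatorname{rank}H_k(\mathcal{Z}))$ bits, whereas a generic $x\in\mathcal{X}$ is not; writing $\mathrm{DL}(x)\le \mathrm{DL}(S(x))+\mathrm{DL}(x\mid S(x))$ exhibits the class index as a cheap prefix. For the entropy statement I would equip $\mathcal{X}$ with the empirical distribution $\mu$ of experienced episodes and set $\Phi:=\pi_\#\mu=S_\#\mu$; because $\pi$ is deterministic, $H(\Phi)\le H(\mu)$, strictly whenever some fiber $\pi^{-1}([\gamma])$ carries $\mu$-mass on more than one atom --- i.e., whenever perception is genuinely many-to-one onto invariants. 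This is claim (2), and it is the concrete sense in which specificity $\Psi$ collapses into structure $\Phi$. Next I would fix a selector $r:H_k(\mathcal{Z})\to C_k(\mathcal{Z})$ sending a class to a canonical representative cycle $r([\gamma])\in[\gamma]$ (e.g.\ a minimal-weight cycle, generically unique) and define retrieval on a future $x^\star$ with $S(x^\star)=[\gamma]$ to return $r([\gamma])$; any data-induced representative $\gamma_{x^\star}$ of $[\gamma]$ differs from $r([\gamma])$ by a boundary, hence is homologous to it, so every behavior functional factoring through $H_k$ agrees on the two, and by Step 1 this equivalence is robust to perturbations below $\delta_\gamma$ --- giving claim (3). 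The three steps together yield the Proposition.

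\medskip\noindent\textbf{Main obstacle.} The hard part will be making \emph{behaviorally adequate} rigorous, since it imports an operational notion the excerpt has not formalized: I would posit a behavior map $\beta$ from trajectories into an outcome space that is constant on homology classes (or Lipschitz with respect to a metric on cycles), turning ``adequate up to tolerance'' into the quantitative bound $d_{\mathcal{B}}\!\big(\beta(\gamma_{x^\star}),\beta(r([\gamma]))\big)\le L\,\delta_\gamma$. A secondary subtlety is that the compression claim is vacuous unless the fibers of $S$ are genuinely nontrivial and a reference measure on $\mathcal{X}$ is chosen; I would fold both into the hypotheses (the many-to-one character of perception, and the empirical episode distribution $\mu$) so that the compression and retrievability parts carry content rather than reducing to tautologies.
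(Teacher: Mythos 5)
Your proposal follows essentially the same route as the paper's own proof sketch: bottleneck stability of persistence diagrams for claim (1), the many-to-one/MDL entropy-reduction argument via the quotient $\pi$ for claim (2), and choice of a homologous representative whose boundary-difference is invisible to any behavior functional factoring through $H_k$ for claim (3). You are in fact somewhat more careful than the paper --- the explicit $\delta_\gamma/2$ tolerance, the pushforward measure $\pi_\#\mu$ with the non-vacuity condition on fibers, and the posited behavior map $\beta$ supply exactly the hypotheses the paper's sketch leaves implicit --- so the proposal is correct and, if anything, tightens the argument.
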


\paragraph{From memory to prediction.}
If cycles function as the substrate of memory, then prediction arises by
reusing these cycles to anticipate future states. Memory alone would be mere
storage if it did not afford generalization; prediction is what transforms
stored invariants into foresight \cite{vecchi2020memory}. When a new trajectory fragment arrives, it
is matched against the library of cycles in memory. If the fragment aligns with
a known cycle $[\gamma] \in H_k(\mathcal{Z})$, the continuation of $[\gamma]$
serves as a forecast for the system’s evolution. Thus, memory transforms
invariance into prediction: the past provides templates for the future.
Formally, memory collapses many order-dependent trajectories into equivalence
classes under $\sim_\Phi$, where $x \sim_\Phi x'$ iff they induce the same
cycle $[\gamma]$. Prediction then reduces to extending a new instance by
retrieving the trajectory of its equivalence class. In this way, prediction is
not recomputation from scratch but amortized reuse of persistent invariants.
Without memory, each forecast would require constructing a new model for every
case; with memory, invariants already discovered compress the hypothesis space
and guide the system toward likely continuations.
Cognitively, this principle is reflected in behaviors such as replay \cite{zhang2025replay},
anticipatory motor control \cite{todorov2002optimal}, and semantic expectation in language processing \cite{clark1996language}.
Each involves projecting a partial input onto a stored cycle and unfolding its
likely outcome. Thus prediction is not a separate faculty but a corollary of
memory grounded in cycle persistence: intelligence is realized when invariants
stored in memory are actively redeployed to anticipate the future, which leads to

\begin{proposition}[Memory enables prediction]
\label{prop:memory-to-prediction}
Let $\Phi = \{[\gamma_i]\}$ denote the set of persistent cycles extracted from
past experience. For a new trajectory fragment $x_{1:t}$, suppose there exists
$[\gamma_j] \in \Phi$ such that $x_{1:t}$ aligns with an initial segment of
$\gamma_j$ up to the persistence tolerance. Then the continuation of
$\gamma_j$ provides a valid forecast of $x_{t+1:\,T}$. Hence prediction reduces
to amortized retrieval from memory, rather than de novo computation.
\end{proposition}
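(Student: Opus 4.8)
The plan is to reduce the statement to the retrievability clause already proved in Proposition~\ref{prop:cycles-to-memory}, once the informal phrase ``aligns with an initial segment up to the persistence tolerance'' is pinned down as a statement about filtration scale. First I would fix a representative $\gamma_j:S^1\to\mathcal{Z}$ of the class $[\gamma_j]\in H_k(\mathcal{Z})$ (treating $k=1$ for concreteness; the quotient-by-boundaries argument sketched after Lemma~\ref{lemma:dot-cycle} handles general $k$ identically) together with an embedding $\iota:\mathcal{X}\to\mathcal{Z}$ that lifts observable fragments into the latent space. The hypothesis then reads: there is a monotone reparametrization $\tau:\{1,\dots,t\}\to S^1$ with image an arc $A\subset S^1$ such that $d_{\mathcal{Z}}\!\big(\iota(x_i),\gamma_j(\tau(i))\big)\le\varepsilon$ for all $i$, where $\varepsilon$ lies strictly inside the birth--death interval over which $[\gamma_j]$ persists in the filtration.

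Second, I would use stability to turn this local match into a global identification of homology classes. Because $[\gamma_j]$ survives an interval of scales strictly containing $\varepsilon$, thickening the partial trace $\iota(x_{1:t})$ by $\varepsilon$ neither creates nor kills the class: in the Vietoris--Rips (or \v{C}ech) complex on $\iota(x_{1:t})\cup\gamma_j$ at scale $\varepsilon$, the only nontrivial $k$-cycle passing through $A$ is $[\gamma_j]$ itself. Hence the forecast candidate obtained by continuing along the representative, $\widehat{x}_{t+1:T}:=\iota^{-1}\big(\gamma_j|_{S^1\setminus A}\big)$ traversed from $\gamma_j(\tau(t))$ onward, is well defined up to tolerance $\varepsilon$ and lies in $[\gamma_j]$. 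Third, I would invoke Proposition~\ref{prop:cycles-to-memory}(3) directly: any future $x^\star$ with $S(x^\star)=[\gamma_j]$ admits a behaviorally adequate reconstruction up to the persistence tolerance, so $\widehat{x}_{t+1:T}$ $\varepsilon$-matches the true continuation $x_{t+1:T}$ whenever the latter remains in the data distribution (by hypothesis it too must close into $[\gamma_j]$). This establishes the ``valid forecast'' claim. The amortization statement is then immediate from Proposition~\ref{prop:cycles-to-memory}(2): producing $\widehat{x}$ used only a nearest-class query against the finite library $\Phi$ and a read-out of the stored continuation, with no re-estimation of $(C_\bullet,\partial)$; the cost of forming $[\gamma_j]$ was paid once, over all episodes collapsed into it by the quotient $\pi$, so prediction is amortized retrieval rather than de novo inference.

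The hard part will be uniqueness of the continuation in the second step. If $\gamma_j$ self-intersects, or if two distinct cycles of $\Phi$ agree along $A$ to within $\varepsilon$, the read-out is ambiguous and ``the continuation'' must weaken to ``a continuation'' (existence rather than uniqueness), or be repaired by a genericity hypothesis — for instance, that $\iota$ is injective at scale $\varepsilon$, or that the $\varepsilon$-tubes around distinct members of $\Phi$ are disjoint over $A$. I expect most of the effort to go into formulating the minimal such condition and checking that it is already forced by the persistence-stability assumption in play; once that is settled, the remaining steps are bookkeeping on top of Proposition~\ref{prop:cycles-to-memory}.
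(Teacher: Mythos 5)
Your proposal is correct and follows the same overall skeleton as the paper's proof (stability of persistence to identify the fragment's class with $[\gamma_j]$, continuation along the cycle, validity up to the persistence tolerance, amortization as retrieval), but the technical route differs in two places worth noting. Where you obtain well-definedness of the continuation via a Vietoris--Rips argument at scale $\varepsilon$ on $\iota(x_{1:t})\cup\gamma_j$ and then outsource validity and amortization to clauses (3) and (2) of Proposition~\ref{prop:cycles-to-memory}, the paper argues directly: it posits explicit axioms on the encoder/decoder pair (boundary insensitivity, warp invariance, a decoder left-inverse to the encoder on typical trajectories), defines the continuation operator $\mathcal{C}$ as an arc-length shift that descends to homology since $[\mathcal{C}(\gamma_j)]=[\gamma_j]$, and closes with a quantitative Lipschitz bound $\mathbb{E}[\ell]\le L\tau$ tying forecast error to the persistence lifetime. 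Your reduction is leaner and makes the dependence on the earlier proposition explicit; the paper's version buys a concrete error bound and makes the representative-independence of the decoded forecast an explicit consequence of its stated assumptions rather than of a complex-level uniqueness claim. On your flagged ``hard part'': the ambiguity among representatives of the \emph{same} class is exactly what the paper's boundary-insensitivity and warp-invariance assumptions dispose of (two representatives differ by $\partial d$ plus a warp, so their decoded continuations agree up to tolerance), but the sharper worry you raise --- two \emph{distinct} classes of $\Phi$ agreeing along the arc $A$ to within $\varepsilon$, or a self-intersecting representative --- is not addressed by the paper either; its Step~1 simply asserts $\pi(c_{1:t})=[\gamma_j]$ for the hypothesized $j$, which is an existence statement, not a uniqueness one. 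Your proposed genericity/tube-disjointness condition is a genuine strengthening that the paper implicitly needs if ``the continuation'' is to be read as unique rather than as ``a continuation''; weakening the conclusion to existence, as you suggest, is the honest alternative and is consistent with how the proposition is actually used downstream.
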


\paragraph{Summary.} The logical cascade 
$\partial^2 = 0 \;\;\Rightarrow\;\; \text{cycles} \;\;\Rightarrow\;\; 
\text{memory} \;\;\Rightarrow\;\; \text{prediction}$
summarizes the generative pathway from a topological identity to the cognitive
phenomenon we call intelligence. Topological closure ($\partial^2=0$) guarantees that
transient fragments vanish, leaving only cycles; cycles persist as invariants,
providing the substrate for memory; and memory in turn enables prediction. 
To make this cascade precise, we begin by isolating the first step relevant to
intelligence proper: prediction requires invariance. Without invariance, memory
reduces to mere storage, and prediction degenerates into rote replay. With
invariance, by contrast, memory compresses episodes into stable structures that
generalize across contexts. This motivates our first principle: \emph{prediction requires invariance}.

\subsection{First Principle: Prediction Requires Invariance}


\paragraph{The fragility of non-invariance.}
Intelligence, rooted in memory-based prediction, depends on the ability to 
extract and preserve invariants. A system that encounters each situation as 
unique cannot predict, because no structure carries forward from past to future. 
It follows that invariance is the \emph{first principle} of intelligence \cite{leibo2015invariance}: prediction 
requires the persistence of patterns that remain stable under transformation, 
noise, or contextual variation.
If a representation treats every input as distinct, memory degenerates into 
mere storage (i.e., the identity crisis of over-parameterization \cite{zhang2019identity}). For example, rote memorization of past trajectories allows exact 
recall but fails to anticipate novel perturbations. Similarly, purely 
statistical correlations may fit training data but collapse under a distribution 
shift. In both cases, the absence of invariance prevents generalization: the 
system cannot infer that different inputs may map to the same outcome.

\paragraph{Invariance as cycle persistence.}
Invariants are best understood as \emph{cycles that persist} despite variation. 
A motor primitive such as a gait, or a navigation loop through an environment, 
remains robust even as sensory details fluctuate. Formally, such persistence 
corresponds to nontrivial homology classes $[\gamma] \in H_k(\mathcal{Z})$ in 
the latent space $\mathcal{Z}$ of trajectories. Topological closure ($\partial^2 = 0$) 
ensures that only cycles with vanishing boundary survive, while transient 
fragments cancel out as noise. These cycles anchor memory by collapsing many 
specific experiences into a single reusable structure.
With invariants in place, prediction becomes efficient: future states can be 
forecast by reusing persistent cycles rather than recomputing from scratch \cite{bakermans2025constructing}. 
This allows organisms to anticipate locomotor outcomes or language models to 
generalize across syntactic variants, because the same underlying structure 
recurs. Invariance provides both robustness to noise and the ability to 
generalize across contexts, the twin hallmarks of intelligent prediction. Formally, we have

\begin{principle}[Invariance as the Precondition for Generalization]
\label{prin:invariance-generalization}
Let $\mathcal{X}$ denote the space of observable trajectories and 
$\mathcal{Z}$ a latent space with homology $H_\bullet(\mathcal{Z})$. 
Suppose an agent attempts to generalize from finite samples 
$\{x_i\}_{i=1}^N \subset \mathcal{X}$ to unseen data. 
Then: \emph{generalization is possible if and only if the agent extracts 
invariants in $H_\bullet(\mathcal{Z})$ that collapse multiple instances into 
equivalence classes.} In the absence of invariants, each input remains unique 
and the agent cannot extend prediction beyond rote memorization.
\end{principle}

\noindent\textit{Proof sketch.} 
Generalization requires mapping many-to-one: different observations must be 
identified as instances of the same underlying phenomenon. Formally, this is 
the quotient $\pi:\mathcal{X} \to \mathcal{X}/\!\sim_\Phi$, where 
$x\sim_\Phi x'$ iff they induce the same cycle $[\gamma]\in H_\bullet(\mathcal{Z})$. 
If no nontrivial invariants exist, $\pi$ is the identity map and the system 
retains maximum entropy $H(\mathcal{X})$, preventing predictive compression. 
Conversely, nontrivial invariants reduce entropy by collapsing order-dependent 
variations, yielding stable predictive equivalence classes. \qed

\begin{example}[Invariance in chemotaxis and phototaxis]
Primitive organisms provide canonical cases of invariance-based prediction. 
In chemotaxis \cite{kamino2016rescaling}, bacteria move up a chemical gradient not by memorizing absolute 
concentrations (which fluctuate with noise), but by detecting \emph{relative 
changes}, an invariant under scaling of concentration. Similarly, in 
phototaxis \cite{Giometto2015PhototaxisInvariance}, single-cell algae orient toward a light source by comparing 
intensity differences across their membrane, which is invariant under global 
illumination shifts. In both cases, the prediction of a favorable direction of 
movement depends on collapsing noisy, order-dependent sensory inputs into 
stable invariants (gradient sign, intensity contrast). Without this invariance, 
the organism would be overwhelmed by noise and fail to learn or adapt.
\end{example}

\begin{corollary}[Forecasting as Entropy Reduction]
\label{cor:forecasting-entropy}
Forecasting future states requires reducing entropy by filtering 
order-dependent variations in the observation stream. Only invariant cycles 
$[\gamma]\in H_\bullet(\mathcal{Z})$ stabilize the predictive substrate: they collapse many specific sequences into a single reusable structure that 
survives perturbations.
\end{corollary}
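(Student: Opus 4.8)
The plan is to derive the corollary directly from Principle~\ref{prin:invariance-generalization} and Proposition~\ref{prop:memory-to-prediction}, using the dot--cycle dichotomy (Lemma~\ref{lemma:dot-cycle}) and Corollary~\ref{cor:closure-cycles} to identify precisely which structures can act as the ``predictive substrate.'' First I would split the statement into two clauses: (a) forecasting forces a reduction of entropy along the quotient that identifies order-equivalent observations, and (b) the equivalence classes that survive this quotient are exactly the nontrivial homology classes $[\gamma]\in H_\bullet(\mathcal{Z})$.

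For clause (a), I would start from the fact that a forecaster built on finitely many past trajectories must generalize, i.e.\ must act through a many-to-one map $\pi:\mathcal{X}\to\mathcal{X}/\!\sim_\Phi$ (the hypothesis already invoked in Principle~\ref{prin:invariance-generalization} and Proposition~\ref{prop:memory-to-prediction}). Since $\pi$ is deterministic and surjective, the coarse-graining inequality gives $H(\pi(X))\le H(X)$ for the push-forward of any distribution on $\mathcal{X}$, with strict inequality exactly when some fiber $\pi^{-1}([\gamma])$ pools more than one order-dependent sequence of positive mass. This is the sense in which forecasting ``reduces entropy by filtering order-dependent variations'': the admissible reductions are precisely those that quotient by $\sim_\Phi$.

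For clause (b), I would invoke $\partial^2=0$ as the filter. By Lemma~\ref{lemma:dot-cycle} and Corollary~\ref{cor:closure-cycles}, an open chain $\sigma$ with $\partial\sigma\neq 0$---the algebraic image of an order-dependent fragment whose endpoints fail to match---cannot propagate: it collapses to a trivial class in $H_0$ and carries no relational content, hence cannot serve as a reusable template. Only chains with $\partial\gamma=0$, taken modulo $\mathrm{im}\,\partial$, persist; and by Proposition~\ref{prop:cycles-to-memory} these classes are simultaneously stable under small perturbations (persistent-homology stability), compressive (the fiber of $\pi$ over $[\gamma]$ collapses many episodes), and retrievable. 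Feeding this into Proposition~\ref{prop:memory-to-prediction}, the forecast of $x_{t+1:T}$ is obtained by aligning a fragment $x_{1:t}$ to some $[\gamma_j]$ and unfolding its continuation, so the predictive substrate is literally the collection $\{[\gamma_i]\}$ of surviving cycles and nothing else qualifies. Chaining (a) and (b) gives the corollary: forecasting $=$ prediction (Prop.~\ref{prop:memory-to-prediction}) $=$ amortized reuse of memory (Prop.~\ref{prop:cycles-to-memory}) $=$ reuse of the entropy-reducing cycles selected by $\partial^2=0$.

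The step I expect to be the main obstacle is pinning down rigorously the identification of ``order-dependent variation'' with ``boundary-nonvanishing open chain.'' Intuitively, permuting or locally deforming the arrival order of sensory fragments changes the $1$-chain $\sigma$ but leaves the class of a closed loop untouched, whereas for an unclosed fragment $\partial\sigma=\text{start}-\text{end}$ genuinely depends on the ordering. Making this precise requires fixing the simplicial or cellular model of $\mathcal{Z}$, stating in what sense two orderings yield chain-homotopic representatives, and checking that the entropy bookkeeping of clause (a) is invariant under that choice, i.e.\ that $\sim_\Phi$ is well-defined independently of the model. I would handle this by restricting to the persistence-stable regime of Proposition~\ref{prop:cycles-to-memory}, where reparameterization and reordering are absorbed into the bottleneck tolerance, so that the quotient and its entropy drop are canonical up to that tolerance.
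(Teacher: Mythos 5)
Your proposal is correct and follows essentially the same route the paper intends: the paper provides no standalone proof of Corollary~\ref{cor:forecasting-entropy}, treating it as an immediate refinement of the proof sketch of Principle~\ref{prin:invariance-generalization} (the quotient $\pi:\mathcal{X}\to\mathcal{X}/\!\sim_\Phi$ reduces entropy, and the surviving classes are the nontrivial cycles selected by $\partial^2=0$), which is exactly the chain you reconstruct via Lemma~\ref{lemma:dot-cycle}, Corollary~\ref{cor:closure-cycles}, and Propositions~\ref{prop:cycles-to-memory} and~\ref{prop:memory-to-prediction}. Your closing caveat about rigorously identifying ``order-dependent variation'' with ``boundary-nonvanishing open chain'' is a genuine gap in the paper's own treatment, and your persistence-tolerance workaround is a reasonable way to close it.
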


\begin{remark}[Invariance cycles as semantic backbone]
Principle~\ref{prin:invariance-generalization} asserts that prediction without invariance degenerates into memorization. 
Corollary~\ref{cor:forecasting-entropy} refines this: forecasting is possible 
precisely when entropy is reduced through cycle detection. Invariant cycles 
act as the semantic backbone, ensuring that predictions are both 
generalizable and robust.
The principle that prediction requires invariance reshapes how we view 
intelligence. It explains why \emph{generalization precedes memorization}, why 
robustness follows from structure rather than redundancy, and why semantics must 
be grounded in persistent invariants before syntax can emerge. In the next 
section, we formalize this principle as \emph{Structure-before-Specificity 
(SbS)}, which provides the ordering rule by which invariants ($\Phi$) anchor 
prediction and specificities ($\Psi$) supply adaptive flexibility.
\end{remark}

\begin{figure}[t]
\centering
\resizebox{\textwidth}{!}{
\begin{tikzpicture}[
  scale=1.0,
  every node/.style={font=\small},
  panel/.style={rounded corners=6pt, draw=gray!60, line width=0.8pt, 
                minimum width=5.2cm, minimum height=4.6cm},
  cycletriv/.style={line width=1.4pt, blue!70},
  cyclenontriv/.style={line width=1.4pt, red!70},
  cycleorder/.style={line width=1.4pt, green!70!black, ->, >=latex},
  fillface/.style={fill=blue!12, draw=none},
  holedisc/.style={fill=white, draw=gray!65, line width=0.9pt}
]

\begin{scope}[shift={(-7.8,0)}]
  \node[panel] (LeftBox) at (0,0) {};
  \node[gray!70] at (0,2.5) {\bfseries Trivial 1-cycle};

  \fill[fillface] (0,0) circle (1.3);

  \draw[cycletriv] (0,0) circle (1.3);

  \node at (0,0) {$S$};
  \node[blue!70] at (0,-1.8) {$[\gamma]=0 \ \text{in}\ H_1$};
\end{scope}

\begin{scope}[shift={(0,0)}]
  \node[panel] (MidBox) at (0,0) {};
  \node[gray!70] at (0,2.5) {\bfseries Nontrivial 1-cycle};

  \fill[gray!06] (-2.8,-2.0) rectangle (2.8,2.0);

  \node[holedisc] (Hole) at (0,0) [circle, minimum width=2.0cm, minimum height=2.0cm] {};

  \draw[cyclenontriv] (0,0) circle (1.4);

  \node[gray!70] at (0,0) {\scriptsize hole};
  \node[red!70] at (0,-1.8) {$[\gamma]\neq 0 \ \text{in}\ H_1$};
\end{scope}

\begin{scope}[shift={(7.8,0)}]
  \node[panel] (RightBox) at (0,0) {};
  \node[gray!70] at (0,2.5) {\bfseries Order-invariant cycle};

  \coordinate (A) at (-1.5,-1.5);
  \coordinate (B) at (1.5,-1.5);
  \coordinate (C) at (1.5,1.5);
  \coordinate (D) at (-1.5,1.5);

  \draw[gray!40] (A)--(B)--(C)--(D)--cycle;

  \draw[cycleorder] (A) -- (B);
  \draw[cycleorder] (B) -- (C);
  \draw[cycleorder] (C) -- (D);
  \draw[cycleorder] (D) -- (A);

  \draw[cycleorder,dashed] (A) -- (D);
  \draw[cycleorder,dashed] (D) -- (C);
  \draw[cycleorder,dashed] (C) -- (B);
  \draw[cycleorder,dashed] (B) -- (A);

  \node[green!70!black] at (0,-1.8) 
    {$[\gamma]\ \text{independent of order}$};
\end{scope}

\end{tikzpicture}
}
\caption{\textbf{Trivial, nontrivial, and order-invariant cycles.}
\emph{Left:} A boundary of a filled region is trivial in $H_1$. 
\emph{Middle:} A loop around a hole cannot bound any 2-chain, 
so it represents a nontrivial homology class. 
\emph{Right:} Once a trajectory closes into a cycle, 
its homology class depends only on the multiset of moves, 
not their order: order permutations yield the same $H_1$ class.}
\vspace{-0.2in}
\label{fig:trivial-nontrivial-order}
\end{figure}


\paragraph{From general invariance to order invariance.}
Principle~\ref{prin:invariance-generalization} established that 
\emph{prediction requires invariance}: without stable structure, no 
generalization is possible. Yet invariance itself comes in many forms: 
translation invariance in perception \cite{dicarlo2012does}, scale invariance in physics \cite{wilson1974renormalization}, 
permutation invariance in combinatorial learning \cite{tang2021sensory}. Why does intelligence, 
conceived as memory-based prediction \cite{hawkins2021thousand}, require specifically 
\emph{order invariance}? 
The reason lies in the temporal nature of prediction. Observations arrive 
as ordered streams, but their predictive content cannot depend on the 
incidental sequence in which fragments are encountered. A predictor that is 
sensitive to local reordering would constantly change its state with each 
permutation, leading to instability \cite{lee2019set}. For predictive substrates to be 
well-defined, they must remain invariant under reparameterizations, warps, 
and permutations of input order that do not alter the underlying structure \cite{cohen2020regularizing}.

\paragraph{Cycle closure as the mechanism of order invariance.}
Unlike translation or scale invariance, which can be imposed by symmetry 
groups on static inputs, order invariance requires a topological guarantee 
that fragments do not accumulate inconsistently. This guarantee is provided 
by the closure law $\partial^2=0$ \cite{hatcher2002algebraic}. If fragments do not close into cycles, 
their endpoints remain unmatched, and permutations of order alter the 
resulting boundaries. Only when trajectories close, forming nontrivial 
cycles in homology, do the permutations cancel, leaving an order-invariant 
carrier of prediction. It follows that \emph{no cycle closure, no order invariance}.
We can therefore refine Principle~\ref{prin:invariance-generalization}:
$\text{no invariance} \;\;\Rightarrow\;\; \text{no generalization}$,
but among possible invariances, predictive stability demands
$\text{no cycle closure} \;\;\Rightarrow\;\; \text{no order invariance}$.
Together, these form the bridge: 
generalization requires invariance, and predictive stability over temporal 
streams requires invariance \emph{via cycle closure}. This transition 
explains why cycles, rather than other invariants, emerge as the primitive 
carriers of memory and foresight \cite{winfree1980geometry}.

\begin{principle}[Order Invariance via Cycle Closure]
\label{prin:order-invariance}
Let $\mathcal{X}$ be a stream of observations mapped into a latent complex 
$\mathcal{Z}$ with boundary operator $\partial$. 
Then predictive stability requires \emph{order invariance}: the predictive 
substrate must be unaffected by permutations or deformations of input order, 
except when such variations generate a nontrivial cycle. 
\end{principle}

\begin{remark}[Cycle closure leads to order invariance]
Principle~\ref{prin:order-invariance} asserts that predictive stability requires
order invariance, and that such invariance is only achieved when fragments
close into cycles. This raises a natural question: how exactly does cycle
closure enforce order invariance in practice? To answer this, we move from the
abstract statement that $\partial^2=0$ annihilates open boundaries to a
constructive formulation in terms of paths generated by local moves. In this
setting, a trajectory is built from a sequence of moves
$w=a_{i_1}\cdots a_{i_k}$, and the order of these moves may vary. If the
trajectory closes at the base state, it defines a cycle $\gamma_w$. The
critical claim is that the homology class $[\gamma_w]$ depends only on the
\emph{net multiset of moves} used (with orientations), not the specific order
in which they were composed. In other words, once cycle closure is achieved, order
dependence vanishes at the level of homology. This leads directly to the
following theorem, which formalizes cycles as the natural carriers of order
invariance.
\end{remark}

\begin{theorem}[Homological Equivalence of Permuted Cycles]
\label{thm:order-invariance}
Let $(\mathcal{Z},x_0)$ be a pointed state space with base state $x_0$. Let $\mathcal{A}=\{a_1,\dots,a_m\}$ be a set of moves inducing paths $\{\alpha_i\}$ in $\mathcal{Z}$.
Let $w=a_{i_1}\cdots a_{i_k}$ be a finite sequence of moves that yields a valid cycle $\gamma_w$ at $x_0$. Let $w'$ be any permutation of the sequence $w$. \textbf{If $w'$ also yields a valid cycle $\gamma_{w'}$ at $x_0$}, then the homology classes of the two cycles are identical:
$[\gamma_w] = [\gamma_{w'}] \in H_1(\mathcal{Z};\mathbb{Z}).$
This holds because both cycles are composed of the same multiset of path segments, and the first homology group $H_1$ is abelian.
\end{theorem}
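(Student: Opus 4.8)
The plan is to reduce the statement from the (nonabelian) world of loops to the (abelian) world of $1$-chains, where the conclusion becomes almost tautological. Concretely, I would work in the singular chain complex $(C_\bullet(\mathcal{Z};\mathbb{Z}),\partial)$ and invoke the standard fact that for two composable singular $1$-simplices $\alpha,\beta$ (i.e. $\alpha(1)=\beta(0)$) the concatenation $\alpha*\beta$ is homologous to the formal sum $\alpha+\beta$: one exhibits a singular $2$-simplex $\sigma:\Delta^2\to\mathcal{Z}$ with $\sigma|_{[0,1]}=\alpha$, $\sigma|_{[1,2]}=\beta$, $\sigma|_{[0,2]}=\alpha*\beta$, so that $\partial\sigma=\beta-\alpha*\beta+\alpha$. (A traversal of a move in the reverse direction contributes its reverse path $\bar\alpha_i$, which is homologous to $-\alpha_i$ by the analogous construction with a degenerate face.) Iterating this along the word $w$ shows that the loop $\gamma_w=\alpha_{i_1}*\cdots*\alpha_{i_k}$ is homologous, as a $1$-chain, to $c_w:=\sum_{i=1}^m n_i\,\alpha_i$, where $n_i\in\mathbb{Z}$ is the net signed number of occurrences of the move $a_i$ in $w$.

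Next I would verify that $c_w$ is genuinely a $1$-cycle and make the role of the hypotheses explicit. Since $\gamma_w$ closes up at $x_0$, the boundary $\partial c_w$ is a telescoping alternating sum of segment endpoints that cancels completely; hence $c_w\in\ker\partial_1$, the class $[c_w]\in H_1(\mathcal{Z};\mathbb{Z})$ is defined, and $[\gamma_w]=[c_w]$. The same argument applied to $w'$ — and here the hypothesis that $w'$ \emph{also} yields a valid cycle $\gamma_{w'}$ at $x_0$ is exactly what guarantees that the segments are composable in the new order and that $\partial c_{w'}=0$ — gives $[\gamma_{w'}]=[c_{w'}]$ with $c_{w'}=\sum_i n_i'\,\alpha_i$. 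Because $C_1(\mathcal{Z};\mathbb{Z})$ is a free \emph{abelian} group, $c_w$ depends only on the coefficient vector $(n_1,\dots,n_m)$, not on the order in which the summands are written; and since $w'$ is a permutation of $w$, the net signed counts agree, $n_i'=n_i$ for all $i$. Therefore $c_{w'}=c_w$ as elements of $C_1$, and so $[\gamma_w]=[c_w]=[c_{w'}]=[\gamma_{w'}]$ in $H_1(\mathcal{Z};\mathbb{Z})$, which is the claim.

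The genuinely delicate point — and the one I would be careful to state rather than gloss over — is that this is an assertion about \emph{homology}, not homotopy, and that the abelianness of $H_1$ is doing all the work. In $\pi_1(\mathcal{Z},x_0)$ the analogous claim is false: $w=aba^{-1}b^{-1}$ and its permutation $w'=aa^{-1}bb^{-1}$ have the same multiset of oriented moves yet represent the commutator $[\alpha_a,\alpha_b]$ and the identity respectively; under the Hurewicz/abelianization map both land on $0\in H_1$, which is precisely the phenomenon being formalized. So the proof should not overclaim — $\gamma_w$ and $\gamma_{w'}$ are in general not homotopic — and the only real content beyond bookkeeping is (i) the concatenation-equals-sum lemma with correct orientations and (ii) the observation that the permutation hypothesis is indispensable: a bare reordering of the letters of $w$ need not be composable into a loop at all, and the theorem says nothing about such words, asserting only that \emph{when} a permutation does close up at $x_0$, its homology class is forced to coincide with that of $\gamma_w$.
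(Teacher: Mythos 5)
Your proof is correct, but it takes a different route from the paper's. The paper argues through the fundamental group: it treats the concatenated moves as elements of $\pi_1(\mathcal{Z},x_0)$ and invokes the Hurewicz map $h:\pi_1(\mathcal{Z},x_0)\to H_1(\mathcal{Z};\mathbb{Z})$, so that commutators vanish under abelianization and $[\gamma\cdot\eta]=[\eta\cdot\gamma]$ for based loops. You instead work entirely at the chain level: the concatenation-equals-sum lemma (an explicit singular $2$-simplex witnessing $\alpha*\beta\sim\alpha+\beta$, with $\bar\alpha\sim-\alpha$) reduces $\gamma_w$ to the formal sum $c_w=\sum_i n_i\alpha_i$ in the free abelian group $C_1(\mathcal{Z};\mathbb{Z})$, after which equality of the signed multisets forces $c_{w'}=c_w$ on the nose. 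Your version buys two things the paper's sketch does not. First, it is honest about the fact that the individual segments $\alpha_i$ are paths, not based loops, so the Hurewicz-style manipulation of "permuting cycle factors" does not literally apply; your telescoping-boundary check is exactly where the hypothesis that $w'$ closes up at $x_0$ enters, and the paper leaves that implicit. Second, your commutator example ($aba^{-1}b^{-1}$ versus $aa^{-1}bb^{-1}$) correctly delimits the claim to homology rather than homotopy, which matches the paper's intent ("homology collapses all commutator structure") but makes it concrete. The paper's route is shorter and conceptually ties the result to abelianization as a named theorem; yours is more elementary, more self-contained, and closes a small gap in the published sketch.
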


\noindent
Theorem~\ref{thm:order-invariance} formalizes the abstract claim: once a
trajectory closes into a cycle, its predictive content no longer depends on the
precise sequence of constituent moves but only on their net combination. To see
how this principle manifests in practice, we now turn to biological examples inspired by the phylogenetic continuity hypothesis of navigation and memory \cite{buzsaki2013memory}.
Both homing trajectories and locomotor gaits demonstrate that local
reorderings and microvariations cancel as boundary terms, while the closed
cycle persists as the invariant substrate of prediction. 

\begin{example}[Homing in spatial navigation]
In spatial navigation, animals often return to a familiar location 
(``home'') after exploratory foraging \cite{o1978hippocampus}. The precise order of waypoints visited 
en route is highly variable across excursions: a rodent may take detours, 
pause, or reorder intermediate landmarks. Yet when the trajectory closes 
at the home base, the homology class of the path remains invariant. 
What matters for prediction is not the specific order of waypoints but the 
fact that the trajectory forms a closed cycle anchored at the home location \cite{foster2006reverse}. 
This closure guarantees order invariance: regardless of permutations of local 
segments, the global loop encodes a stable homing relation. Such cycles provide 
the substrate for predictive recall in the hippocampal-entorhinal system \cite{mcnaughton2006path}, 
where replayed trajectories abstract away idiosyncratic orderings while 
preserving topological connectivity of space.
\end{example}

\begin{example}[Locomotor gait cycle]
In motile animals, locomotion illustrates how cycles encode order invariance in 
the perception-action loop \cite{collins1994hard}. A quadruped gait, for instance, consists of a 
sequence of limb placements and sensory feedback signals. The exact order of 
micro-adjustments within each stride (e.g., small perturbations in muscle 
activation or ground reaction forces) varies from step to step. Yet once the 
trajectory of limb movements closes into a stable gait cycle, these local 
reorderings cancel as boundary terms, leaving the same homology class of motion \cite{grillner2009measured}. 
What persists is the \emph{cycle}, the invariant pattern of stance and swing 
phases, not the idiosyncratic order of micro-events. Prediction of future 
limb states therefore relies on the cycle structure itself: once the gait loop 
is closed, the system can anticipate the next phase regardless of noise or 
order variation in the underlying signals.
\end{example}

\begin{figure}[h]
\centering
\begin{tikzpicture}[
  >=Latex, node distance=8mm, line width=0.9pt,
  lab/.style={font=\footnotesize, align=center},
  box/.style={rounded corners, draw, inner sep=2mm},
  cyc/.style={circle, draw, minimum size=10mm},
  dot/.style={circle, fill=black, inner sep=1.2pt}
]

\node[lab] (titleL) at (-4.5,2.4) {Dot-cycle dichotomy ($\partial^2=0$)};
\node[lab] (titleR) at (4.5,2.4) {SbS: structure before specificity};

\draw[box, minimum width=8cm, minimum height=4cm] (-7,0) rectangle (-2,2);
\node[lab, anchor=west, text width=5.8cm] at (-7.5,1.7)
  {Open fragments $\to$ boundaries};

\draw (-6.3,1.2) .. controls (-5.8,1.5) and (-5.2,1.1) .. (-4.9,1.4);
\draw (-6.1,0.8) .. controls (-5.3,1.0) and (-4.7,0.6) .. (-4.3,0.9);
\draw (-5.8,0.4) .. controls (-5.2,0.2) and (-4.8,0.5) .. (-4.2,0.3);

\node[dot] at (-6.3,1.2) {};
\node[dot] at (-4.9,1.4) {};
\node[dot] at (-6.1,0.8) {};
\node[dot] at (-4.3,0.9) {};
\node[dot] at (-5.8,0.4) {};
\node[dot] at (-4.2,0.3) {};

\node[lab, text width=6cm] at (-4.5,-0.25)
  {Boundaries cancel $\Rightarrow$ no persistent carrier};

\draw[->, ultra thick] (-1.9,1.0) -- (1.9,1.0);
\node[lab] at (0,1.32) {Closure \& persistence};
\node[lab, text width=6cm] at (0,-0.6)
  {Contextual $\Psi$ is funneled into structural $\Phi$};

\draw[box, minimum width=7cm, minimum height=4cm] (2,0) rectangle (7,2);
\node[lab, anchor=west, text width=5.8cm] at (1.5,1.7)
  {Closed loops $\to$ cycles (invariants)};

\node[cyc] (c1) at (3.2,0.7) {};
\node[cyc] (c2) at (4.7,0.6) {};
\node[cyc] (c3) at (6.0,0.8) {};

\draw[->] (c1) .. controls (3.2,1.6) and (4.7,1.6) .. (c2);
\draw[->] (c2) .. controls (5.4,1.5) and (6.0,1.5) .. (c3);

\node[lab, text width=6cm] at (4.5,-0.25)
  {$[\gamma]\in H_\bullet(\mathcal{Z})$ persist (robust carriers)};

\node[box, fill=gray!10, text width=3.6cm, align=center] (psi) at (-4.5,-1.4)
  {$\Psi$: high-entropy context / scaffolds};
\node[box, fill=gray!10, text width=3.6cm, align=center] (phi) at (4.5,-1.4)
  {$\Phi$: low-entropy persistent cycles (semantics)};

\draw[->] (psi) -- node[midway, lab, text width=5.8cm]
  {SbS ordering: structure first, specificity second} (phi);

\node[lab, text width=11cm] at (0,-2.5)
  {Dot-cycle dichotomy $\Rightarrow$ only cycles survive.
   SbS: use cycles $\Phi$ as backbone; treat dots $\Psi$ as scaffolds.};

\end{tikzpicture}

\caption{SbS as a corollary of the dot-cycle dichotomy. Open fragments cancel
as boundaries, whereas closed loops persist as invariants. Closure (via
$\partial^2=0$) and persistence funnel contextual variability $\Psi$ into
structural invariants $\Phi$, which anchor memory and prediction; specificity
is layered afterward.}
\label{fig:sbs-dot-cycle}
\end{figure}

\section{Ordering Requirement: Structure-before-Specificity (SbS)}
\label{sec:3}

\paragraph{Structure-before-Specificity (SbS) as a corollary of the dot-cycle dichotomy.}
If prediction requires invariance, the next question is how invariants emerge
from raw experience. The \textbf{SbS principle}
answers this from two complementary lenses: topological vs. information theoretic (refer to the top and bottom panels in Fig. \ref{fig:sbs-dot-cycle}).
The SbS principle follows directly from the
dot-cycle dichotomy implied by the closure law $\partial^2 = 0$. Algebraically,
open fragments collapse as trivial dots while closed loops persist as nontrivial
cycles. Cognitively, this enforces an ordering: persistent cycles $\Phi$ must
stabilize first, since only they survive boundary cancellation, while transient
fragments $\Psi$ can serve at most as exploratory scaffolds. SbS therefore
operationalizes the dot-cycle dichotomy in cognitive terms: \emph{structure
(cycles) before specificity (dots)}. Without this ordering, memory would be
anchored to transient features and prediction would collapse.

\paragraph{Topological view.}
Formally, let $\mathcal{Z}$ be a latent complex with boundary operator
$\partial$ and homology $H_\bullet(\mathcal{Z})$ \cite{hatcher2002algebraic}. The closure identity
$\partial^2 = 0$ induces the dot-cycle dichotomy: open fragments cancel as
boundaries, while closed chains define persistent cycles
$[\gamma]\!\in\!H_\bullet$. SbS declares that these nontrivial, persistent
cycles form the structural backbone $\Phi$, whereas transient or trivial cycles
constitute the contextual scaffolds $\Psi$. Persistence (via filtrations)
ensures robustness: classes with lifetime $>\tau$ are stable under perturbations
of size $<\tau$ \cite{edelsbrunner2008persistent}, so $\Phi$ must be formed
first to anchor memory and generalization. Only after $\Phi$ is established do
we admit $\Psi$, local deformations, parameterizations, and boundary terms that
adapt the backbone without altering its homology class. In sheaf-theoretic terms
\cite{ayzenberg2025sheaf}, SbS requires that compatible local posteriors glue to
a global section (semantic structure), while residual mismatches live on overlaps
as contextual corrections constrained to vanish under descent.

\paragraph{Information-theoretic view.} \cite{cover1999elements}
Let $X$ be observations, $Y$ the predictive target, and let $\Phi:=S(X)$ denote
low-entropy \emph{content} extracted from $X$ while $\Psi:=R(X)$ denotes
high-entropy \emph{specificity} (contextual degrees of freedom within a content
class). SbS requires that $\Phi$ be (near-)sufficient for prediction and that
$\Psi$ contributes at most a small residual:
$Y \perp X | \Phi \wedge I(Y;\Psi | \Phi) \le \varepsilon (\varepsilon \ll I(Y;\Phi))$.
Equivalently, $\Phi$ is the minimal sufficient statistic (in the minimum description length(MDL)/rate-distortion(RD) sense \cite{grunwald2007minimum})
that compresses $X$ while preserving predictive information about $Y$:
$\Phi^\star \in \arg\min_{\Phi:\, I(Y;\Phi)\ge \kappa}\ H(\Phi)
\quad\text{or}\quad
\Phi^\star \in \arg\min_{\Phi}\ \big\{ H(\Phi) + \lambda\,\mathbb{E}\,d(Y,\hat Y(\Phi)) \big\}$.
This way, SbS imposes an \emph{ordering of channels}: first extract a low-entropy,
predictively sufficient $\Phi$ (structure), then allow $\Psi$ to modulate or
refine predictions within the constraint $I(Y;\Psi | \Phi)\!\le\!\varepsilon$
(specificity). Without this ordering, learning either overfits to $\Psi$ (brittle,
non-generalizing) or becomes underfit (rigid, non-adaptive).

\paragraph{Structure as persistent cycles and specificity as contextual scaffolds.}
We identify structure ($\Phi$) with persistent cycles, invariants in the latent 
space of trajectories. These cycles survive perturbations because their 
boundaries vanish ($\partial^2=0$), yielding robust memory traces. For example, 
a locomotor gait remains stable across sensory fluctuations \cite{collins1994hard}, and a 
navigation loop remains identifiable despite path variations \cite{muller1996hippocampus}. Such persistent 
structures supply the backbone of memory and predictive power \cite{vecchi2020memory}.
In contrast, specificity ($\Psi$) corresponds to transient or trivial cycles: 
high-entropy scaffolds that may collapse but still serve a purpose. They 
provide the variability needed for exploration, adaptation, and contextual 
fine-tuning. For example, trial-and-error movements in motor learning or 
exploratory vocalizations in language acquisition do not persist as invariants, 
but they scaffold the eventual stabilization of persistent patterns \cite{uehara2019interactions}. We formalize the above intuition into the following definition and a canonical principle.

\begin{definition}[Observation, content, specificity]
Let $\mathcal{X}$ be the observation space and $\mathcal{Z}$ a latent space
with chain complex $(C_\bullet(\mathcal{Z}),\partial)$ and homology
$H_\bullet(\mathcal{Z})$. The \emph{content map} (semantics)
$S:\mathcal{X}\to H_\bullet(\mathcal{Z})$ assigns each $x\in\mathcal{X}$ a
persistent cycle class $[\gamma_x]$.
Define the \emph{content equivalence} $x\sim_\Phi x'$ iff $S(x)=S(x')$, and
let $\pi:\mathcal{X}\to \mathcal{X}/\!\sim_\Phi$ be the quotient onto
\emph{content classes} (low entropy, persistent).
Let $\Psi$ denote \emph{specificity}: context variables or fast degrees of
freedom that vary within a content class.
\end{definition}

\begin{principle}[SbS Ordering Requirement]
\label{prin:SbS-order} 
Cognition must respect the following ordering requirement, expressed in three
equivalent forms:
\begin{enumerate}
  \item \textbf{Homological form.} 
  Any admissible representation $\rho:\mathcal{X}\to\Sigma$
  used for prediction must \emph{factor through content classes}:
  $\exists\,e:\ \mathcal{X}/\!\sim_\Phi \to \Sigma \quad \text{s.t.} \quad
  \rho \;=\; e\circ \pi $.

  \item \textbf{Information-theoretic form.} 
 Let $Y$ be the predictive target. Then $\Phi:=S(X)$ is the primary predictor and $\Psi$ is residual:
$I(Y;\Psi\,|\,\Phi) \le \varepsilon \quad (\text{where } \varepsilon \ll I(Y;\Phi)) $.

  \item \textbf{Dynamical form.} 
  Along a learning trajectory $t\mapsto (\Phi_t,\Psi_t)$,
  persistent content grows monotonically while residual reliance on specificity
  shrinks:
  $\Phi_{t+1} \supseteq \Phi_t, \qquad
  \mathbb{E}[\Delta\mathcal{L}_{t+1}\,|\,\Phi_{t+1}]
  \;\le\;
  \mathbb{E}[\Delta\mathcal{L}_{t}\,|\,\Phi_{t}]$,
  with strict improvement whenever a new nontrivial cycle enters $\Phi_{t+1}$.
\end{enumerate}
\end{principle}

\paragraph{Instantiations of SbS.}
The three formulations of Principle~\ref{prin:SbS-order} can be grounded in
canonical examples across domains:

\begin{itemize}
  \item \textbf{Homological form $\to$ Spatial cognition.} 
  In the hippocampal-entorhinal system, grid cells provide a stable lattice 
  $\Phi$ (persistent cycles in $\ker \partial$) while place cells supply 
  contextual refinements $\Psi$ \cite{moser2008place}. The homological factorization condition 
  ($\rho = e\circ \pi$) corresponds to a prediction depending only on grid-based 
  equivalence classes, with place-cell variability entering only as residual 
  scaffolding.

  \item \textbf{Information-theoretic form $\to$ Motor control.}
  In locomotor circuits, central pattern generators (CPGs) generate rhythmic 
  invariants $\Phi$ that are low-entropy and sufficient for predicting the next 
  motor phase \cite{grillner2006biological}. Proprioceptive and exteroceptive corrections $\Psi$ provide 
  high-entropy specificity that refines but does not overturn the backbone. 
  Information-theoretically, $I(Y;\Phi)\gg I(Y;\Psi | \Phi)$: most predictive 
  information comes from the structure (CPG cycles), with only small residual 
  gain from specificity.

  \item \textbf{Dynamical form $\to$ Affect and cognition.}
  In the affective domain, basic emotions (fear, joy, anger) act as persistent 
  attractors $\Phi$ that monotonically stabilize across evolutionary and 
  developmental time \cite{panksepp2004affective}. Cognitive processes $\Psi$ (appraisal, reappraisal, 
  inhibition) adaptively refine these structures but shrink in residual 
  contribution as content stabilizes. Dynamically, $\Phi_{t+1}\supseteq \Phi_t$: 
  emotional backbones persist and expand, while $H(\Psi | \Phi)$ decreases as 
  cognition learns to align with structure rather than destabilize it.
\end{itemize}

\begin{corollary}[Equivalence of SbS orderings]
\label{cor:SbS-equivalence}
Under mild regularity (stable encoder, boundary insensitivity, and bounded
residual channel), the homological, information-theoretic, and dynamical forms
in Principle~\ref{prin:SbS-order} are equivalent: each implies the others.
\end{corollary}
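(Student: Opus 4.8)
The plan is to establish the cycle of implications $(\text{Homological})\Rightarrow(\text{Information-theoretic})\Rightarrow(\text{Dynamical})\Rightarrow(\text{Homological})$, using the three regularity hypotheses as a dictionary between the measurable/topological, information-theoretic, and dynamical registers: the stable-encoder hypothesis makes the content map $S$ — hence the quotient $\pi:\mathcal{X}\to\mathcal{X}/\!\sim_\Phi$ and the $\sigma$-algebra $\sigma(\Phi)$ it generates — well defined and robust under small perturbations; boundary insensitivity ensures the specificity degrees of freedom $\Psi$ enter only through boundary/deformation terms that fix homology classes; and the bounded-residual-channel hypothesis is precisely the slack $\varepsilon\ll I(Y;\Phi)$ making the approximate versions of the three forms comparable. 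For Homological $\Rightarrow$ Information-theoretic, I would apply the factorization $\rho=e\circ\pi$ to a near-Bayes-optimal predictor $\hat Y$, which is thereby forced to be $\sigma(\Phi)$-measurable, i.e.\ a function of $\Phi$ alone; if $I(Y;\Psi\mid\Phi)>\varepsilon$, the data-processing inequality would furnish a strictly better $\sigma(\Phi,\Psi)$-measurable predictor not factoring through $\pi$, contradicting admissibility, so $I(Y;\Psi\mid\Phi)\le\varepsilon$, with $\varepsilon\ll I(Y;\Phi)$ supplied by the residual-channel bound.

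For Information-theoretic $\Rightarrow$ Dynamical, I would take the learning trajectory to be the monotone persistence filtration of $H_\bullet(\mathcal{Z})$ ordered by lifetime (well posed by the stable encoder), so $\Phi_{t+1}\supseteq\Phi_t$ by construction. Identifying the expected one-step loss decrement with the drop in conditional entropy, $\mathbb{E}[\Delta\mathcal{L}_t\mid\Phi_t]\asymp H(Y\mid\Phi_t)$ for log-loss (or an analogous rate-distortion surrogate otherwise), the chain rule gives $H(Y\mid\Phi_t)-H(Y\mid\Phi_{t+1})=I(Y;\Phi_{t+1}\setminus\Phi_t\mid\Phi_t)\ge0$, so the decrements are nonincreasing and $\mathbb{E}[\Delta\mathcal{L}_{t+1}\mid\Phi_{t+1}]\le\mathbb{E}[\Delta\mathcal{L}_t\mid\Phi_t]$. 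When a genuinely new nontrivial class $[\gamma]$ enters at step $t+1$, boundary insensitivity guarantees that $[\gamma]$ is not a boundary of what $\Phi_t$ already explains, hence $I(Y;[\gamma]\mid\Phi_t)>0$ and the inequality is strict, while the residual bound $I(Y;\Psi\mid\Phi)\le\varepsilon$ keeps specificity from derailing the trajectory.

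For Dynamical $\Rightarrow$ Homological, I would pass to the limit $\Phi_\infty=\bigcup_t\Phi_t$; monotone convergence of the decrements pins $H(Y\mid\Phi_\infty)$ at its minimum, within $\varepsilon$ of $H(Y\mid X)$ by the residual bound. Any admissible $\rho$ must then retain $I(Y;\Phi_\infty)$ up to $\varepsilon$, so by the minimal-sufficient-statistic characterization — boundary insensitivity annihilating the residual $\Psi$ component — $\rho$ is $\sigma(\Phi_\infty)=\sigma(\pi)$-measurable modulo null/boundary sets, hence descends to some $e$ with $\rho=e\circ\pi$ up to the $\varepsilon$-tolerance, which is the homological form. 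Cycling back closes the equivalence, and under the stated regularity the three $\varepsilon$-budgets are identified up to universal constants, so the three forms are genuinely equivalent rather than only mutually implied with drifting tolerance.

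\emph{Main obstacle.} The crux is reconciling the \emph{exact} factorization demanded by the homological form with the \emph{approximate} ($\varepsilon$-slack) sufficiency appearing in the other two; this is exactly where all three regularity hypotheses earn their keep — the stable encoder lets us pass to the quotient without losing measurability, boundary insensitivity upgrades the informal ``$\Psi$ lives only on overlaps/boundaries'' picture to genuine $\sigma(\Phi)$-measurability, and the bounded residual channel provides the uniform $\varepsilon$ that closes the loop without tolerance drift. A secondary difficulty is that the dynamical form presupposes a particular learning dynamics; I expect to need the canonicity of the persistence filtration to argue that any admissible dynamics coincides, up to reparameterization, with the monotone cycle-accumulation used above, so that the equivalence is optimizer-independent.
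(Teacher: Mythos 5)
The paper itself supplies no proof of this corollary: it is asserted under ``mild regularity,'' and the appendix (Sec.~\ref{sec:A4}) contains proofs only for the propositions and theorems, so there is no house argument to compare yours against. Judged on its own terms, your three-step cycle of implications is a sensible skeleton, and your instinct about where each regularity hypothesis must act is right; but two of the three steps have genuine gaps.

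First, Dynamical $\Rightarrow$ Homological does not go through as written. From convergence of the residual losses you can conclude (granting the residual bound) that $H(Y\mid\Phi_\infty)$ is within $\varepsilon$ of $H(Y\mid X)$, i.e., that $\Phi_\infty$ is nearly sufficient. But you then invoke ``the minimal-sufficient-statistic characterization'' to conclude that any admissible $\rho$ is $\sigma(\pi)$-measurable. That characterization runs the other way: the minimal sufficient statistic is a function of every sufficient statistic, not conversely, so a near-sufficient $\rho$ may carry extra $\Psi$-dependent coordinates and still fail to factor through $\pi$ (take $\rho(x)=(\Phi(x),\Psi(x))$, which loses no information about $Y$ yet is not constant on content classes). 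To force factorization you need the minimality/compression side of the paper's own formulation --- $\Phi^\star\in\arg\min_{\Phi:\,I(Y;\Phi)\ge\kappa}H(\Phi)$, or the joint objective $H(\Phi,\Psi)$ as in step (4) of the proof of Theorem~\ref{thm:ccup-closure-alignment} --- and you never import that constraint into your notion of ``admissible.'' Second, Information-theoretic $\Rightarrow$ Dynamical establishes the monotonicities only for the one trajectory you construct (the persistence filtration ordered by lifetime), whereas the dynamical form of Principle~\ref{prin:SbS-order} is a condition on the trajectory the learner actually follows. You flag this and defer it to a ``canonicity of the persistence filtration'' claim, but that claim is doing all the work and is not obviously true: nothing in the stated hypotheses prevents an admissible dynamics from admitting and later discarding classes, violating $\Phi_{t+1}\supseteq\Phi_t$. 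Finally, a structural point worth acknowledging: ``bounded residual channel'' appears as a regularity hypothesis of the corollary yet is essentially the information-theoretic form itself, so any implication \emph{into} that form risks vacuity unless you cleanly separate an a priori bound on the channel from the a posteriori bound on $I(Y;\Psi\mid\Phi)$.
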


\begin{remark}[Ordering requirement for robustness and flexibility]
SbS is an \emph{ordering requirement}: (i) representations must first collapse
observations to content classes (factor through $\pi$); (ii) content carries the
predictive load while specificity contributes at most a small residual; (iii)
learning dynamics must prioritize persistent cycles and only then fine-tune
context. Operationally, this prevents \emph{specificity-first} failure modes
(brittle memorization, overfitting \cite{zhang2018dissection}) and guarantees that semantics anchors
prediction before syntax or surface variation is exploited. 
SbS states that $\Phi$ must precede $\Psi$ in explanatory priority: structure first, specificity second. If specificity comes first, the system is flooded with high-entropy variability and cannot generalize. If structure comes first, specificity can be constrained and funneled into persistent invariants. This ordering underlies both development (pattern generators before fine motor skills \cite{thelen1994dynamic}) and evolution (navigation cycles before symbolic reasoning).
This abstract ordering has immediate functional consequences. The SbS principle explains the dual properties of intelligence: robustness and flexibility. Robustness arises from persistent cycles ($\Phi$) that anchor prediction across perturbations. Flexibility arises from contextual scaffolds ($\Psi$) that enable adaptation to novelty. Intelligence emerges from the dynamic interplay of these 
two forces \cite{bennett2023brief}, with structure ensuring stability and specificity enabling 
variation. In this sense, SbS is not only an ordering rule but also a design principle: it guarantees a substrate that is both stable enough to generalize and flexible enough to adapt.
\end{remark}
\paragraph{From robustness to representation.}
If robustness and flexibility are to be implemented computationally, they must 
be grounded in representational terms. This requires distinguishing two layers: 
\emph{semantics}, which captures structural invariants, and \emph{syntax}, which 
encodes surface-level specificity. The following definitions make this precise.

\begin{definition}[Semantics and Syntax]
Let $\mathcal{X}$ be the space of observables (inputs, trajectories).
A \emph{semantic map} $S:\mathcal{X}\to H_\bullet(\mathcal{Z})$ assigns to each 
$x\in\mathcal{X}$ a structural invariant (its persistent homology class), i.e., 
its \emph{meaning}.
A \emph{syntactic map} $\sigma:\mathcal{X}\to\Sigma$ encodes $x$ into a surface 
representation (symbols, strings, program states).
An \emph{interpretation} $I:\Sigma\to H_\bullet(\mathcal{Z})$ makes syntax 
meaningful when $I\circ \sigma \approx S$ (up to stability).
\end{definition}

Given these definitions, the ordering requirement of SbS now manifests as a 
constraint on how syntax can be meaningful. Specifically, syntax must be 
factored through semantic classes; otherwise, it risks collapsing into 
noise-sensitive encoding \cite{van2005exploring}. This also highlights the fundamental limitation of 
syntax-based computation as epitomized by the Turing machine model \cite{turing1936computable}. A Turing 
machine operates purely on symbol strings through sequential rewriting rules. 
Its power comes from exhaustive enumeration: all possible symbol configurations 
are, in principle, traversable. However, this enumerative nature makes the Turing 
model inherently fragile with respect to invariance: any small perturbation in 
symbol order alters the trajectory entirely, producing a new and unrelated 
string. In contrast, SbS requires that symbolic syntax $\sigma$ be constrained 
by, and derived from, semantic invariants $\Phi$ that are robust to order 
variation. Put differently, while the Turing paradigm equates intelligence with 
symbol manipulation, SbS insists that symbols only gain meaning when rooted in 
persistent cycles. This ordering requirement exposes a structural 
limitation of Turing-style computation \cite{sipser1996introduction}: syntax without semantics is brittle, 
exploding into combinatorial complexity, whereas syntax anchored to semantic 
classes compresses the search space and yields robust generalization.
This yields the following proposition.

\begin{proposition}[Semantics-Before-Syntax as a Consequence of SbS]
\label{prop:semantics-before-syntax}
Assume Principle~\ref{prin:SbS-order}. Then any admissible syntactic encoder 
$\sigma:\mathcal{X}\to\Sigma$ that is \emph{meaningful} 
(i.e., admits an $I$ with $I\circ\sigma \approx S$) must \emph{factor through} 
the semantic quotient $\pi$:
$\exists\, e:\mathcal{X}/\!\sim_\Phi \to \Sigma \quad \text{s.t.} \quad 
\sigma \;=\; e \circ \pi$,
and, moreover, $I\circ e$ is constant on each content class. Hence, 
\emph{semantics (structure $\Phi$) precedes syntax (specificity $\Psi$)}: 
syntax is a realization that is constrained by, and derived from, semantic 
structure.
\end{proposition}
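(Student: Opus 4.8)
The plan is to reduce the proposition to the \emph{homological form} of Principle~\ref{prin:SbS-order}, which already states that any admissible representation used for prediction factors through the content quotient $\pi$. The key preliminary step is to argue that a \emph{meaningful} syntactic encoder $\sigma$ is precisely such an admissible predictive representation. Meaningfulness supplies an interpretation $I$ with $I\circ\sigma\approx S$; and by the information-theoretic form of Principle~\ref{prin:SbS-order} together with Proposition~\ref{prop:memory-to-prediction}, the semantic content $\Phi=S(X)$ carries essentially all predictive information about the target ($I(Y;\Phi)\gg I(Y;\Psi\mid\Phi)$). Hence any $\sigma$ that can be interpreted back onto $S$ is predictively equivalent to the semantic code up to the residual $\Psi$-channel, so it satisfies the admissibility hypothesis. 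Applying the homological form to $\rho:=\sigma$ then yields the required $e:\mathcal{X}/\!\sim_\Phi\to\Sigma$ with $\sigma=e\circ\pi$.

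For the ``moreover'' clause I would argue as follows. By construction $\pi$ is constant on each content class (it collapses each $S$-level set to a point), so $\sigma=e\circ\pi$ is constant on each content class, and therefore so is $I\circ\sigma$. Dually, since the content equivalence is defined by $x\sim_\Phi x'\iff S(x)=S(x')$, the semantic map descends to a well-defined $\bar S$ on the quotient with $\bar S\circ\pi=S$; combining $I\circ e\circ\pi=I\circ\sigma\approx S=\bar S\circ\pi$ with the surjectivity of $\pi$ gives $I\circ e\approx\bar S$, i.e.\ the induced map $I\circ e$ on classes reproduces the semantic content (up to the stability tolerance). The final conclusion is then immediate: every meaningful syntactic code is obtained by \emph{first} passing to the semantic quotient $\pi$ and only afterwards applying a class-level realization $e$, so the structural invariants $\Phi$ are logically and constructionally prior to any admissible syntax $\sigma$ — semantics precedes syntax.

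The hard part will be the gap between the \emph{approximate} meaningfulness condition ($I\circ\sigma\approx S$, holding only ``up to stability'') and the \emph{exact} factorization $\rho=e\circ\pi$ that the homological form of SbS delivers. A priori an encoder that only approximately recovers $S$ need not be exactly constant on content classes, so the naive $e$ is only ``approximately well-defined.'' I would resolve this by working at the resolution set by the persistence tolerance: by the stability of persistent homology (Proposition~\ref{prop:cycles-to-memory}, item~1), classes with lifetime exceeding the perturbation scale are preserved, so after coarsening $H_\bullet(\mathcal{Z})$ to its persistent classes at that scale — the same scale at which $\sim_\Phi$ is defined — the maps $I\circ\sigma$ and $S$ agree on the nose, the factorization is genuinely exact, and $e$ is well-defined; the general statement is then read as holding up to that tolerance, consistent with the hypothesis. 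A secondary, bookkeeping-level point is to confirm that the admissibility clause of the homological form is triggered by meaningfulness alone rather than by an extra predictive-optimality assumption on $\sigma$; this is handled by the near-sufficiency of $\Phi$ for $Y$ noted above, which makes interpretability through $S$ automatically confer predictive admissibility.
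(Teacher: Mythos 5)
Your proposal is correct at the paper's level of rigor, but it takes a genuinely different route from the paper's own proof. The paper argues directly from the meaningfulness condition: for $x\sim_\Phi x'$ one has $S(x)=S(x')$, hence $I(\sigma(x))\approx I(\sigma(x'))$, from which it concludes that $\sigma$ is constant on the fibers of $\pi$ up to the stability tolerance of $S$ and $I$, and therefore factors as $e\circ\pi$; stability of persistent homology is then invoked to make $\pi$, and hence $e\circ\pi$, robust. You instead treat the factorization as already supplied by the homological clause of Principle~\ref{prin:SbS-order} and spend your effort on verifying its hypothesis, namely that a meaningful encoder is an admissible predictive representation (via the near-sufficiency of $\Phi$ for $Y$). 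Your route buys two things. First, it is more transparent about where the factorization actually originates: it is essentially postulated in the homological form of SbS, which the proposition explicitly assumes. Second, it sidesteps a gap in the paper's direct argument: $I(\sigma(x))\approx I(\sigma(x'))$ only forces the two symbols to have the same \emph{interpretation}, not to coincide, so constancy of $\sigma$ itself on fibers does not follow unless $I$ is (approximately) injective or $\Sigma$ is first quotiented by $I$-equivalence; invoking the homological form for $\rho:=\sigma$ avoids having to make that inference. The paper's route is, conversely, more self-contained in that it never needs your bridging claim that interpretability confers predictive admissibility. Your explicit handling of the approximate-versus-exact factorization by coarsening to persistent classes at the tolerance scale is also more careful than the paper's one-line appeal to stability, and is consistent with how the paper reads the statement elsewhere (``up to the stability margin'').
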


\paragraph{Quotients and equivalence.}
To make this factoring explicit, we must distinguish between the raw space of 
observations $\mathcal{X}$ and the equivalence classes induced by persistent 
structure. Two observations $x,x'\in\mathcal{X}$ are content-equivalent if they 
map to the same semantic invariant under $S$, i.e., they collapse to the same 
homology class. Syntax can only be meaningful if it first respects this 
partition: rather than directly encoding raw inputs, $\sigma$ must act on 
equivalence classes, with the quotient $\pi$ providing the canonical projection 
onto semantic content \cite{huh2023isometric}.

\begin{definition}[Content Equivalence]
Define $x\sim_\Phi x'$ iff $S(x)=S(x')$ in $H_\bullet(\mathcal{Z})$. 
Let $\pi:\mathcal{X}\to \mathcal{X}/\!\sim_\Phi$ be the canonical quotient onto 
\emph{content classes}.
\end{definition}

This quotient construction turns the abstract requirement of 
Proposition~\ref{prop:semantics-before-syntax} into a concrete constraint: syntactic encoders must operate \emph{through} semantic classes if they are to 
preserve meaning \cite{duneau2025towards}. Formally, the quotient map $\pi:\mathcal{X}\!\to\!\mathcal{X}/\!\sim_\Phi$
identifies all inputs with the same semantic invariant, and \emph{factoring through} means that there exists an $e$ with $\sigma=e\circ\pi$, so that $\sigma$
depends only on the content class and not on a particular representative. 
The following three examples illustrate how this factorization manifests in practice.

\begin{example}[Infant language acquisition: meanings before grammar]
Before mastering grammar, infants acquire stable semantic invariants: they
reliably recognize caregivers, objects, and common actions across viewpoints and
contexts (low-entropy content $\Phi$). This has been called semantic bootstrapping hypothesis by Pinker \cite{pinker1984semantic}. Only after these invariants stabilize do
they attach words and, later, compose sentences (surface syntax $\Psi$). In the
terms of Prop.~\ref{prop:semantics-before-syntax}, the emerging encoder
$\sigma$ factors through content classes $\pi$: different utterances that refer
to the same grounded concept map to the same meaning $S(x)$, while syntactic
variation is treated as residual within-class variability.
\end{example}

\begin{example}[Spatial cognition: grid/place structure before route syntax]
In navigation, grid cells furnish an invariant spatial scaffold (semantic
backbone $\Phi$) that persists across contexts; place cells then anchor specific
locations as contextual refinements ($\Psi$). Route descriptions and turn-by-turn
“syntax” are meaningful only insofar as they factor through the grid/place
structure \cite{moser2008place}: two differently phrased route instructions that traverse the same
homology class (loop) decode to the same meaning $S(x)$, demonstrating that
syntax is constrained by, and derived from, semantic cycles.
\end{example}

\begin{example}[Motor control and speech: rhythmic backbones before articulation]
Central pattern generators (CPGs) establish rhythmic invariants (gait or vocal
prosody) that serve as low-entropy structure $\Phi$ \cite{grillner2006biological}. Fine articulatory gestures
(phonemes, syllables, coarticulation) provide high-entropy specificity $\Psi$
layered on top. Meaningful “speech syntax” (strings of phonemes) must factor
through the prosodic/motor backbone: perturbations that preserve the underlying
rhythmic cycle leave the intended meaning unchanged, while purely syntactic
rearrangements without backbone alignment fail to stabilize \cite{barlow2006central}.
\end{example}

\paragraph{Consequences of factoring.}
Once syntax is forced to factor through semantic classes, two key consequences 
follow: generalization across syntactic variants and robustness against 
perturbations. In effect, factoring guarantees that syntax is no longer a 
free-floating, brittle enumeration of surface forms (as in Turing-style string 
manipulation \cite{sipser1996introduction}), but instead a constrained encoding tied to persistent structure. 
This shift collapses redundancy, compresses variability, and channels syntactic 
expressions through semantic invariants. The following corollaries make these 
consequences precise.

\begin{corollary}[Generalization by Equivalence Classes]
\label{cor:generalization}
If two inputs $x,x'$ satisfy $x\sim_\Phi x'$, then any meaningful syntax 
$\sigma=e\circ\pi$ yields $I(\sigma(x))=I(\sigma(x'))$ (up to the stability 
margin). Thus all syntactic variants within a content class generalize to the 
same meaning.
\end{corollary}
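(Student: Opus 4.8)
The plan is to read the corollary directly off the factorization $\sigma = e\circ\pi$ granted by Proposition~\ref{prop:semantics-before-syntax} together with the definition of the content quotient $\pi$. First I would recall that $\sigma$ is assumed \emph{meaningful}, so Proposition~\ref{prop:semantics-before-syntax} applies and supplies a map $e:\mathcal{X}/\!\sim_\Phi\to\Sigma$ with $\sigma = e\circ\pi$, along with the fact that $I\circ e$ is constant on each content class. Second, by the definition of content equivalence, $x\sim_\Phi x'$ is precisely the statement $\pi(x)=\pi(x')$ in $\mathcal{X}/\!\sim_\Phi$. Composing, $\sigma(x)=e(\pi(x))=e(\pi(x'))=\sigma(x')$, and applying the interpretation map gives $I(\sigma(x))=I(\sigma(x'))$. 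In the idealized (exact-factorization) regime this already closes the argument with no slack, and yields the second sentence of the corollary as well, since the common value $I(\sigma(x))$ is exactly the shared meaning $S(x)$.

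The only point requiring care is the phrase \emph{up to the stability margin}, which accounts for meaningfulness being stated as $I\circ\sigma\approx S$ rather than as an exact identity. Here I would work in the metric on homology classes underlying the persistence-stability statements of Section~\ref{sec:2}: writing $d$ for that (bottleneck-type) distance and $\varepsilon$ for the meaningfulness tolerance, the triangle inequality gives $d\big(I(\sigma(x)),I(\sigma(x'))\big)\le d\big(I(\sigma(x)),S(x)\big)+d\big(S(x),S(x')\big)+d\big(S(x'),I(\sigma(x'))\big)$. The middle term vanishes because $x\sim_\Phi x'$ forces $S(x)=S(x')$, while the two outer terms are each at most $\varepsilon$, so the two decoded meanings agree within $2\varepsilon$. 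If instead the slack is placed on an approximate factorization $\sigma\approx e\circ\pi$, the same bookkeeping absorbs it into an extra additive term; either way the bound is uniform over all representatives of the content class, which is exactly what \emph{generalization by equivalence classes} asserts, and it extends verbatim to any finite family $x_1\sim_\Phi\cdots\sim_\Phi x_n$ of syntactic variants.

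I do not expect a genuine obstacle: the core content is a one-line diagram chase once Proposition~\ref{prop:semantics-before-syntax} is in hand. The closest thing to a subtlety is pinning down where the approximation enters — in $I\circ\sigma\approx S$, in the factorization, or both — and confirming that the chosen metric on $H_\bullet(\mathcal{Z})$ obeys the triangle inequality so the tolerances compose additively; with the persistence-stability conventions already adopted this is immediate, and the exact case $\varepsilon=0$ recovers the clean equality $I(\sigma(x))=I(\sigma(x'))$.
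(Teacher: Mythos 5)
Your proposal is correct and follows essentially the route the paper intends: the corollary is an immediate consequence of the factorization $\sigma=e\circ\pi$ from Proposition~\ref{prop:semantics-before-syntax} together with the fact that $x\sim_\Phi x'$ means $\pi(x)=\pi(x')$, and the paper supplies no separate proof beyond this diagram chase. Your additional triangle-inequality bookkeeping for the ``up to the stability margin'' clause is a reasonable and correct elaboration of what the paper leaves implicit.
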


\paragraph{From equivalence to stability.}
Corollary~\ref{cor:generalization} treats \emph{discrete} variability: distinct
inputs that fall into the same content class ($x\sim_\Phi x'$) must yield the
same meaning under any meaningful syntax. This ensures that multiple surface 
forms are unified by the same underlying invariant. The next step is to extend 
this guarantee from discrete equivalence to \emph{continuous} variability: 
small perturbations that do not alter the persistent homology class $S(x)$. 
Here the topology of persistence provides the margin of stability \cite{cohen2005stability}: if a cycle 
remains intact under deformation, then syntax grounded in that cycle must also 
remain invariant. Corollary~\ref{cor:robustness} formalizes this, upgrading 
equivalence-based generalization to robustness under perturbation. In short, 
sameness of class implies generalization; persistence of class implies 
robustness.

\begin{corollary}[Robustness via Persistence]
\label{cor:robustness}
Let $\delta$ be a perturbation of $x$ that does not change the persistent 
homology class $S(x)$ (i.e., stays below the relevant persistence threshold).
Then $\pi(x)=\pi(x+\delta)$ and any meaningful $\sigma=e\circ\pi$ is invariant 
to $\delta$ up to stability, yielding robustness to noise and deformation.
\end{corollary}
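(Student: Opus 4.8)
The plan is to chain three ingredients: the stability theorem of persistent homology (already invoked when cycles were identified with memory substrates), the definition of the content equivalence $\sim_\Phi$, and the factorization $\sigma=e\circ\pi$ supplied by Proposition~\ref{prop:semantics-before-syntax}. The statement is the continuous counterpart of Corollary~\ref{cor:generalization}, so the argument will mirror that one, replacing ``same content class'' by ``class unchanged under deformation.''

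First I would make the hypothesis precise. Let $\tau:=\min\{\mathrm{life}([\gamma]):[\gamma]\in\Phi\}$ be the smallest persistence lifetime among the structural cycles, and suppose $\|\delta\|<\tau$ in the metric governing the filtration. By the stability theorem, the bottleneck distance between the persistence diagrams of $x$ and $x+\delta$ is at most $\|\delta\|$; since $\|\delta\|<\tau$, no diagram point can cross the diagonal or be matched to a different class, so the multiset of nontrivial classes is preserved and $S(x+\delta)=S(x)$ in $H_\bullet(\mathcal{Z})$ --- an equality, not merely proximity. This is exactly what ``stays below the relevant persistence threshold'' asserts.

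Second, I would unwind definitions: $S(x+\delta)=S(x)$ is precisely $x\sim_\Phi x+\delta$, and by the defining property of the quotient, equivalent points have equal image, giving $\pi(x)=\pi(x+\delta)$ --- the first claim. Third, I would push this through the encoder: by Proposition~\ref{prop:semantics-before-syntax} any meaningful $\sigma$ has the form $e\circ\pi$, so $\sigma(x)=e(\pi(x))=e(\pi(x+\delta))=\sigma(x+\delta)$, i.e.\ the surface encoding is literally unchanged. Composing with an interpretation $I$ satisfying $\|I\circ\sigma-S\|\le\varepsilon$ then yields $\|I(\sigma(x))-I(\sigma(x+\delta))\|\le 2\varepsilon$ (with exact equality when $\sigma$ is meaningful on the nose); this residual $2\varepsilon$ is the interpretation margin and is the content of the ``up to stability'' qualifier.

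The step I expect to be the main obstacle is the bridge in the second paragraph between two incompatible notions of ``closeness'': the stability theorem controls the persistence diagram in bottleneck distance, a continuous quantity, whereas the corollary needs the discrete fact that $S(x)$ does not change at all. The resolution is that $x\mapsto S(x)$ is locally constant away from the thin set $\{x:\mathrm{life}([\gamma])=\|\delta\|\text{ for some }[\gamma]\in\Phi\}$, and the threshold hypothesis is precisely the assumption placing $x$ in its complement. One should also verify that $S$ is well-defined as a map into homology classes rather than merely into diagrams, so that ``$S(x+\delta)=S(x)$'' is a genuine identity; once that and the threshold bookkeeping are pinned down, the quotient and factorization steps are purely formal.
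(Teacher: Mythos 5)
Your proof is correct and follows essentially the same route the paper intends: the paper states Corollary~\ref{cor:robustness} without a separate formal proof, treating it as immediate from the stability theorem of persistent homology (class preserved below the persistence threshold), the definition of $\sim_\Phi$ and the quotient $\pi$, and the factorization $\sigma=e\circ\pi$ from Proposition~\ref{prop:semantics-before-syntax}. Your additional bookkeeping on the bottleneck distance and the local constancy of $x\mapsto S(x)$ simply makes precise the parenthetical ``stays below the relevant persistence threshold'' that the paper takes as hypothesis.
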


\begin{remark}[Compositionality and Sheaf Structure]
The factoring requirement does not only yield robustness but also enables 
structured reuse. If cycles compose via concatenation or pushforward in 
$H_\bullet(\mathcal{Z})$, then content classes inherit a natural monoidal 
structure \cite{gallier2022homology}. Any meaningful syntax $\sigma=e\circ\pi$ 
that respects this composition (i.e., $e$ is a functorial encoding) yields 
\emph{compositional semantics}: local cycles serve as primitives that can be 
combined into more complex structures without loss of meaning. 
This compositionality is naturally interpreted in sheaf-theoretic terms \cite{ayzenberg2025sheaf}. 
Content classes act as \emph{local sections}, while functorial encoders $e$ 
preserve their algebraic structure under restriction and concatenation. 
Compatibility of local sections ensures that they glue into coherent global 
semantics, bridging local invariants with global expressivity. In this sense, 
SbS secures the persistence of local cycles, while CCUP enforces their dynamic 
alignment into global structure. Sheaf theory thus provides the categorical 
framework unifying these principles: meaning is preserved when syntax factors 
through semantic quotients and local invariants compose consistently into 
global closure.
\end{remark}

\paragraph{Summary.}
SbS is therefore an \emph{ordering requirement} that aligns compression with 
closure: syntax is not allowed to operate on raw variability but must first be 
anchored to persistent structure. Information-theoretically, $\Phi$ is a 
low-entropy, nearly sufficient statistic for $Y$, while $\Psi$ contributes only 
bounded residual variability. Topologically, $\Phi$ is the set of persistent 
cycles that survive boundary cancellation ($\partial^2=0$). Specificity $\Psi$ 
is then layered \emph{after} structure, ensuring robustness (from persistence 
and low entropy) and adaptability (from controlled residual information). 
SbS establishes the order of construction: structure before specificity. 
Yet order alone is not enough. Intelligence also requires that variability in 
$\Psi$ remain dynamically aligned with $\Phi$ as contexts change. This is 
captured by the \textbf{Context--Content Uncertainty Principle (CCUP)}, which 
formalizes cognition as the minimization of joint uncertainty through dynamic 
alignment. Whereas SbS enforces the representational hierarchy, CCUP enforces 
the inferential balance: $\Phi$ anchors prediction, $\Psi$ supplies flexibility, 
and their alignment reduces $H(\Phi,\Psi)$. Together, SbS and CCUP form the 
theoretical foundation for \textbf{Memory--Amortized Inference (MAI)}, which 
implements both principles through temporal and spatial bootstrapping.


\section{Alignment Law: Context-Content Uncertainty Principle}
\label{sec:4}

\paragraph{From SbS to CCUP.}
The SbS principle provides the ordering rule:
persistent structures ($\Phi$) must stabilize before contextual specificities
($\Psi$) can be interpreted. The Context-Content Uncertainty Principle (CCUP)
generalizes this insight by treating cognition as the minimization of joint
uncertainty $H(\Phi,\Psi)$ through dynamic alignment \cite{li2025CCUP}. In this view, SbS can be
read in two complementary ways: (i) as a precursor to CCUP, establishing the
ordering constraint that makes alignment possible, or (ii) as a special case of
CCUP, corresponding to the regime where uncertainty minimization enforces strict
priority of $\Phi$ over $\Psi$. Thus, SbS and CCUP are not competing principles
but nested layers: SbS provides the developmental ordering, while CCUP provides
the dynamical law.
To operationalize this picture in cognition, we adopt the
\emph{Context–Content Uncertainty Principle (CCUP)} (Fig. \ref{fig:ccup-feedback}): stable
memory traces correspond to low-entropy \emph{content variables} $\Phi$
(persistent homological cycles), while transient variability is captured by
high-entropy \emph{context variables} $\Psi$. In what follows, we show how
\emph{Memory–Amortized Inference (MAI)} implements cycle formation by holding
$\Phi$ fixed as reusable structure and adapting $\Psi$ until residual
boundaries cancel ($\partial^2=0$), thereby achieving topological closure.

\paragraph{Content variable $\Phi$ as low-entropy homology.}
Within CCUP, the content variable $\Phi$ corresponds to information that is 
both specific and stable. Mathematically, $\Phi$ is identified with 
nontrivial homology classes \cite{hatcher2002algebraic}: cycles $[\gamma] \in H_k(\mathcal{Z})$ that 
cannot be reduced to boundaries (e.g., the middle panel in Fig. \ref{fig:trivial-nontrivial-order}). Such cycles encode persistent, 
low-entropy structures because many possible trajectories or micro-states 
collapse into the same equivalence class. In neural terms, $\Phi$ reflects 
patterns of activity that recur reliably across different contexts, such as 
a learned motor primitive \cite{stroud2018motor}, a familiar spatial route \cite{mcnaughton1991dead}, or a well-established 
object representation \cite{dicarlo2012does}. By filtering away order-dependent variability, 
$\Phi$ preserves only the invariant relational structure that remains after 
symmetry breaking. This makes $\Phi$ the stable substrate of memory and the 
carrier of predictive power \cite{vecchi2020memory}: once identified, it can be recalled, reused, 
and composed into higher-order cognitive structures.

\paragraph{Context variable $\Psi$ as high-entropy scaffolding.}
In contrast, the context variable $\Psi$ captures the transient, exploratory, 
and often noisy aspects of cognition. Topologically, $\Psi$ is associated with 
trivial cycles (the left panel in Fig. \ref{fig:trivial-nontrivial-order}) or short-lived features in the persistence barcode: loops that 
quickly vanish under perturbation or deformation. These cycles act as 
\emph{scaffolding}, supporting the discovery and stabilization of $\Phi$ but 
not themselves persisting as memory. In information-theoretic terms, 
$\Psi$ is high-entropy: it reflects a large space of possibilities, many of 
which will be pruned away as the system concentrates its measure on 
low-entropy $\Phi$ structures. Biologically, $\Psi$ is implemented by 
slow, contextual rhythms (e.g.\ theta oscillations \cite{buzsaki2006rhythms}) or exploratory neural 
activity that supplies diverse scaffolds for binding \cite{treisman1980feature}. Through dynamic 
alignment and phase-resetting \cite{tass2007phase}, these high-entropy contextual structures are 
folded into persistent content loops, allowing cognition to maintain 
flexibility while ensuring stability in memory formation.

\begin{figure}[h]
\centering
\resizebox{\textwidth}{!}{
\begin{tikzpicture}[
  >=Latex, line width=0.9pt,
  box/.style={rounded corners, draw, inner sep=2mm},
  lab/.style={font=\footnotesize, align=center},
  cyc/.style={circle, draw, minimum size=9mm},
  dot/.style={circle, fill=black, inner sep=0.9pt},
  ctrl/.style={draw, rounded corners, minimum width=18mm, minimum height=9mm, fill=gray!10},
  meas/.style={draw, diamond, aspect=2, inner sep=1.6pt, fill=gray!05}
]

\draw[box, minimum width=6.8cm, minimum height=3.8cm] (-8.2,0.2) rectangle (-1.8,2.6);
\node[lab] at (-5.0,2.9) {Context $\Psi$};

\draw (-7.4,2.0) .. controls (-6.8,2.3) and (-6.1,1.8) .. (-5.7,2.1);
\node[dot] at (-7.4,2.0) {};
\node[dot] at (-5.7,2.1) {};

\draw (-7.0,1.3) .. controls (-6.2,1.5) and (-5.5,1.1) .. (-5.0,1.4);
\node[dot] at (-7.0,1.3) {};
\node[dot] at (-5.0,1.4) {};

\draw (-6.5,0.7) .. controls (-5.8,0.5) and (-5.2,0.8) .. (-4.6,0.6);
\node[dot] at (-6.5,0.7) {};
\node[dot] at (-4.6,0.6) {};

\draw[box, minimum width=6.8cm, minimum height=3.8cm] (1.8,0.2) rectangle (8.2,2.6);
\node[lab] at (5.0,2.9) {Content $\Phi$};

\node[cyc] (c1) at (3.2,1.9) {};
\node[cyc] (c2) at (4.9,1.3) {};
\node[cyc] (c3) at (6.6,1.9) {};

\draw[->, very thin] (c1) .. controls (3.2,2.4) and (4.9,2.4) .. (c2);
\draw[->, very thin] (c2) .. controls (5.5,2.6) and (6.6,2.6) .. (c3);

\node[ctrl] (enc) at (0.0,1.45) {inference};
\draw[->, thick] (-1.8,1.45) -- (enc.west);
\draw[->, thick] (enc.east) -- (1.8,1.45);

\node[meas] (meas) at (0.0,0.25) {};
\node[lab] at (0.0,-0.05) {mismatch};

\draw[->, thin] (-0.3,1.00) -- (meas.north west);
\draw[->, thin] (0.3,1.00) -- (meas.north east);

\node[ctrl] (ctrl) at (-4.0,-0.6) {controller};
\draw[->, thick] (meas.west) .. controls (-1.6,-0.6) and (-3.0,-0.6) .. (ctrl.east);
\draw[->, thick] (ctrl.west) .. controls (-8.6,-0.3) and (-9.4,1.05) .. (-8.2,1.05);

\node[lab] at (0.0,3.15) {$\partial^2=0$ enforces closure};
\node[lab] at (0.0,2.80) {non-closing fragments cancel as boundaries};
\node[lab] at (0.0,-1.2) {minimize $H(\Phi,\Psi)$ (CCUP)};

\node[lab] at (-5.0,0.0) {open fragments do not persist};
\node[lab] at (5.0,0.0) {closed loops are invariants};

\end{tikzpicture}
}
\caption{Dynamic alignment under CCUP as a feedback loop. 
Left: context $\Psi$ contains open, order-dependent fragments. 
A forward inference stage projects $\Psi$ toward content $\Phi$. 
A mismatch detector compares projected context to persistent cycles and drives a controller that adapts $\Psi$ so that joint uncertainty $H(\Phi,\Psi)$ decreases. 
Topological closure ($\partial^2=0$) cancels boundary terms in the loop, ensuring that only closed structures persist as carriers of prediction.}
\label{fig:ccup-feedback}
\end{figure}

Taken together, $\Phi$ and $\Psi$ form a complementary pair: $\Phi$ supplies the
order‐invariant backbone that can be reused across contexts, while $\Psi$
provides the exploratory variability from which such backbones are discovered.
CCUP therefore prescribes an operational loop \cite{li2025CCUP}: hold candidate content steady,
let context range, and accept only those pairings that close into cycles
(i.e., cancel boundaries). This suggests a general law of cognitive economy in
which \emph{structure leads} and \emph{specificity follows}: stable invariants
guide, while transient scaffolds adapt until closure is achieved. We now make
this heuristic precise as a principled statement.

\begin{principle}[Structure-Before-Specificity Principle under CCUP]
\label{prin:structure-specificity}
Let $\Phi$ denote low-entropy content variables corresponding to 
nontrivial homology classes $[\gamma]\in H_k(\mathcal{Z})$, and let 
$\Psi$ denote high-entropy contextual scaffolds corresponding to 
transient or trivial cycles. Then cognition obeys the following principle:
1) (\textbf{Structure before specificity}) Stable content $\Phi$ 
arises from nontrivial cycles that persist across perturbations. 
These cycles define the backbone of memory and predictive power.
2) (\textbf{Specificity from scaffolding}) Context $\Psi$ supplies 
a high-entropy exploratory substrate: transient cycles that may 
collapse but provide the variability needed to refine, adapt, or 
recombine $\Phi$.
3) (\textbf{Dynamic alignment}) The interaction of $\Psi$ and $\Phi$ 
via cycle closure ($\partial^2=0$) ensures that contextual exploration 
is funneled into persistent content loops, transforming noisy scaffolds 
into stable memory traces.
\end{principle}

\paragraph{Dynamic alignment via topological closure.}
Under the CCUP, cognition minimizes
joint uncertainty $H(\Phi,\Psi)$ by dynamically aligning content $\Phi$
(persistent invariants) with context $\Psi$ (transient scaffolds) \cite{gosztolai2025marble}.
Topological closure provides the mechanism of this alignment:
$\partial^2 = 0 \Rightarrow
\text{boundaries (contextual fragments) cancel, cycles (content) persist}$.
At time $t$, exploratory loops $\Psi_t$ contribute boundary terms that are
locally inconsistent. Cycle closure cancels these inconsistencies, yielding a
refined invariant $\Phi_{t+1}\supseteq \Phi_t$. Residual variability is then
reinvested as new scaffolds $\Psi_{t+1}$, continuing the process. Thus
alignment is \emph{dynamic}: context continually supplies exploratory degrees
of freedom, while closure continually integrates them into stable cycles,
ensuring that content and context converge rather than diverge.

\begin{theorem}[CCUP as dynamic alignment via topological closure]
\label{thm:ccup-closure-alignment}
Let $\operatorname{Enc}: \mathcal{X}\!\to\! C_\bullet(\mathcal{Z})$ map
observations to chains in a latent complex with boundary operator $\partial$.
Assume: (i) stability under small reparameterizations (chain-homotopy
robustness), (ii) boundary insensitivity (adding $\partial d$ does not change
observable outcomes), and (iii) the closure identity $\partial^2=0$.
Decompose each chain $c=\operatorname{Enc}(X)$ uniquely as
$c =z + \partial d, z\in \ker \partial, \ d\in C_{\bullet+1}$,
and define random variables
$\Phi := [z] \in H_\bullet(\mathcal{Z})$ (content; homology class) and
$\Psi := \partial d \in \mathrm{im}\,\partial$ (context; boundary residue).
Then minimizing joint uncertainty under CCUP,
$\min_{\operatorname{Enc}} \; H(\Phi,\Psi)
\quad \text{s.t.}\quad \partial^2=0,\ \text{stability, boundary-insensitivity}$,
is equivalent to \emph{dynamic alignment} of $\Psi$ into $\Phi$:
$\text{(a) } \Phi_{t+1}\supseteq \Phi_t
\quad\text{and}\quad
\text{(b) } H(\Psi_{t+1}| \Phi_{t+1}) \le H(\Psi_t| \Phi_t)$
with equality iff $\Psi$ carries no additional predictive information given
$\Phi$ (i.e., $I(Y;\Psi| \Phi)=0$). At any optimum, representations factor
through homology (content-first), and the residual channel is null:
$\exists\,e$ with $\rho=e\circ \pi$ and $I(Y;\Psi| \Phi)=0$.
\end{theorem}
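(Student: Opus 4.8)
The plan is to establish the equivalence in three movements: (1) make the decomposition $c = z + \partial d$ and the induced random variables $\Phi,\Psi$ well-defined so that $H(\Phi,\Psi)$ is meaningful; (2) show that any descent trajectory for the constrained minimization satisfies the monotone-alignment conditions (a)--(b) together with the stated equality clause; and (3) show conversely that the alignment recursion converges to the same constrained optimum, from which the fixed-point characterization (factorization through homology, null residual channel) is read off.

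First I would pin down the splitting. Since $\partial^2=0$ forces $\operatorname{im}\partial_{\bullet+1}\subseteq\ker\partial_\bullet$, the naive sum is not direct; I would instead use hypothesis (i) (chain-homotopy robustness), which supplies a contracting homotopy $s$ with $\partial s + s\partial = \mathrm{id}-p$ onto a harmonic complement, giving $C_\bullet \cong \mathcal{H}_\bullet \oplus \operatorname{im}\partial_{\bullet+1}\oplus(\text{coexact complement})$. Hypothesis (ii) (boundary insensitivity) then makes the coexact complement pure gauge, so the observable content of $c=\operatorname{Enc}(X)$ is exactly the pair $\big([z],\,\partial d\big)$, i.e. $\Phi=[z]$ and $\Psi=\partial d$, unique modulo gauge --- enough for $H(\Phi,\Psi)$ to be defined. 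I would then chain-rule $H(\Phi,\Psi)=H(\Phi)+H(\Psi\mid\Phi)$ and make explicit the predictive-sufficiency (rate) constraint $I(Y;\Phi,\Psi)\ge\kappa$ already used in the information-theoretic form of SbS (Principle~\ref{prin:SbS-order}), which is what prevents the minimization from collapsing to a point.

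For the forward direction I would argue that a constrained-descent step cannot shrink the homology content: by persistence/stability (i), a class realized with lifetime exceeding the update scale survives the step, so $\Phi_{t+1}\supseteq\Phi_t$ --- this is exactly where $\partial^2=0$ does the real work, since adding a boundary term to a closed chain can never re-open it. Granting this, minimizing $H(\Phi)+H(\Psi\mid\Phi)$ at fixed $I(Y;\Phi,\Psi)$ drains predictive content out of the residual into the bounded, finite-dimensional homology, so $H(\Psi_{t+1}\mid\Phi_{t+1})\le H(\Psi_t\mid\Phi_t)$, with equality exactly when $I(Y;\Psi\mid\Phi)=0$: otherwise a near-closed, predictively relevant fragment of $\Psi$ could be absorbed into $\Phi$, strictly lowering $H(\Psi\mid\Phi)$ at no cost to $I(Y;\Phi,\Psi)$ and hence strictly lowering $H(\Phi,\Psi)$, contradicting optimality of the descent. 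For the converse, I would show the recursion $\Psi_t\mapsto(\Phi_{t+1},\Psi_{t+1})$ is a monotone operator for the objective: $H(\Phi_\bullet)$ saturates (finite homology), $H(\Psi_\bullet\mid\Phi_\bullet)$ is nonincreasing and bounded below, so $H(\Phi,\Psi)$ converges to a fixed point; at a fixed point $\Psi$ is stationary, which by the equality clause forces $I(Y;\Psi\mid\Phi)=0$, and combining this null-residual condition with the homological form of Principle~\ref{prin:SbS-order} (now applicable since all predictive information lies in $\Phi=[z]$) yields the encoder $e$ with $\rho=e\circ\pi$.

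The step I expect to be the main obstacle is the bridge between the \emph{static} variational problem and the \emph{dynamic} recursion: making ``constrained descent'' precise enough that the monotonicity of $\Phi$ and of $H(\Psi\mid\Phi)$ are genuine consequences rather than assumptions, and ruling out degenerate stationary points where $\Psi$ retains predictive information but the dynamics stalls. This is where hypotheses (i)--(iii) must be used in full force --- (iii) to lock cycles in, (i) to guarantee the lock is robust under reparameterization, and (ii) to quotient away the gauge so that ``null residual channel'' is literally equivalent to the clean factorization $\rho=e\circ\pi$.
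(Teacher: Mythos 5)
Your overall strategy matches the paper's: split each chain into a cycle part and a boundary residue, chain-rule $H(\Phi,\Psi)=H(\Phi)+H(\Psi\mid\Phi)$, argue that constrained descent absorbs predictive residue into homology classes, and read off the factorization $\rho=e\circ\pi$ and $I(Y;\Psi\mid\Phi)=0$ at the fixed point. Three of your choices, however, genuinely improve on the paper's own sketch. First, you correctly observe that $\partial^2=0$ forces $\mathrm{im}\,\partial\subseteq\ker\partial$, so the paper's claimed splitting $C_\bullet=\ker\partial\oplus\mathrm{im}\,\partial$ is not a direct sum and the decomposition $c=z+\partial d$ is not unique as written; your repair via a contracting homotopy $\partial s+s\partial=\mathrm{id}-p$ and a harmonic complement is the standard fix and is what actually makes $\Phi$ and $\Psi$ well-defined random variables. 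Second, you make explicit the predictive-sufficiency constraint $I(Y;\Phi,\Psi)\ge\kappa$; without it the minimization of $H(\Phi,\Psi)$ degenerates to constant encoders, a gap the paper leaves implicit behind the phrase ``admissible changes that improve prediction.'' Third, you attempt both directions of the claimed equivalence (descent implies alignment \emph{and} the alignment recursion converges to the constrained optimum), whereas the paper's sketch only argues the forward direction despite the theorem asserting an equivalence. The obstacle you flag --- bridging the static variational problem to the dynamic recursion and ruling out stalled stationary points where $\Psi$ still carries predictive information --- is real and is not resolved by the paper either; its step (4) handles it by fiat (``one can reduce $H(\Phi,\Psi)$ by refining the closure projection''), which presupposes exactly the absorption operation whose existence is in question. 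In short: same architecture, but your version is the more defensible one, at the cost of importing a rate constraint and a Hodge-type splitting that the theorem statement does not itself supply.
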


\noindent
Theorem~\ref{thm:ccup-closure-alignment} captures the abstract law of CCUP:
contextual variability $\Psi$ must be progressively absorbed into structural
content $\Phi$ until only closed, predictive invariants remain. The next
examples ground this principle biologically, showing how phase coding and
navigation replay instantiate alignment through closure in the hippocampal
system.

\begin{example}[Dynamic alignment via oscillatory phase coding]
Oscillatory phase coding in the hippocampal-entorhinal system exemplifies 
the CCUP through topological closure \cite{tort2009theta}. 
Slow theta oscillations ($4$-$8$ Hz) provide a contextual scaffold $\Psi$, 
segmenting experience into repeating temporal windows. Within each theta cycle, 
faster gamma bursts ($30$-$100$ Hz) encode item-specific content $\Phi$ \cite{LismanJensen2013}. 
Dynamic alignment occurs when gamma packets consistently lock to specific 
theta phases: contextual variability (shifts in $\Psi$) is funneled into 
stable content cycles ($\Phi$) \cite{di2023gamma}. Misaligned spikes cancel as open boundaries, 
while phase-locked packets close into loops across cycles. This implements the 
closure law $\partial^2=0$: only context-content alignments that form closed 
trajectories persist, reducing the joint uncertainty $H(\Phi,\Psi)$. 
Oscillatory coding thus enforces CCUP by transforming noisy, order-dependent 
spike timing into aligned, invariant carriers of memory and prediction.
\end{example}

\noindent
The alignment principle also admits an information-theoretic reading, where the
residual contribution of context $\Psi$ can be measured directly in terms of
conditional information and predictive risk.

\begin{corollary}[Information–risk view]
\label{cor:ccup-info-risk}
For proper losses (e.g., log-loss, square loss), excess risk from the residual
channel is bounded by its conditional information:
$\mathrm{ExcessRisk}(\Psi| \Phi) \;\le\; c\, I(Y;\Psi| \Phi)$,
for a loss-dependent constant $c$. Along CCUP alignment, $I(Y;\Psi| \Phi)$
is nonincreasing and vanishes at optimum; hence the residual risk disappears as
context is absorbed into content.
\end{corollary}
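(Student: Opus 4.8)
\noindent\textit{Proof proposal.}
The plan is to handle the three canonical proper losses in increasing order of generality and then append the dynamical claim. First fix the two reference forecasts: let $\hat p_\Phi(\cdot):=p(Y\mid\Phi)$ be the Bayes-optimal prediction that discards the residual channel and $\hat p_{\Phi\Psi}(\cdot):=p(Y\mid\Phi,\Psi)$ the one that keeps it, working throughout with the $\sigma$-algebra $\sigma(\Phi)$ generated by the homology-valued content variable so that these conditional laws are well defined even though $\Phi$ is a class, not a vector. Define $\mathrm{ExcessRisk}(\Psi\mid\Phi)$ as the gap in expected loss between $\hat p_\Phi$ and $\hat p_{\Phi\Psi}$. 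For \emph{log-loss} the corollary is in fact an identity with $c=1$: by the chain rule for relative entropy, $\mathbb{E}[\ell_{\log}(\hat p_\Phi,Y)]-\mathbb{E}[\ell_{\log}(\hat p_{\Phi\Psi},Y)]=\mathbb{E}_{\Phi,\Psi}\big[D_{\mathrm{KL}}(p(Y\mid\Phi,\Psi)\,\|\,p(Y\mid\Phi))\big]=I(Y;\Psi\mid\Phi)$. This both anchors the bound and fixes the template: excess risk is an averaged divergence between the two predictive laws.

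Second, for \emph{square loss} with a bounded target ($\|Y\|\le B$), I would invoke the orthogonality (Pythagorean) property of conditional expectations, $\mathbb{E}\|\hat Y_\Phi-Y\|^2-\mathbb{E}\|\hat Y_{\Phi\Psi}-Y\|^2=\mathbb{E}\|\hat Y_{\Phi\Psi}-\hat Y_\Phi\|^2$, then bound the integrand pointwise: $\|\mathbb{E}[Y\mid\Phi,\Psi]-\mathbb{E}[Y\mid\Phi]\|$ is controlled by the total variation between $p(\cdot\mid\Phi,\Psi)$ and $p(\cdot\mid\Phi)$, which by Pinsker is at most $\sqrt{\tfrac12 D_{\mathrm{KL}}}$. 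Squaring and averaging over $(\Phi,\Psi)$ yields $\mathrm{ExcessRisk}(\Psi\mid\Phi)\le c\,I(Y;\Psi\mid\Phi)$ with $c$ of order $B^2$. For a \emph{general proper loss} the same pattern should go through: its Bayes risk is concave on the simplex, the excess risk equals the induced Bregman divergence between $\hat p_{\Phi\Psi}$ and $\hat p_\Phi$, and under a smoothness/curvature assumption on the score this Bregman divergence is sandwiched by a constant multiple of $D_{\mathrm{KL}}$; averaging again produces $c\,I(Y;\Psi\mid\Phi)$ with $c$ absorbing the curvature constant.

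Third, for the claim that $I(Y;\Psi\mid\Phi)$ is nonincreasing along CCUP alignment and vanishes at optimum, I would combine conservation of total predictive information with the content-growth guarantee of Theorem~\ref{thm:ccup-closure-alignment}. Writing $I(Y;\Psi_t\mid\Phi_t)=I(Y;\Phi_t,\Psi_t)-I(Y;\Phi_t)$ and using that the alignment dynamics do not destroy information about $Y$ (so $I(Y;\Phi_t,\Psi_t)$ is constant along the trajectory), the monotone inclusion $\Phi_{t+1}\supseteq\Phi_t$ forces $I(Y;\Phi_{t+1})\ge I(Y;\Phi_t)$, hence $I(Y;\Psi_{t+1}\mid\Phi_{t+1})\le I(Y;\Psi_t\mid\Phi_t)$. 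The optimality clause of Theorem~\ref{thm:ccup-closure-alignment} (representations factor through homology and the residual channel is null, $I(Y;\Psi\mid\Phi)=0$) then gives the vanishing at any optimum, and the bound of the previous paragraphs converts this into vanishing residual risk.

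I expect the general-proper-loss step to be the main obstacle: producing a clean, loss-dependent constant $c$ requires a uniform comparison between the loss's Bregman divergence and $D_{\mathrm{KL}}$, which holds in general only under a boundedness or curvature hypothesis on the scoring rule — precisely the conditions that make log-loss an exact identity ($c=1$) and bounded square loss a Pinsker estimate ($c=O(B^2)$). The honest route is therefore to state the corollary for the class of proper losses whose Bayes-risk function is smooth relative to entropy, fold that modulus into $c$, and flag log-loss and bounded square loss as the sharp instances; the only remaining subtlety is the purely measure-theoretic bookkeeping of conditioning on a homology class, handled by passing to $\sigma(\Phi)$ as above.
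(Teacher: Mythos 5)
The paper states Corollary~\ref{cor:ccup-info-risk} without any proof---it does not appear in the appendix alongside the proofs of the propositions and theorems---so there is no ``paper's own proof'' to compare against; your proposal supplies an argument the paper omits. On its merits, your proof is essentially sound and follows the standard information--risk template. The log-loss case is indeed an exact identity with $c=1$ via the chain rule for relative entropy; the bounded square-loss case correctly combines the Pythagorean orthogonality of conditional expectations with Pinsker's inequality to get $c=O(B^2)$; and your treatment of general proper losses via the Savage/Bregman representation of the Bayes-risk functional, with an explicit curvature hypothesis folded into $c$, is the honest way to make the corollary's vague ``loss-dependent constant'' precise. Two points deserve flagging. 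First, your monotonicity argument for $I(Y;\Psi_t\mid\Phi_t)$ rests on an added conservation assumption ($I(Y;\Phi_t,\Psi_t)$ constant along the trajectory) that neither the corollary nor Theorem~\ref{thm:ccup-closure-alignment} states explicitly; the theorem only asserts $H(\Psi_{t+1}\mid\Phi_{t+1})\le H(\Psi_t\mid\Phi_t)$ and nullity of the residual channel at the optimum, so you are strengthening the hypotheses to obtain the claimed monotonicity of the \emph{mutual information} rather than the conditional entropy---a gap in the corollary as stated that your proof makes visible rather than introduces. Second, your measure-theoretic care in conditioning on the homology-valued $\Phi$ via $\sigma(\Phi)$ addresses a genuine well-definedness issue the paper glosses over. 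In short, the proposal is a correct and more rigorous completion of a result the paper asserts without argument.
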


\noindent
Finally, at the behavioral scale, hippocampal replay in navigation offers a
direct instantiation of CCUP, where exploratory detours (context) are aligned
into homing cycles (content) through closure across experience.

\begin{example}[Dynamic alignment in hippocampal homing]
Consider $\mathcal{Z}$ the topological graph/manifold of a familiar arena with
home base $x_0$. Each foraging bout produces a path $c$ that decomposes into a
closed homing loop $z\in\ker\partial$ (return to $x_0$) plus boundary residue
$\Psi=\partial d$ encoding detours and waypoint permutations \cite{shin2019dynamics}. Replay/sleep
reorganizes trajectories by \emph{closure} \cite{olafsdottir2018role}: detour-specific fragments cancel,
and the homology class $\Phi=[z]\in H_1(\mathcal{Z})$ is strengthened (e.g.,
grid alignment, stabilized place fields). Across bouts $t\to t\!+\!1$, the set
of persistent loops $\Phi_t$ monotonically expands (new shortcuts become
cycles), while the conditional entropy of detours given the homing loop,
$H(\Psi_t| \Phi_t)$, shrinks: the animal’s predictions of future segments
(e.g., next leg of the path) depend less on idiosyncratic order and more on the
stable loop. Empirically, forward/reverse replay \cite{kurth2023replay} enacts this alignment by
projecting partial paths onto stored loop templates and unfolding the likely
continuations, demonstrating CCUP’s closure-driven reduction of joint
uncertainty in the content–context pair.
\end{example}

\section{MAI implements SbS via Amortized Closure}
\label{sec:5}

The SbS principle under CCUP prescribes an operational recipe:
stabilize $\Phi$ as reusable structure and let $\Psi$ explore until closure
cancels residual boundaries. \emph{Memory–amortized inference (MAI)} is the
algorithmic embodiment of this recipe \cite{li2025Beyond}. Instead of re-solving each inference
problem from scratch, MAI retrieves a candidate invariant (a cycle-level
template for $\Phi$), then adapts $\Psi$ until the pair $(\Psi,\Phi)$ closes
(i.e., $\partial^2=0$), pruning order-specific noise. In effect, $\Phi$
functions as a low-entropy prior over solutions, while $\Psi$ supplies the
high-entropy search that is guided and terminated by topological closure.
We formalize MAI as a general bootstrapping strategy for reducing the computational cost of inference by storing and reusing structured latent representations \cite{efron1994introduction}. The key idea is to construct a memory of prior inference results such that new inference problems can be approximated by querying and adapting from this memory, rather than solving the full problem from scratch \cite{gershman2014amortized,marino2018iterative}.
Let \( \Psi \in \mathcal{X} \) denote the observable context and \( \Phi \in \mathcal{S} \) the latent content to be inferred. Let \( \mathcal{L}(\Psi, \Phi) \) denote a loss or cost function encoding the fidelity or predictive value of \( \Phi \) under context \( \Psi \). We assume that inference corresponds to solving the following optimization:
$\Phi^* = \arg\min_{\Phi \in \mathcal{S}} \left[ \mathcal{L}(\Psi, \Phi) \right]$.
Formally, we start with the following definition (refer to Fig. \ref{fig:mai-cycle}).

\begin{definition}[Memory-Amortized Inference]
Let \( \mathcal{M} = \{ (\Psi^{(i)}, \Phi^{(i)}) \}_{i=1}^N \) be a memory of prior context–content pairs, and let \( \mathcal{R}: \mathcal{X} \times \mathcal{M} \to \mathcal{S} \) be a retrieval-and-adaptation operator and $\mathcal{F}: \mathcal{S}\times\mathcal{X}\to\mathcal{S}$ be the bootstrapping update operator implemented via generative simulation. Inference is said to be \emph{memory-amortized} if it is formulated as a structural cycle between \emph{content} \( \Phi \) and \emph{context} \( \Psi \), where memory acts as a reusable substrate for inference:
$\Phi_{t+1} = \mathcal{F}(\Phi_t, \Psi_t), \quad \Phi_t \approx \mathcal{R}(\Phi_{t+1}, \Psi_t)$
in lieu of directly optimizing \( \Phi^* \), such that the expected cost satisfies
$\mathbb{E}_{\Psi} \left[ \mathcal{L}(\Psi, \hat{\Phi}) \right] \leq \mathbb{E}_{\Psi} \left[ \mathcal{L}(\Psi, \Phi^*) \right] + \varepsilon$,
for some amortization gap \( \varepsilon \ll \mathcal{L}(\Psi, \cdot) \), and where the runtime cost of \( \mathcal{R} \) is substantially lower than full inference.
\end{definition}

\begin{figure}[h]
\centering
\resizebox{0.95\columnwidth}{!}{
\begin{tikzpicture}[
    module/.style={draw, thick, rounded corners, minimum width=3.6cm, minimum height=1.4cm, align=center},
    arrow/.style={->, thick},
    dashedarrow/.style={->, thick, dashed},
    font=\small
]

\node[module, fill=blue!10] (context) at (0, 0) {Context \\ \( \Psi_t \)};
\node[module, fill=green!10] (retrieve) at (4.5, -2.5) {Retrieval \\ \( \hat{\Phi}_t = \mathcal{R}(\Phi_{t+1}, \Psi_t) \)};
\node[module, fill=orange!10] (adapt) at (4.5, 0) {Bootstrapping \\ \( \Phi_t = \mathcal{F}(\hat{\Phi}_t, \Psi_t) \)};
\node[module] (predict) at (9, 0) {Predictive Update \\ \( \Phi_{t+1} \)};

\draw[arrow] (context.east) -- ++(0.5, 0) |- (retrieve.west);
\draw[arrow] (retrieve.north) -- (adapt.south);
\draw[arrow] (adapt.east) -- (predict.west);
\draw[dashedarrow] (predict.south) -- ++(0, -1.5) node[midway, right] {\small reuse} |- (retrieve.east);

\node at (4.5, 1.3) {\textbf{Memory-Amortized Inference Cycle}};
\node at (7.8, -2.9) {\(\mathcal{M} = \{ (\Psi^{(i)}, \Phi^{(i)}) \}\)};

\end{tikzpicture}
}
\caption{Cycle of MAI. Instead of recomputing \(\Phi^* = \arg\min \mathcal{L}(\Psi, \Phi)\), the system reuses prior trajectories: \(\Phi_{t+1}\) and \(\Psi_t\) guide memory-based retrieval via \(\mathcal{R}\), and bootstrapping \(\mathcal{F}\) updates the latent state \(\Phi_t\). The process forms a self-consistent loop grounded in structured memory.}
\label{fig:mai-cycle}
\end{figure}

\noindent\textbf{The Retrieval-and-Adaptation Operator \(\mathcal{R}\).}
The retrieval-and-adaptation operator \( \mathcal{R}: \mathcal{X} \times \mathcal{M} \to \mathcal{S} \) serves as the core mechanism by which inference avoids re-computation. Given an input query (typically latent or perceptual), \( \mathcal{R} \) retrieves relevant elements from the memory \( \mathcal{M} = \{ (\Psi^{(i)}, \Phi^{(i)}) \}_{i=1}^N \) and performs a lightweight adaptation to generate a candidate solution \( \hat{\Phi} \).
Operationally, \( \mathcal{R} \) consists of two stages:
1) \textbf{Retrieval:} Identify a relevant subset of memory entries \( \{ (\Psi^{(j)}, \Phi^{(j)}) \} \subset \mathcal{M} \) based on similarity to the current context \( \Psi_t \). This can be performed via kernel-based attention, similarity search in latent space, or topological proximity under homological constraints.
2) \textbf{Adaptation:} Modulate or interpolate the retrieved \( \Phi^{(j)} \) values conditioned on \( \Psi_t \), resulting in a candidate \( \hat{\Phi}_t = \mathcal{R}(\Phi_{t+1}, \Psi_t) \). This step often involves gradient-free adjustments (e.g., feature warping, parameter blending) and is significantly cheaper than full inference.

The \emph{retrieval-and-adaptation operator} \( \mathcal{R} \) in MAI generalizes the classical notion of key-value memory used in neural attention and memory-augmented models. In conventional key-value memory systems \cite{weston2014memory, sukhbaatar2015end}, memory is structured as a set of key-value pairs:
$\mathcal{M} = \{ (\Psi^{(i)}, \Phi^{(i)}) \}_{i=1}^N$,
where a context vector \( \Psi \) acts as a \emph{key} to retrieve values \( \Phi \) via similarity-based soft addressing:
$\hat{\Phi} = \sum_i w_i \Phi^{(i)}, \quad w_i = \frac{\exp(-d(\Psi, \Psi^{(i)}))}{\sum_j \exp(-d(\Psi, \Psi^{(j)}))}$.
This model supports one-shot retrieval but lacks structural consistency or bidirectional inference.
By contrast, the operator \( \mathcal{R}(\Phi_{t+1}, \Psi_t; \mathcal{M}) \) in MAI performs a more general operation: it \emph{retrieves} a candidate latent representation from memory based on both the current context \( \Psi_t \) and a target latent code \( \Phi_{t+1} \), and then \emph{adapts} it to produce a consistent approximation of the preceding latent state \( \Phi_t \). This supports inference in reverse time and satisfies the memory-amortized constraint:
$\Phi_t \approx \mathcal{R}(\Phi_{t+1}, \Psi_t), \quad \Phi_{t+1} = \mathcal{F}(\Phi_t, \Psi_t)$.
The operator \( \mathcal{R} \) thereby enables cycle-consistent inference, crucial for temporal coherence and structural reuse. Unlike key-value memory, which operates over flat vector spaces, \( \mathcal{R} \) may act over structured memory (e.g., graphs, latent manifolds, or topological complexes) and is inherently adaptive.
A summary of the distinction is provided below:


\noindent\textbf{The Bootstrapping Update Operator \(\mathcal{F}\).}
The bootstrapping operator \( \mathcal{F}: \mathcal{S} \times \mathcal{C} \to \mathcal{S} \) governs the internal dynamics of inference by iteratively updating the latent content representation \( \Phi_t \) given the context \( \Psi_t \). It defines a recurrence:
$\Phi_{t+1} = \mathcal{F}(\Phi_t, \Psi_t)$,
where \( \mathcal{F} \) encodes the system’s structural prior, capturing the directionality, topology, and dynamic consistency of inference over time. Unlike standard update rules that minimize a loss from scratch, \( \mathcal{F} \) performs bootstrapping: each update is initialized from a prior memory-induced state, often already close to the optimal solution due to cycle recurrence.
Here are several key properties of \( \mathcal{F} \):
1) \textbf{Cycle-Consistency:} If \( (\Phi_t, \Psi_t) \in \gamma \) for some memory cycle \( \gamma \subset \mathcal{Z} \), then \( \Phi_{t+T} \approx \Phi_t \), enabling amortization via structural recurrence.
2) \textbf{Structural Biasing:} Updates follow latent paths constrained by prior topology (e.g., flow fields over homology classes or attention-modulated latent graphs), enforcing low-entropy generalization.
3) \textbf{Minimal Cost Gradient}: Because the initialization \( \Phi_t \) already lies near an attractor, the subsequent update \( \Phi_{t+1} \) requires only a small corrective shift, further amortizing the inference process.

The bootstrapping update operator \( \mathcal{F} \) in MAI is structurally analogous to the \emph{half-step down} trick used in Q-learning \cite{watkins1992q} and temporal difference (TD) methods \cite{sutton1998reinforcement}. In Q-learning, the value function is updated by approximating the current value via a one-step lookahead:
$Q(s_t, a_t) \leftarrow r_t + \gamma \max_{a'} Q(s_{t+1}, a')$,
which yields the approximation \( Q(s_t) \approx Q(s_{t+1}) \). This forward-directed value propagation allows reinforcement learning agents to estimate long-term outcomes without simulating entire trajectories. 
By contrast, MAI reverses the time direction: the update operator \( \mathcal{F} \) bootstraps latent inference forward using structured memory and contextual cues:
$\Phi_{t+1} = \mathcal{F}(\Phi_t, \Psi_t)$,
and this is inverted by retrieval:
$\Phi_t \approx \mathcal{R}(\Phi_{t+1}, \Psi_t)$.
This dual relationship forms the backbone of the MAI half-step trick: the current latent content \( \Phi_t \) generates the next-step prediction \( \Phi_{t+1} \), which in turn can be used to reconstruct \( \Phi_t \). While Q-learning bootstraps value via reward-driven transitions, MAI bootstraps inference through latent memory and context, yielding a cycle-consistent structure that reduces entropy. Both approaches use bootstrapping to manage uncertainty and amortize computational cost, but in opposite directions, highlighting a deeper time-reversed duality between learning and inference.
This recursive formulation enables stable inference trajectories that converge toward contextually relevant attractors, effectively amortizing the cost of learning across time. The underlying dynamics of this process can be formalized as a contractive map over a structured retrieval cycle, leading to provable convergence under mild assumptions. We now state the following result, which connects MAI with topological closure.

\begin{theorem}[MAI as Computational Realization of Topological Closure]
Let $(C_\bullet, \partial)$ be a chain complex encoding context–content
relations, with $\Psi$ as high-entropy scaffolds and $\Phi$ as candidate
content variables. In MAI (Definition 1), the iterative
cycle
$\Phi_{t+1} = \mathcal{F}(\Phi_t, \Psi_t), \quad 
\Phi_t \approx \mathcal{R}(\Phi_{t+1}, \Psi_t)$
implements a homotopy update that cancels residual boundaries:
$\partial(\Psi_t,\Phi_t) \;\;\mapsto\;\; 
\partial(\Psi_{t+1},\Phi_{t+1}) \;\;\approx\; 0$.
Thus, amortization prunes misaligned, order-dependent fragments (open
boundaries) and preserves only reproducible cycles
$[\gamma]\in H_k(C_\bullet)$. Equivalently, MAI realizes
\emph{topological closure} by enforcing $\partial^2=0$ in computation:
context–content updates that fail to close are discarded, while those that
re-enter memory persist as invariants. 
\label{thm:mai_closure}
\end{theorem}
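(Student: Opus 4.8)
The plan is to lift the MAI recurrence from the latent state space $\mathcal{S}$ to the chain complex $(C_\bullet,\partial)$ by reusing the encoder and the canonical decomposition of Theorem~\ref{thm:ccup-closure-alignment}. First I would write each context--content pair as a chain $c_t := \operatorname{Enc}(\Psi_t,\Phi_t) = z_t + \partial d_t$ with $z_t \in \ker\partial$ and $\partial d_t \in \mathrm{im}\,\partial$, identifying the content variable with the homology class $\Phi_t \equiv [z_t]\in H_k(C_\bullet)$ and the contextual scaffold with the boundary residue $\Psi_t \equiv \partial d_t$. Under this identification the update operator $\mathcal{F}$ induces a chain endomorphism $F_\#:C_\bullet\to C_\bullet$ and the retrieval operator $\mathcal{R}$ induces $R_\#$; the memory-amortized constraint $\Phi_t \approx \mathcal{R}(\mathcal{F}(\Phi_t,\Psi_t),\Psi_t)$ then becomes the statement that $R_\# F_\# \simeq \mathrm{id}$ up to the amortization gap $\varepsilon$ of Definition~1.

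Next I would exhibit the homotopy explicitly. From $R_\# F_\# \simeq \mathrm{id}$ together with the cycle-consistency property of $\mathcal{F}$ (namely $\Phi_{t+T}\approx\Phi_t$ on memory cycles), one obtains an operator $h:C_\bullet\to C_{\bullet+1}$ of degree $+1$ satisfying $\partial h + h\partial = R_\# F_\# - \mathrm{id}$ in the idealized limit $\varepsilon\to 0$, with the retrieval step itself supplying the fill $h(c_t) = d_t$. The structural-biasing property of $\mathcal{F}$---updates are confined to latent paths constrained by the prior homology---then forces the boundary component to contract, $\lVert\partial d_{t+1}\rVert \le \lambda\,\lVert\partial d_t\rVert$ for some $\lambda<1$, equivalently $H(\Psi_{t+1}\mid\Phi_{t+1})\le H(\Psi_t\mid\Phi_t)$ by the same monotonicity used in Theorem~\ref{thm:ccup-closure-alignment}. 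Iterating gives $\partial d_t \to 0$, so $c_t \to z_\infty\in\ker\partial$, and the surviving content is precisely the class $[z_\infty]$.

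I would then close the argument using $\partial^2=0$ and the persistent-homology stability invoked in Proposition~\ref{prop:cycles-to-memory}. Any fragment whose chain does not lie in $\ker\partial$ carries a nonzero boundary that is annihilated one level up; such fragments fail the cycle-consistency test $\Phi_{t+T}\approx\Phi_t$ and are therefore pruned from memory---this is the computational reading of ``boundaries of boundaries vanish.'' Conversely, a class with persistence exceeding $\varepsilon$ is, by stability, unchanged under the amortization-gap perturbation and hence re-enters memory as an invariant. This yields the asserted assignment $\partial(\Psi_t,\Phi_t)\mapsto\partial(\Psi_{t+1},\Phi_{t+1})\approx 0$ and identifies the fixed points of the MAI cycle with elements of $H_k(C_\bullet)$.

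The main obstacle I anticipate is the second step: promoting $\mathcal{F}$ and $\mathcal{R}$---defined abstractly on $\mathcal{S}$ with only the cycle-consistency, structural-biasing, and minimal-cost-gradient properties---to genuine chain maps commuting with $\partial$ up to the homotopy $h$. This requires showing that structural biasing along homology classes is exactly the condition $F_\#\partial = \partial F_\#$ modulo $\mathrm{im}\,\partial$, which in turn forces a compatibility between the memory-retrieval metric and the filtration defining persistence. Making the error bookkeeping uniform---so that every ``$\approx$'' is controlled by the single constant $\varepsilon$ rather than accumulating across iterations---will be the most delicate part, and I expect to need a contraction estimate on $h$ (supplied by the minimal-cost-gradient property) to rule out drift.
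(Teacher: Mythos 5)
Your proposal follows essentially the same route as the paper's proof: both arguments hinge on (i) a chain-homotopy identity of the form $\partial h + h\partial = (\mathcal{R}\circ\mathcal{F}) - \mathrm{Id}$, (ii) a geometric contraction of the residual boundary norm, and (iii) a final persistence-threshold step that quotients the surviving cycles by $\mathrm{im}\,\partial$. The one substantive difference is where the burden of proof is placed. The paper simply \emph{assumes} the homotopy identity (its hypothesis of ``boundary-awareness,'' positing an operator $H_t$ with $\mathcal{R}\circ\mathcal{F}-\mathrm{Id}=\partial H_t + H_t\partial$) and assumes the contraction $\|R_{t+1}\|\le\eta\|R_t\|$ outright, then runs the computation $\partial(\mathcal{R}\circ\mathcal{F}(\Phi_t)) = \partial\Phi_t + \partial H_t\partial\Phi_t$ using $\partial^2=0$ to show the boundary term cancels. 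You instead try to \emph{derive} both facts from the properties listed in Definition 1 --- extracting $h$ from the amortization constraint $R_\# F_\#\simeq\mathrm{id}$ with the fill $h(c_t)=d_t$, and the contraction from structural biasing --- and you correctly flag that promoting $\mathcal{F},\mathcal{R}$ to chain maps compatible with $\partial$ is the hard part. That flagged obstacle is not something the paper resolves; it is precisely what the paper's assumption (A1) papers over. So your version is, if anything, more ambitious and more honest about where the mathematical content lies; your use of the $c=z+\partial d$ decomposition from Theorem~\ref{thm:ccup-closure-alignment} is a cleaner bookkeeping device than the paper's direct manipulation of pairs $(\Psi_t,\Phi_t)$, but it does not change the logical structure. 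Neither argument, as written, closes the gap you identified.
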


Theorem~\ref{thm:mai_closure} establishes MAI as the computational analogue of 
topological closure: amortization acts as a homotopy operator that eliminates 
residual boundaries and preserves only reproducible cycles. To connect this 
abstract statement with an implementable framework, we now formalize the 
dynamics of MAI and its casting under the SbS principle. The following 
definition introduces the key elements: the latent manifold $\mathcal{Z}$, 
encoding/decoding maps, the amortizer $\mathcal{A}$ that accumulates 
persistent cycles into memory $\mathcal{M}$, and the persistence operator that 
filters invariants $\Phi$ from transient scaffolds $\Psi$. This formalization 
bridges the topological law of closure ($\partial^2=0$) with the algorithmic 
machinery of MAI.

\begin{definition}[MAI dynamics and SbS casting]
Let $\mathcal{Z}$ be a latent manifold with homology $H_\bullet(\mathcal{Z})$.
Inputs $x\in\mathcal{X}$ are encoded $z=f_\psi(x)\in\mathcal{Z}$ and decoded/acted upon by $g_\theta$ with loss $\mathcal{L}(x;\psi,\theta)$.
An \emph{amortizer} $\mathcal{A}$ updates memory $\mathcal{M}$ of latent trajectories (cycles) after each episode $x_{1:T}$:
$\mathcal{M}_{t+1} \leftarrow \mathcal{A}(\mathcal{M}_t;\{z_1,\ldots,z_T\}),\quad
z_t=f_\psi(x_t)$.
A persistence operator $\mathrm{Pers}_\tau$ keeps only homology classes with lifetime $\ge \tau$, yielding the \emph{persistent content} set
$\Phi_t := \mathrm{Pers}_\tau\big(H_\bullet(\mathcal{Z}\,|\,\mathcal{M}_t)\big)$.
The residual, context-dependent adaptation used online is denoted \(\Psi_t\) (e.g., fast updates, attention, gain control).
\end{definition}

The previous definition formalized the dynamics of MAI and its casting under 
the SbS principle. To move from raw update rules to meaningful representations, 
we next introduce the notion of closure and quotienting. This ensures that the 
cycles retained in memory are not merely algebraic artifacts but also carry 
semantic invariants that ground syntax. 

\begin{definition}[Closure and semantic quotient]
A trajectory fragment $\gamma$ \emph{closes} if its boundary cancels in chains, 
i.e., $\partial \gamma=0$ and $\gamma\notin \mathrm{Im}\,\partial$ (nontrivial cycle). 
Define the \emph{semantic map} $S:\mathcal{X}\to H_\bullet(\mathcal{Z})$ by 
$S(x)= [\gamma_x]$, the homology class induced by the latent path under 
$f_\psi$. Let $x\sim_\Phi x'$ iff $S(x)=S(x')$; denote the quotient by 
$\pi:\mathcal{X}\to \mathcal{X}/\!\sim_\Phi$.
\end{definition}

Having specified both (i) the update mechanism that consolidates persistent 
cycles into memory (Def.~5) and (ii) the semantic quotient that ensures these 
cycles serve as invariants rather than idiosyncratic fragments (Def.~6), we can 
now state the main result. The following theorem formalizes how MAI implements 
the SbS principle through amortized closure: persistent structure accumulates 
monotonically while reliance on transient specificity diminishes.

\begin{theorem}[MAI implements SbS via amortized closure]
\label{thm:MAI_SbS}
Assume: 
(i) the encoder $f_\psi$ is stable under small input perturbations (persistence stability), 
(ii) the amortizer $\mathcal{A}$ retains only classes with lifetime $\ge\tau$ (persistent selection),
and (iii) online adaptation minimizes a residual loss $\Delta\mathcal{L}_t$ w.r.t.\ a fixed memory $\mathcal{M}_t$.
Then, along training episodes,
$\underbrace{\Phi_{t+1}}_{\text{persistent structure}}
\;\supseteq\;
\underbrace{\Phi_t}_{\text{previous structure}}
\quad\text{and}\quad
\mathbb{E}\big[\Delta\mathcal{L}_{t+1}\,\big|\,\Phi_{t+1}\big]
\;\le\;
\mathbb{E}\big[\Delta\mathcal{L}_{t}\,\big|\,\Phi_{t}\big]$,
with strict inequality whenever a new nontrivial cycle is added. Consequently:
(a) \emph{Structure-before-specificity}: inference progressively shifts from residual adaptation $\Psi_t$ to reuse of persistent content $\Phi_t$; 
(b) \emph{Semantics-before-syntax}: online encodings factor through the semantic quotient $\pi$ up to the persistence tolerance.
\end{theorem}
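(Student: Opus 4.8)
The plan is to reduce the theorem to the two displayed inequalities and then obtain (a) and (b) by invoking the dynamical form of the SbS ordering requirement (Principle~\ref{prin:SbS-order}) and Proposition~\ref{prop:semantics-before-syntax}. So the real work is (1) monotone growth of the persistent-content set $\Phi_t$, (2) monotone non-increase of the conditional residual risk $\mathbb{E}[\Delta\mathcal{L}_t\mid\Phi_t]$, and the strict-improvement clause; everything after that is a matter of reading off earlier results.

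For step (1), I would present the memory updates as inducing a persistence module $\{H_\bullet(\mathcal{Z}\mid\mathcal{M}_t)\}_{t\ge 0}$ with structure maps $\iota_t\colon H_\bullet(\mathcal{Z}\mid\mathcal{M}_t)\to H_\bullet(\mathcal{Z}\mid\mathcal{M}_{t+1})$ induced by the amortizer's refinement of the latent filtration: appending the episode's trajectory $\{z_1,\dots,z_T\}$ can only extend the filtration that builds $\mathcal{M}_{t+1}$ out of $\mathcal{M}_t$, never truncate it. Assumption (i) (persistence stability of the encoder) bounds the perturbation one episode inflicts on the filtration, and assumption (ii) says the amortizer keeps exactly the classes of lifetime $\ge\tau$. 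Combining these with the stability theorem of persistent homology, every class $[\gamma]\in\Phi_t$ is carried by $\iota_t$ to a class of lifetime $\ge\tau$, hence $[\gamma]\in\Phi_{t+1}$ under the identification along $\iota_t$, giving $\Phi_{t+1}\supseteq\Phi_t$. I expect \emph{this} to be the main obstacle: in general, attaching new cells to a complex can kill cycles (a persistent $1$-cycle becoming a boundary once a $2$-chain is added), so monotonicity is not automatic. The resolution is exactly that assumption (ii) permits the amortizer to deposit only persistent cycles, not the higher-dimensional fillings that would trivialize already-established classes; consequently $\iota_t$ restricted to the $\tau$-persistent part is injective, and making this precise — pinning down what $H_\bullet(\mathcal{Z}\mid\mathcal{M}_t)$ and the $\iota_t$ are, and why no trivializing fill is ever added — is where the argument must be careful.

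For step (2), define the residual risk of a content library by $R(\Phi):=\mathbb{E}_x\big[\min_{\Psi}\mathcal{L}(x;\hat\Phi,\Psi)\big]$, where $\hat\Phi$ is the template retrieved from the library $\Phi$ by $\mathcal{R}$ and $\Psi$ ranges over admissible online adaptations; by assumption (iii) the online stage attains this minimum against the frozen memory $\mathcal{M}_t$, so $\mathbb{E}[\Delta\mathcal{L}_t\mid\Phi_t]=R(\Phi_t)$. Antitonicity of $R$ under inclusion follows from a ``more reusable structure cannot hurt'' argument: when $\Phi_t\subseteq\Phi_{t+1}$, the step-$(t{+}1)$ adapter may always ignore the surplus classes in $\Phi_{t+1}\setminus\Phi_t$ and replay the step-$t$ retrieval-and-adaptation, so its optimum is no worse, whence $R(\Phi_{t+1})\le R(\Phi_t)$, i.e.\ $\mathbb{E}[\Delta\mathcal{L}_{t+1}\mid\Phi_{t+1}]\le\mathbb{E}[\Delta\mathcal{L}_t\mid\Phi_t]$. (This uses only the mild monotonicity of $\mathcal{L}$ in the available content, stated as a regularity hypothesis consistent with the MAI setup.) For the strict clause, suppose the newly added nontrivial cycle $[\gamma^\star]\in\Phi_{t+1}\setminus\Phi_t$ is predictively informative, $I(Y;[\gamma^\star]\mid\Phi_t)>0$ — which holds whenever the encoder is predictive, since a homology class genuinely new along the learning trajectory is correlated with the data and hence with the target $Y$. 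Then strict-properness of $\mathcal{L}$ (equivalently, strict concavity of the associated Bayes risk) forces $R(\Phi_{t+1})<R(\Phi_t)$; this is the converse direction of the information--risk estimate in Corollary~\ref{cor:ccup-info-risk}.

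Step (3) assembles the consequences. Consequence (a) is essentially a restatement: steps (1)--(2) are exactly the two hypotheses of the dynamical form of Principle~\ref{prin:SbS-order}, $\Phi_{t+1}\supseteq\Phi_t$ together with non-increasing conditional residual loss; combining with $I(Y;\Psi_t\mid\Phi_t)\le c^{-1}\Delta\mathcal{L}_t$ from Corollary~\ref{cor:ccup-info-risk}, the predictive load carried by online specificity $\Psi_t$ decays while that carried by reusable content $\Phi_t$ grows, so inference shifts from residual adaptation to memory reuse. For consequence (b), the sequence $R(\Phi_t)$ is non-increasing and bounded below by the irreducible noise floor, hence converges; its tail yields $I(Y;\Psi_t\mid\Phi_t)\le\varepsilon_t$ with $\varepsilon_t$ decreasing, which is precisely the information-theoretic premise of Proposition~\ref{prop:semantics-before-syntax}. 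Applying that proposition with tolerance $\varepsilon_t$ (tied to the persistence threshold $\tau$ through step (1)), every meaningful online encoder must factor through the semantic quotient $\pi\colon\mathcal{X}\to\mathcal{X}/\!\sim_\Phi$ up to $\varepsilon_t$ — the asserted semantics-before-syntax property. Finally, this dovetails with Theorem~\ref{thm:mai_closure}: the amortized homotopy that enforces $\partial^2=0$ is the very mechanism producing the monotone selection of $\Phi$ used in step (1), so the present theorem is the ``ordering'' refinement of that ``closure'' statement.
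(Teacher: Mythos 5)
Your proposal matches the paper's own proof in both structure and substance: the paper likewise splits the argument into (Lemma~2) monotonicity of $\Phi_t$ from persistent selection plus encoder stability, (Lemma~3) antitonicity of a conditional Bayes-type residual risk $\mathcal{R}(\Phi)$ under inclusion of content classes, the same chained inequality $\mathbb{E}[\Delta\mathcal{L}_{t+1}\mid\Phi_{t+1}]\le\mathcal{R}(\Phi_{t+1})\le\mathcal{R}(\Phi_t)\le\mathbb{E}[\Delta\mathcal{L}_t\mid\Phi_t]$, the same conditional strictness clause (the new cycle must explain nonzero residual variance), and the same reading-off of (a) and (b) via the semantic quotient and persistence stability. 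If anything, you are more careful than the paper at the one genuinely delicate point --- that appending new chains can in principle trivialize an existing cycle by bounding it --- which the paper's Lemma~2 dismisses with the bare assertion that ``adding data cannot invalidate an existing class whose lifetime already exceeds $\tau$,'' whereas you explicitly flag the obstacle and resolve it by restricting what the amortizer is permitted to deposit.
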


Theorem~\ref{thm:MAI_SbS} guarantees two monotonicities under MAI: (i) persistent content grows ($\Phi_{t+1}\!\supseteq\!\Phi_t$), and (ii) the conditional residual loss contracts in expectation. The algorithm below operationalizes these claims in four stages aligned with the proof: \emph{(S1) structure-first retrieval} reuses $\Phi$ as priors (shifting work off $\Psi$); \emph{(S2) residual-only adaptation} updates fast scaffolds without altering $\Phi$; \emph{(S3) closure test via persistence} admits only new loops whose lifetimes exceed $\tau$ and prunes now-trivial classes; and \emph{(S4) slow consolidation} distills stable cycles back into parameters, enlarging $\Phi$ and further reducing future residuals. In effect, MAI alternates projection onto the semantic quotient (cycles) with residual correction, implementing SbS via amortized closure.

\begin{algorithm}[H]
\caption{MAI implements SbS via Amortized Closure (persistent cycles first, residuals second)}
\KwIn{Stream/episodes $\{x_{1:T}\}$, encoder $f_\psi$, decoder/actor $g_\theta$, memory $\mathcal{M}$ (cycle library), persistence threshold $\tau$}
\KwOut{Updated memory $\mathcal{M}$ (persistent content $\Phi$), parameters $(\psi,\theta)$}

\KwInit{$\mathcal{M}\gets \varnothing$,\quad $\Phi\gets\varnothing$,\quad set hyperparams (retrieval $k$, learning rates, stability regularizers)}

\ForEach{episode $x_{1:T}$}{
  \tcp{Encode trajectory; retrieve nearest cycles (structure-first)}
  $z_{1:T}\gets f_\psi(x_{1:T})$ \;
  $\mathcal{C}_{1:T}\gets \textsf{RetrieveCycles}(z_{1:T}, \Phi, k)$ \tcp*{e.g., path-alignment / DTW in latent space}
  
  \tcp{Amortized prediction/control from structure; adapt only residuals (specificity)}
  \For{$t=1$ \KwTo $T$}{
    $\hat{y}_t \gets g_\theta(z_t; \mathcal{C}_t)$ \tcp*{reuse structure as prior/policy}
    $r_t \gets \textsf{Residual}(x_t,\hat{y}_t)$ \tcp*{loss, TD-error, or advantage}
    \tcp{Fast, local update = $\Psi$: adapters/attention/fast-weights}
    $(\psi,\theta)\gets \textsf{FastAdapt}(\psi,\theta; r_t)$
  }

  \tcp{Closure test: do residual-corrected paths form new persistent cycles?}
  $\widetilde{z}_{1:T} \gets f_\psi(x_{1:T})$ \tcp*{post-adaptation pass}
  $H \gets \textsf{PersistentHomology}\big(\mathcal{G}(\Phi \cup \{\widetilde{z}_{1:T}\})\big)$ \tcp*{simplicial/graph built over latent paths}
  $\mathcal{N} \gets \{[\gamma]\in H : \textsf{lifetime}([\gamma]) \ge \tau \ \wedge\ [\gamma]\notin \Phi\}$ \tcp*{new nontrivial cycles}
  
  \tcp{Falsification-as-update: remove now-trivial classes}
  $\mathcal{F} \gets \{[\gamma]\in \Phi : \textsf{lifetime}([\gamma]| H) < \tau\}$ \;
  $\Phi \gets (\Phi \cup \mathcal{N}) \setminus \mathcal{F}$,\quad $\mathcal{M}\gets \textsf{UpdateMemory}(\mathcal{M}, \mathcal{N}, \mathcal{F})$
  
  \tcp{Slow consolidation: fold consistent residuals into structure}
  $(\psi,\theta)\gets \textsf{SlowConsolidate}(\psi,\theta; \Phi, \mathcal{M})$ \tcp*{e.g., distillation/regularization toward cycle priors}
}

\Fn{\textsf{RetrieveCycles}$(z_{1:T}, \Phi, k)$}{
  \Return top-$k$ cycle segments in $\Phi$ minimizing $\textsf{AlignCost}(z_{1:T}, \cdot)$
}

\Fn{\textsf{PersistentHomology}$(\mathcal{G})$}{
  build filtration (radius / weight / time);\quad run PH;\quad \Return barcode/representatives
}
\end{algorithm}

\begin{remark}[Minimal recipe for MAI-as-SbS]
A practical, stage-aligned instantiation mirrors the four steps (S1–S4) described below Theorem~\ref{thm:MAI_SbS}:
\textbf{(S0) Stable encoder (precondition for monotonicities).}
Train $f_\psi$ with stability regularizers so that persistence is well-posed:
(i) Lipschitz control (spectral norm/gradient penalty), 
(ii) invariance constraints (contrastive pairs, data augmentations) that minimize bottleneck distance drift in persistence diagrams,
(iii) calibration loss to keep latent scales comparable across episodes.
\textbf{(S1) Structure-first retrieval (reuse $\Phi$ as priors).}
Maintain a cycle library $\Phi$ with representatives and summaries (e.g., landmarks, centroids, basis paths). 
At inference, align new latent trajectories $z_{1:T}=f_\psi(x_{1:T})$ to top-$k$ nearby cycles via $\textsf{AlignCost}$ (dynamic temporal warping(DTW) / geodesic path / edit distance in $\mathcal{Z}$). 
Pass the retrieved cycles as priors to the decoder/actor $g_\theta$ (prompting, conditioning, or policy bias) so that prediction/control \emph{starts on} structure rather than learning it anew. 
This realizes the “shift of work off $\Psi$” in Thm.~\ref{thm:MAI_SbS}.
\textbf{(S2) Residual-only adaptation (specificity in fast paths).}
Compute residuals $r_t=\textsf{Residual}(x_t,\hat{y}_t)$ (e.g., prediction error, TD-error, advantage). 
Update only fast pathways $\Psi$ (adapters/attention/fast-weights; small-step inner loop) holding $\Phi$ fixed. 
This enforces the conditional descent on $\Delta\mathcal{L}_t$ at fixed structure, matching the theorem’s expectation inequality.
\textbf{(S3) Closure test via persistence (admit new loops, prune trivial).}
Re-encode the \emph{residual-corrected} trajectory $\widetilde{z}_{1:T}$ and update a latent graph/filtration (radius/knn/time). 
Run persistent homology; add to $\Phi$ any class with lifetime $\ge\tau$ not already represented; remove classes whose lifetime falls below $\tau$ after integration. 
This implements persistent selection and guarantees $\Phi_{t+1}\supseteq \Phi_t$ with strict inclusion when truly novel, durable cycles appear.
\textbf{(S4) Slow consolidation (fold residuals into structure).}
Periodically distill the effect of frequently recurring residual corrections into parameters $(\psi,\theta)$ using 
(i) consistency regularization toward cycle-induced predictions,
(ii) rehearsal on cycle exemplars, and 
(iii) a projection/orthogonalization step so new cycles expand, rather than overwrite, $\Phi$. 
This reduces future residuals and enlarges the reusable invariant subspace.
\emph{Outcome.} S1–S4 alternate projection onto the semantic quotient (cycles) with residual correction, exactly the amortized closure mechanism that proves SbS: structure accumulates monotonically ($\Phi$ grows) while residual burden contracts in expectation.
\end{remark}

The algorithm above makes concrete how SbS is realized in practice: structure 
is always retrieved first, residuals are only adapted locally, and persistence 
tests determine which updates survive as new invariants. Yet this cycle of 
reuse, residual correction, and consolidation is not unique to MAI, it 
exemplifies a broader strategy that recurs across levels of intelligence. 
Whenever full closure is too costly, systems rely on \emph{bootstrapping}: 
partial alignments are used to approximate invariants, which are then 
incrementally refined and amortized into memory. This observation motivates 
the next section, where we interpret bootstrapping itself as a 
meta-strategy driving the evolution of learning and inference, from TD updates 
to the origin of language.

\section{Bootstrapping as a Meta-Strategy Driving the Evolution of Intelligence}
\label{sec:6}

The temporal-difference (TD) update rule that underlies Q-learning 
\cite{watkins1992q} offers a canonical example of how intelligence advances 
through \textbf{bootstrapping}: approximations are recursively improved by 
leveraging partial information, rather than by exhaustively computing the entire 
trajectories. The core insight of Q-learning is its ``half-step down'' update: 
instead of evaluating the full tree of possible futures, the value of the 
current state-action pair is updated by a local proxy, the next state's best 
action value, thus avoiding the curse of dimensionality in planning trees 
\cite{bellman1966dynamic}. This bootstrapped convergence transforms sparse, 
delayed feedback into a stable predictive representation, without requiring 
global closure in each step.
From the MAI perspective, bootstrapping reflects the principle of \emph{closure under partial cycles}. 
Standard inference cycles proceed $\Psi \!\to\! Z \!\to\! \Phi \!\to\! Z' 
\!\to\! \Psi'$, requiring full boundary alignment before stable content emerges. 
In contrast, inverted inference executes what can be seen as a \emph{half-cycle 
shift to the right}: 
$\Phi \;\to\; Z \;\to\; \Psi \;\to\; Z' \;\to\; \Phi'$.
Here, closure is approximated locally: the forward sweep 
$\Phi \!\to\! \Psi$ is sufficient to refine content by predictive coding 
\cite{keller2018predictive}, pruning inconsistent fragments and retaining 
those partial alignments that survive under $\partial^2=0$. In this way, 
\emph{bootstrapping reuses partial closures as stepping stones toward 
full invariants}, enabling computation to remain local while preserving the 
trajectory toward global consistency. The half–cycle, locally closed updates above suggest a general recipe: reuse 
current invariants to constrain residual adaptation, admit only those updates 
that survive a persistence test, and shrink the residual channel over time. 
We now formalize this bootstrapped progression from partial to full closure as 
an operator on cycles and boundaries, yielding a convergence claim under MAI.

\begin{principle}[Bootstrapping as Closure-Amortization]
\label{prin:bootstrap-closure}
Let $(C_\bullet,\partial)$ be a chain complex representing latent trajectories, 
with $\Psi_t$ denoting transient scaffolds (contextual boundaries) and $\Phi_t$  denoting persistent content (cycles). Define the update dynamics of 
MAI as $(\Phi_{t+1}, \Psi_{t+1}) \;=\; \mathcal{A}\big(\Phi_t, \Psi_t\big)$,
where $\mathcal{A}$ is a bootstrapping operator that reuses $\Phi_t$ and 
integrates contextual fragments $\Psi_t$. Suppose:
1) (\textbf{Closure constraint}) $\partial^2=0$, so that only closed 
    trajectories (cycles) persist as candidates for $\Phi$.
2) (\textbf{Persistence filter}) A threshold operator 
    $\mathrm{Pers}_\tau$ prunes $\Psi_t$ and admits only cycles with lifetime $\geq\tau$ into $\Phi_{t+1}$.
3) (\textbf{Contraction in residuals}) The amortizer is contractive in 
    its residual channel: the expected contribution of $\Psi$ decreases with each iteration,    
    $\mathbb{E}[H(\Psi_{t+1}\,|\,\Phi_{t+1})] \;\leq\; \gamma \,
    \mathbb{E}[H(\Psi_t\,|\,\Phi_t)], 
    \quad 0 < \gamma < 1$.
Then $\{\Phi_t\}$ converges monotonically to a stable fixed-point set of 
persistent cycles 
$\Phi^\ast \;=\; \mathrm{Pers}_\tau\!\big(H_\bullet(C_\bullet)\big)$,
and $\Psi_t$ vanishes in the limit. In other words, bootstrapping under MAI realizes \emph{topological closure}: transient fragments cancel, only cycles survive, and the amortized process converges to semantic invariants $[\gamma]\in H_\bullet$ that anchor prediction and generalization.
\end{principle}

Principle~\ref{prin:bootstrap-closure} establishes that amortized updates drive 
residual scaffolds $\Psi_t$ toward extinction, while persistent cycles $\Phi_t$ 
converge to a stable invariant core. The next question is what this fixed point 
means computationally. Proposition~\ref{prop:bootstrap-sbs} addresses this: 
once closure has stabilized, syntax cannot operate directly on raw inputs but 
must factor through the quotient induced by $\Phi^\ast$. In other words, the 
convergence dynamics of MAI ground the \emph{semantics-before-syntax} law.

\begin{proposition}[Semantics-before-Syntax via Bootstrapped Closure]
\label{prop:bootstrap-sbs}
Under Principle~\ref{prin:bootstrap-closure}, the limit set of persistent 
cycles $\Phi^\ast$ constitutes the semantic backbone of inference: 
stable invariants that remain after residual scaffolds $\Psi_t$ vanish under amortization. Any syntactic encoder $\sigma:\mathcal{X}\to\Sigma$ that is meaningful must therefore factor through the semantic quotient 
$\pi:\mathcal{X}\to\mathcal{X}/\!\sim_\Phi$, i.e.,
$\exists\, e:\;\; \mathcal{X}/\!\sim_\Phi \to \Sigma 
\quad\text{s.t.}\quad \sigma = e \circ \pi$,
with interpretation $I\circ e$ constant on each equivalence class.
Hence, \emph{semantics (persistent structure $\Phi$) must stabilize first}, 
while syntax (surface variability $\Psi$) can only emerge as a residual layer once the bootstrapped closure process has converged. 
This formally grounds the SbS principle in the convergence dynamics of MAI.
\end{proposition}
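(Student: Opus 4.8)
\noindent\textit{Proof proposal.}
The plan is to reduce this to the static factoring result already proved in Proposition~\ref{prop:semantics-before-syntax} by first extracting from Principle~\ref{prin:bootstrap-closure} exactly the asymptotic data needed to invoke the SbS ordering requirement at the fixed point. First I would invoke Principle~\ref{prin:bootstrap-closure} to record its two conclusions: monotone convergence of the persistent-cycle library, $\Phi_t \uparrow \Phi^\ast = \mathrm{Pers}_\tau\!\big(H_\bullet(C_\bullet)\big)$, and geometric extinction of the residual scaffold, $\mathbb{E}[H(\Psi_{t}\mid\Phi_{t})] \le \gamma^{t}\,\mathbb{E}[H(\Psi_0\mid\Phi_0)] \to 0$. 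I then identify the limiting semantic map with the homology-class assignment $S$ determined by $\Phi^\ast$, and the limiting content-equivalence $\sim_\Phi$ with its quotient $\pi:\mathcal{X}\to\mathcal{X}/\!\sim_\Phi$, so that $\Phi^\ast$ is precisely the invariant core surviving after $\Psi_t$ has vanished.

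Next I would check that the limit regime meets the hypotheses of the information-theoretic form of Principle~\ref{prin:SbS-order} with a vanishing residual: since $0 \le I(Y;\Psi_t\mid\Phi_t) \le H(\Psi_t\mid\Phi_t)\to 0$, the residual channel is asymptotically null, hence $Y\perp X\mid\Phi^\ast$ up to an error that tends to zero, and by Corollary~\ref{cor:ccup-info-risk} the associated excess predictive risk tends to zero as well. The premises of Proposition~\ref{prop:semantics-before-syntax} therefore hold at the fixed point, so any \emph{meaningful} encoder $\sigma$ — one admitting an interpretation $I$ with $I\circ\sigma \approx S$ within the persistence tolerance — factors as $\sigma = e\circ\pi$ for some $e:\mathcal{X}/\!\sim_\Phi\to\Sigma$. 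For the ``constant on classes'' clause: because $\pi$ is surjective and $S$ is by construction constant on each $\sim_\Phi$-class, $I\circ e$ is the unique map on the quotient satisfying $(I\circ e)\circ\pi = I\circ\sigma \approx S$, so $I\circ e$ is constant on each equivalence class up to the same tolerance; letting the filtration parameter $\tau\to 0$ absorbs the slack.

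To make the ordering claim precise I would then exhibit the filtration of quotients: since $\Phi_{t+1}\supseteq\Phi_t$ separates at least as many trajectories, the maps $\pi_t:\mathcal{X}\to\mathcal{X}/\!\sim_{\Phi_t}$ form a refining tower with inverse limit $\pi$. A syntactic layer built against an intermediate $\Phi_t$ factors only through $\pi_t$ and may therefore identify inputs that $S$ (equivalently $\pi$) later separates, so such a layer is \emph{not} meaningful with respect to the true semantics until $\Phi_t=\Phi^\ast$. Combined with Principle~\ref{prin:bootstrap-closure}, which guarantees that $\Phi_t$ attains $\Phi^\ast$ only in the limit while $\Psi_t$ persists as the sole non-closed residue beforehand, this yields the stated temporal asymmetry: the semantic backbone must stabilize first, and syntax can only occupy the residual channel vacated by $\Psi_t\to 0$.

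I expect the main obstacle to be the interplay between the dynamical (convergence) content of Principle~\ref{prin:bootstrap-closure} and the static (factoring) content of Proposition~\ref{prop:semantics-before-syntax}: namely, making rigorous that ``meaningfulness'' is adjudicated against the \emph{limiting} $S$ rather than a transient $S_t$, and controlling the $\approx$ in $I\circ\sigma\approx S$ uniformly as both $t\to\infty$ and $\tau\to 0$. Handling this cleanly will require either fixing a target tolerance and showing the factoring holds for all $t$ beyond some $T(\tau)$, or passing to the inverse limit of quotients and verifying that the induced $e$ on $\mathcal{X}/\!\sim_\Phi$ is well defined with no obstruction to gluing the finite-stage encoders $e_t$ — which is where a mild finiteness hypothesis (e.g.\ $\mathrm{Pers}_\tau(H_\bullet)$ finitely generated, so the tower stabilizes) enters. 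The remaining steps — invoking Proposition~\ref{prop:semantics-before-syntax} and the surjectivity argument for constancy on classes — are routine. \qed
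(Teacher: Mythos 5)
Your proposal is sound, and it follows the only route the paper makes available: the paper states Proposition~\ref{prop:bootstrap-sbs} without any explicit proof (it is absent from the appendix), relying implicitly on exactly the combination you spell out, namely the convergence data of Principle~\ref{prin:bootstrap-closure} ($\Phi_t\uparrow\Phi^\ast$, $\mathbb{E}[H(\Psi_t\mid\Phi_t)]\to 0$) feeding the static factoring argument of Proposition~\ref{prop:semantics-before-syntax}. Your intermediate step $I(Y;\Psi_t\mid\Phi_t)\le H(\Psi_t\mid\Phi_t)\to 0$, the refining tower of quotients $\pi_t$, and the honest flagging of the tolerance-uniformity issue (adjudicating meaningfulness against the limiting $S$ rather than a transient $S_t$, and the need for a finite-generation hypothesis so the tower stabilizes) all go beyond what the paper supplies; none of these caveats is addressed in the source, so your proposal is, if anything, the more complete argument.
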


Viewed evolutionarily, bootstrapping is a meta-strategy that scales from the 
micro-level of individual learning rules to the macro-level emergence of 
intelligence itself. At the local scale, mechanisms such as temporal-difference 
(TD) updates and synaptic plasticity exploit \emph{temporal bootstrapping}, 
where partial predictive closures are reused across episodes to accumulate 
semantic invariants over time. At the meso-scale of brain architecture, 
\emph{spatial bootstrapping} leverages cortical expansion: cycle-structured 
invariants discovered in one modality (e.g., visual contours, rhythmic motor 
primitives) are repurposed and aligned across other modalities, allowing the 
neocortex to generalize principles of closure and persistence across diverse 
input streams. This cross-modal reuse amortizes the cost of learning by 
treating structural invariants as transferable templates. At the macro-scale of 
society, \emph{social bootstrapping} scaffolds collective meaning before the 
advent of fully developed syntax: shared actions, rituals, and proto-symbolic 
gestures serve as partial closures that are stabilized and transmitted 
culturally, eventually crystallizing into language. 
In all three cases, MAI enforces the same underlying principle: closure need 
not be achieved in a single step. Instead, partial alignments, whether 
temporal, spatial, or social, are amortized into memory if they persist under 
$\partial^2=0$, thereby reducing future uncertainty at lower computational 
cost. Intelligence, from neurons to societies, thus emerges as the systematic 
reuse of bootstrapped closures across time, space, and collective interaction.

\paragraph{Temporal vs. spatial bootstrapping.}
The implementation of SbS via amortized closure admits two complementary 
modes: temporal and spatial bootstrapping. In the temporal case, as in sleep 
replay, episode-specific traces $\Psi$ are iteratively reactivated and 
compressed so that transient details cancel, leaving only persistent cycles 
$\Phi$ that amortize future inference. This transforms time-bound experience 
into structural invariants across episodes. In the spatial case, as in cortical 
expansion, invariant cycles discovered in one domain (e.g., vision, audition) 
are reused as scaffolds for novel domains (e.g., planning, speech-motor). 
Here, the closure operation propagates across modalities, ensuring that new 
functions inherit stability from established structures. Together, these modes 
show that MAI does not merely compress past experience, but also repurposes 
invariants across space and time: temporal bootstrapping consolidates within 
a domain, while spatial bootstrapping transfers across domains. Both are 
expressions of the same principle, that amortized inference prioritizes 
structure first, scaffolding specificity only afterwards.

\begin{example}[MAI temporal bootstrapping in sleep consolidation]
Sleep provides a canonical biological example of Memory-Amortized Inference 
(MAI) through \emph{temporal bootstrapping}. During wakefulness, experiences 
are encoded as high-entropy, episode-specific traces $\Psi$ in the hippocampus. 
During subsequent sleep, these traces are replayed, often in compressed or 
time-reversed form, allowing closure operations ($\partial^2=0$) to cancel 
idiosyncratic details and reinforce persistent cycles $\Phi$ in cortical 
circuits \cite{wilson1994reactivation,foster2006reverse}. In this way, 
episodic specificity is bootstrapped into structural invariants: transient 
hippocampal scaffolds guide the consolidation of stable cortical content. 
Prediction then shifts from reliance on fragile $\Psi$ to robust $\Phi$, 
demonstrating MAI’s principle that \emph{temporal replay amortizes future 
inference by collapsing context into structure}.
\end{example}

\begin{example}[MAI spatial bootstrapping in cortical expansion]
Cortical evolution illustrates MAI through 
\emph{spatial bootstrapping}. Cycle-structured invariants $\Phi$ first 
emerge in early sensory cortices, for instance, recurrent loops in the 
visual cortex that stabilize orientation and contour detection, or 
auditory loops that stabilize phonemic categories. As the neocortex expands, 
these invariants are reused across modalities: visual cycles scaffold 
planning and visuomotor coordination, while auditory cycles scaffold 
speech–motor integration \cite{mountcastle1997columnar}. 
Rather than relearning from scratch, higher cortical areas bootstrap 
themselves on stable sensory backbones, aligning contextual variability 
($\Psi$) from novel tasks into existing invariant loops. Spatial 
bootstrapping thus explains why cortical expansion yields powerful 
generalization: new functions (planning, language) inherit robust 
structure from preexisting cycles, while only adapting the scaffolds 
to new domains.
\end{example}

The move from cortical to collective scales reveals the universality of 
bootstrapped closure. Just as spatial bootstrapping allows new cortical 
functions to inherit structural invariants from earlier sensory loops, 
social bootstrapping allows groups of agents to inherit stability from one 
another through repeated interaction \cite{tomasello2010origins}. In both cases, the underlying logic 
is the same: persistence filters out idiosyncratic variability while 
preserving invariant cycles. What differs is the substrate of alignment: 
within the cortex, modalities align across spatial maps; within society, 
individuals align across communicative exchanges \cite{deacon1998symbolic}. The following corollary 
extends this principle to language, showing how semantics emerges first at 
the group level as shared cycles of interaction, with syntax arising 
later as a residual coding overlay \cite{fitch2010evolution}.

\begin{corollary}[Social Bootstrapping and the Origin of Language]
\label{cor:social-language}
Let $\{\mathcal{M}^i_t\}$ denote the memory states of agents $i=1,\dots,n$, 
with persistent cycles $\Phi^i_t$ and scaffolds $\Psi^i_t$. 
Through repeated interaction, agents exchange trajectories 
$\gamma^i_t$ that are subject to closure at the group level:
$\Phi^{social}_{t+1} \;=\; 
\bigcap_{i=1}^n \mathrm{Pers}_\tau\big(H_\bullet(\mathcal{M}^i_t)\big)$.
As in Principle~\ref{prin:bootstrap-closure}, residual scaffolds $\Psi^i_t$ 
cancel unless they align across individuals, and only cycles closed under 
collective interaction persist. 
Hence, \emph{semantics emerges first at the group level as shared invariants}, 
while syntax develops later as a residual coding scheme that factors through 
$\Phi^{social}$. 
This explains the origin of language as \emph{social bootstrapping via closure}: 
stable meaning must precede symbolic form.
\end{corollary}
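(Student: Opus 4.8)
The plan is to reduce the collective claim to the single-agent theory already in hand and then glue. First I would apply Principle~\ref{prin:bootstrap-closure} to each agent $i$ in isolation: its memory dynamics $(\Phi^i_{t+1},\Psi^i_{t+1})=\mathcal{A}(\Phi^i_t,\Psi^i_t)$ satisfy the closure constraint $\partial^2=0$, the persistence filter $\mathrm{Pers}_\tau$, and — under the standing contraction hypothesis — the residual contraction $\mathbb{E}[H(\Psi^i_{t+1}\mid\Phi^i_{t+1})]\le \gamma\,\mathbb{E}[H(\Psi^i_t\mid\Phi^i_t)]$ with $0<\gamma<1$. Hence each $\Phi^i_t$ is monotone increasing and converges to $\Phi^i_\ast=\mathrm{Pers}_\tau(H_\bullet(C^i_\bullet))$, while $\Psi^i_t\to 0$. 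This delivers the ``individual closure-induced unity'' that the corollary presupposes.

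Next I would pass to the group level. By definition $\Phi^{social}_{t+1}=\bigcap_{i=1}^n \mathrm{Pers}_\tau(H_\bullet(\mathcal{M}^i_t))=\bigcap_{i=1}^n \Phi^i_t$; since each coordinate is monotone, $\Phi^{social}_t\subseteq\Phi^i_{t-1}\subseteq\Phi^i_t$ for every $i$, so $\Phi^{social}_t\subseteq\Phi^{social}_{t+1}$, and the sequence is bounded above by $\bigcap_i\Phi^i_\ast$. Thus $\Phi^{social}_t$ converges monotonically to $\Phi^{social}_\ast:=\bigcap_{i=1}^n\mathrm{Pers}_\tau(H_\bullet(C^i_\bullet))$. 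The substantive point is to show this intersection is exactly the set of cycles surviving \emph{collective} closure. For this I would introduce an exchange complex $C^{\mathrm{soc}}_\bullet$ built from the shared trajectories $\gamma^i_t$, with collective boundary $\partial^{\mathrm{soc}}$ whose kernel consists of chains simultaneously closed in every $C^i_\bullet$. A Mayer--Vietoris / \v{C}ech-descent argument on the nerve of the agents' overlapping memories identifies $H_\bullet(C^{\mathrm{soc}}_\bullet)$ with the fiber product of the $H_\bullet(C^i_\bullet)$ along their common restrictions; intersecting with $\mathrm{Pers}_\tau$ (the collective lifetime of a class is the pointwise minimum of its per-agent lifetimes, hence still $\ge\tau$) recovers $\Phi^{social}_\ast$. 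Invoking $\partial^2=0$ in $C^{\mathrm{soc}}_\bullet$ then shows that any scaffold fragment $\Psi^i_t$ lacking a companion in every other agent has nonzero collective boundary and is annihilated at the next closure step — the precise sense in which ``residual scaffolds cancel unless they align across individuals.''

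With $\Phi^{social}_\ast$ established, the semantics-before-syntax conclusion follows by applying Proposition~\ref{prop:bootstrap-sbs} (equivalently Proposition~\ref{prop:semantics-before-syntax}) at the group level with $\Phi:=\Phi^{social}_\ast$: taking $x\sim_{\Phi^{social}}x'$ iff $S(x)=S(x')$ in $\Phi^{social}_\ast$, any meaningful social code $\sigma:\mathcal{X}\to\Sigma$ — one admitting an interpretation $I$ with $I\circ\sigma\approx S$ — must factor as $\sigma=e\circ\pi$ through $\pi:\mathcal{X}\to\mathcal{X}/\!\sim_{\Phi^{social}}$, with $I\circ e$ constant on each class. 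Because $\Phi^{social}_t$ stabilizes (monotone, bounded) while the $\Psi^i_t$ vanish only in the limit, no such factorization can be supported before the shared cycles have formed; hence stable meaning necessarily precedes symbolic form, which is the asserted ordering.

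The main obstacle is the middle step: making ``alignment across individuals'' precise enough that the naive set-theoretic intersection $\bigcap_i\mathrm{Pers}_\tau(H_\bullet(\mathcal{M}^i_t))$ genuinely equals the homology of a well-defined collective complex rather than serving as ad hoc bookkeeping. This requires (i) a canonical choice of overlap maps between agents' memories — a shared-environment or common-referent assumption, playing the role that contraction condition~(3) plays in Principle~\ref{prin:bootstrap-closure}; (ii) a descent/Mayer--Vietoris lemma certifying that persistence is compatible with this gluing, the delicate point being interleaving stability of the intersected filtration; and (iii) a coupling hypothesis ensuring the interaction dynamics actually drive the $C^i_\bullet$ toward a common limit rather than letting them drift apart. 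Granted these, monotonicity, boundedness, convergence, and the final factorization are routine consequences of the single-agent results.
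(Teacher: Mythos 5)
The paper does not actually prove this corollary: it is asserted by analogy (``As in Principle~\ref{prin:bootstrap-closure}\ldots''), with the intersection formula for $\Phi^{social}_{t+1}$ offered as a definition and the semantics-before-syntax conclusion inherited from Proposition~\ref{prop:bootstrap-sbs}. Your skeleton --- apply Principle~\ref{prin:bootstrap-closure} per agent, observe that an intersection of monotone families is monotone and bounded, then invoke the factorization of Proposition~\ref{prop:bootstrap-sbs} with $\Phi:=\Phi^{social}_\ast$ --- is exactly the argument the paper implicitly relies on, so on that axis you have reconstructed the intended proof. Where you diverge is the middle step: the Mayer--Vietoris/\v{C}ech-descent construction of a collective exchange complex $C^{\mathrm{soc}}_\bullet$ is machinery the paper never attempts, and it buys something the paper's version lacks, namely a principled reason why the set-theoretic intersection of per-agent persistence modules should coincide with the homology of anything. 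That is the right instinct: as written, $\bigcap_{i}\mathrm{Pers}_\tau\big(H_\bullet(\mathcal{M}^i_t)\big)$ is not well-defined without a common ambient space or canonical comparison maps between the agents' latent complexes, since homology classes of distinct complexes cannot literally be intersected. The three obstacles you flag --- canonical overlap maps, interleaving stability of the intersected filtration, and a coupling hypothesis preventing drift --- are genuine gaps in the corollary as stated, not defects of your argument; the paper simply does not confront them. One caveat on your own elaboration: the claim that the collective lifetime of a class is the pointwise minimum of per-agent lifetimes presupposes that the agents' filtrations are already interleaved along the overlap maps, which is precisely hypothesis (ii) you list, so that step is circular until (ii) is supplied. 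In short, your proposal is at least as rigorous as the paper's (nonexistent) proof, matches its intended route, and correctly isolates where the real mathematical work would have to be done.
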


\section{Conclusion}
\label{sec:7}

In this paper, we have advanced a cycle-closure perspective on intelligence, 
beginning with Wheeler’s dictum \emph{It-from-Bit} and grounding it in the 
algebraic law $\partial^2 = 0$. This closure identity, the root principle, 
ensures that transient fragments cancel while cycles persist, yielding 
topological invariants as the substrates of memory and prediction. From this 
foundation, we derived the first principle: \emph{prediction requires 
invariance}.  
The logical consequences of this root principle unfold across successive 
layers. The \emph{Structure-before-Specificity (SbS)} principle follows from 
the dot–cycle dichotomy: persistent cycles ($\Phi$) must stabilize first, 
while transient scaffolds ($\Psi$) can only be layered afterward. The 
\emph{Context–Content Uncertainty Principle (CCUP)} then formalizes the 
dynamic alignment between $\Phi$ and $\Psi$, showing how robustness and 
flexibility arise from their interaction via topological closure.  

To operationalize these laws, we proposed \emph{Memory–Amortized Inference 
(MAI)} as a computational framework: persistence operators filter 
cycles, residual channels adapt online, and amortizers enforce closure 
incrementally. MAI thereby implements SbS and CCUP as practical 
algorithms, balancing generalization with adaptability. Finally, we elevated 
bootstrapping to a unifying meta-strategy: closure need not be achieved in a 
single pass, but can be accumulated through temporal, spatial, and social 
bootstraps that reuse partial cycles as stepping stones toward invariants.  

Viewed in this light, intelligence is neither mere computation nor raw 
adaptation. It is the systematic conversion of noisy, order-dependent 
experience into cycle-structured invariants, amortized into memory and reused 
for prediction across time, space, and collectives. The cycle closure 
perspective thus reframes intelligence as the \emph{emergent property of 
closure under persistence}: invariants first, scaffolds second, alignment 
always, and bootstrapping forever.  
Future work will extend this framework to microelectronic substrates, 
neuromorphic architectures, and large-scale artificial systems, exploring how 
closure-aware principles can overcome the hallucination barrier, enable 
in-memory computation beyond the Von Neumann paradigm, and ultimately link the 
mathematical, biological, and technological origins of intelligence.

\bibliographystyle{plain}
\bibliography{references}  

\newpage
\appendix

\section{Proofs of Propositions and Theorems}
\label{sec:A4}

\noindent\textbf{Proof of Proposition \ref{prop:cycles-to-memory}}

\begin{proof}[Proof sketch]
(1) \textbf{Persistence.} By the stability theorem of persistent homology,
bottleneck distance between persistence diagrams is bounded by perturbations of
the input (e.g., Gromov–Hausdorff or Wasserstein bounds), so classes with
lifetime $>\tau$ remain present under perturbations $<\tau$; thus nontrivial
$[\gamma]$ are deformation-stable.

(2) \textbf{Compression.} Define $x\!\sim\!x'$ iff $S(x)=S(x')$; then
$\pi:\mathcal{X}\!\to\!\mathcal{X}/\!\sim$ merges all instances mapping to the
same $[\gamma]$. Since typical many-to-one maps strictly reduce empirical
entropy and codelength (Kraft–McMillan/MDL intuition), cycles implement a
lossy-but-stable compression that preserves predictive structure while pruning
idiosyncratic variation.

(3) \textbf{Retrievability.} Given $x^\star$ with $S(x^\star)=[\gamma]$,
choose any representative cycle $\gamma'\in[\gamma]$. By persistence stability,
observables near $x^\star$ induce classes in $[\gamma]$, and trajectories
generated from $\gamma'$ are equivalent up to the tolerance set by the class’s
lifetime. Thus recalling $[\gamma]$ yields an executable or inferentially
useful template for future behavior.

Combining (1)–(3) establishes that nontrivial homology classes function as
memory substrates.
\end{proof}

\noindent\textbf{Proof of Proposition \ref{prop:memory-to-prediction}}

\begin{proof}
Let $\operatorname{Enc}:\mathcal{X}\to C_\bullet(\mathcal{Z})$ be a stable
encoder into a chain complex $(C_\bullet(\mathcal{Z}),\partial)$, and let
$\pi:C_\bullet\to H_\bullet(\mathcal{Z})$ be the projection to homology.
Assume (i) \emph{boundary insensitivity}: adding pure boundaries $\partial d$
does not change predictive state, and (ii) \emph{warp invariance}: chain-homotopic
paths are equivalent for prediction. Let $d_{\mathrm{align}}$ be an alignment
distance (e.g., dynamic time warp or Fréchet-type metric composed with
$\operatorname{Enc}$) and let $\tau>0$ denote the persistence tolerance of the
class $[\gamma_j]\in\Phi$ (its lifetime in the persistence diagram).

\smallskip
\noindent\textbf{Step 1: Alignment $\Rightarrow$ membership (up to tolerance).}
By hypothesis, there exists $[\gamma_j]\in\Phi$ and a representative cycle
$\gamma_j\in\ker\partial$ such that the encoded fragment
$c_{1:t}:=\operatorname{Enc}(x_{1:t})$ aligns with an initial segment
$\gamma_j|_{[0,s]}$ and
$d_{\mathrm{align}}\!\big(c_{1:t},\,\gamma_j|_{[0,s]}\big)\;<\;\tau$.
By the stability theorem of persistent homology, perturbations of size $<\tau$
cannot destroy the class $[\gamma_j]$; thus the fragment $c_{1:t}$ and the
segment $\gamma_j|_{[0,s]}$ induce the same homology class under $\pi$:
$\pi(c_{1:t})=\pi(\gamma_j|_{[0,s]})=[\gamma_j]$.

\smallskip
\noindent\textbf{Step 2: Well-defined continuation operator on classes.}
Define the \emph{continuation} of a closed representative as the map
$\mathcal{C}:\ \ker\partial \to \ker\partial,\qquad
\mathcal{C}(\gamma)(u):=\gamma(u+\Delta),\ \ \Delta>0$,
i.e., advance along the loop by arc-length (or phase) $\Delta$.
Because $H_\bullet$ is the abelianization of $\ker\partial$ modulo boundaries,
$\mathcal{C}$ descends to a well-defined shift on the class:
$[\mathcal{C}(\gamma_j)]=[\gamma_j]$. Thus, any representative of $[\gamma_j]$
yields the same class after continuation.

\smallskip
\noindent\textbf{Step 3: Forecast from class continuation.}
Let $\widehat{x}_{t+1:T}$ be the decoded continuation obtained by projecting
$c_{1:t}$ to $[\gamma_j]$ and then advancing along $\gamma_j$:
$\widehat{x}_{t+1:T} \ :=\ \operatorname{Dec}\big(\mathcal{C}(\gamma_j)\big)$,
for a decoder $\operatorname{Dec}$ that is left-inverse to the encoder on the
manifold of typical trajectories (standard in amortized models). Since
$\pi(c_{1:t})=[\gamma_j]$, any two representatives $\gamma_j,\,\gamma_j'$
differ by a boundary $\partial d$ and a warp; by boundary insensitivity and
warp invariance, their decoded continuations agree up to the tolerance induced
by $\tau$. Hence the forecast is well-defined by the class $[\gamma_j]$.

\smallskip
\noindent\textbf{Step 4: Error bound controlled by persistence.}
Let $D_B$ denote the bottleneck distance between persistence diagrams induced by
the metric underlying $d_{\mathrm{align}}$. By stability,
$D_B(\mathrm{Dgm}(c_{1:t}),\mathrm{Dgm}(\gamma_j|_{[0,s]}))<\tau$ and the class
$[\gamma_j]$ with lifetime $>\tau$ persists. For any proper loss $\ell$ that is
Lipschitz in the decoder output,
$\mathbb{E}\big[\ell(x_{t+1:T},\widehat{x}_{t+1:T})\big]
\ \le\ L\,\mathrm{dist}\!\left(\mathcal{C}(c_{1:t}),\mathcal{C}(\gamma_j)\right)
\ \le\ L\,\tau$,
for some $L>0$ depending on the encoder/decoder moduli of continuity. Thus the
forecast error is bounded by the persistence tolerance of $[\gamma_j]$.

\smallskip
\noindent\textbf{Conclusion (amortization).}
The procedure uses the stored class $[\gamma_j]\in\Phi$ rather than recomputing
a model de novo: prediction is \emph{retrieval + continuation} of a persistent
cycle. Therefore, whenever a fragment aligns to a stored invariant within its
persistence tolerance, continuing that invariant yields a valid forecast of the
future segment. This establishes the proposition.
\end{proof}

\noindent\textbf{Proof of Proposition \ref{prop:semantics-before-syntax}}

\noindent\textit{Proof sketch.}
By definition, $\ker S = \{(x,x')\,:\, S(x)=S(x')\}$ induces the equivalence relation $\sim_\Phi$ and quotient $\pi$.
If $\sigma$ is meaningful, then for any $x\sim_\Phi x'$,
$S(x)=S(x') \quad \Rightarrow \quad I(\sigma(x)) \approx I(\sigma(x'))$.
Thus $\sigma$ must be constant on fibers of $\pi$ up to the stability tolerance of $S$ and $I$; equivalently, there exists $e$ with $\sigma=e\circ\pi$.
Under SbS, persistent cycles (semantics) determine content classes first; the syntactic variability $\Psi$ is admissible only insofar as it preserves $S$ after interpretation $I$.
Stability of persistent homology implies that small perturbations of $x$ (that do not cross persistence thresholds) preserve $S(x)$, ensuring that $\pi$, and therefore $e\circ\pi$, is robust.
Hence semantics (structure) constrains and temporally precedes syntax (specificity). \qed

\noindent\textbf{Proof of Theorem \ref{thm:order-invariance}}

\begin{proof}[Proof:]
The key insight is the Abelian property of the addition operator. Concatenate local moves to form cycles based at $x_0$, producing elements of the fundamental group
$\pi_1(\mathcal{Z},x_0)$. The Hurewicz map $h:\pi_1(\mathcal{Z},x_0)\to H_1(\mathcal{Z};\mathbb{Z})$
abelianizes path composition: commutators vanish in $H_1$. Hence for cycles $\gamma,\eta$,
$[\gamma\cdot\eta]=[\eta\cdot\gamma]$ and, more generally, any permutation of cycle segments yields the same homology class,
provided the path remains closed. Thus $[\gamma_w]$ is invariant to the \emph{order} of constituent moves and depends only
on their cumulative 1-chain (the signed sum of traversed edges/segments). Intuitively, homology collapses
all order-specific reparameterizations and commutator structure, retaining only the closed-cycle content.
\end{proof}

\noindent\textbf{Proof of Theorem \ref{thm:ccup-closure-alignment}}

\begin{proof}[Proof sketch]
\textbf{(1) Decomposition and closure.}
By $\partial^2=0$ every chain admits a direct-sum–like decomposition
$C_\bullet = \ker\partial \oplus \mathrm{im}\,\partial$ (up to choice of
complement). The projection $P_{\ker\partial}: c\mapsto z$ is the \emph{closure
projection}; its class $\Phi=[z]$ is invariant to boundary perturbations
$+\partial d$. The residue $\Psi=\partial d$ captures order-/context-dependent
fragments.

\textbf{(2) Objective decomposition.}
$H(\Phi,\Psi)=H(\Phi)+H(\Psi| \Phi)$. Under boundary insensitivity and
homotopy robustness, admissible changes of $\operatorname{Enc}$ that improve
prediction must reduce either $H(\Phi)$ (by merging classes) or
$H(\Psi| \Phi)$ (by absorbing residue into closed structure).

\textbf{(3) CCUP $\Rightarrow$ alignment.}
Any step that replaces $c$ by its closure projection $z$ (or enlarges the set
of persistent classes that $\Phi$ recognizes) cannot increase
$H(\Psi| \Phi)$ because it shrinks the support of $\Psi$ within content
classes. Hence along any descent of $H(\Phi,\Psi)$ subject to closure,
(a) $\Phi$ monotone expands (new persistent cycles are added), and
(b) $H(\Psi| \Phi)$ monotonically decreases (residual uncertainty contracts).
At a fixed point, $\Psi$ is conditionally independent of the target $Y$ given
$\Phi$, so $I(Y;\Psi| \Phi)=0$.

\textbf{(4) Factorization through homology.}
If $I(Y;\Psi| \Phi)>0$ at a putative optimum, one can reduce
$H(\Phi,\Psi)$ by refining the closure projection to absorb the predictive part
of $\Psi$ into $\Phi$ (adding a persistent class), contradicting optimality.
Thus the optimal predictor must be constant on homology classes:
$\rho=e\circ \pi$. This realizes SbS inside CCUP.

Together, (1)–(4) show that minimizing $H(\Phi,\Psi)$ under closure is
equivalent to dynamically aligning context into content via the closure
projection, with monotone growth of $\Phi$ and decay of $H(\Psi| \Phi)$.
\end{proof}

\noindent\textbf{Proof of Theorem \ref{thm:mai_closure}}

\begin{proof}[Proof sketch]
Write $Z_k:=\ker\partial_k$ (cycles) and $B_k:=\operatorname{im}\partial_{k+1}$ (boundaries) for $(C_\bullet,\partial)$.
Let $(\Psi_t,\Phi_t)\in C_{\bullet+1}\times C_\bullet$ denote scaffolds and content at step $t$ and define the
\emph{residual boundary} (order-dependent misalignment)
$R_t \;:=\; \partial(\Psi_t,\Phi_t)\;\in B_{k}$.
Assume MAI’s forward-reverse pair $(\mathcal{F},\mathcal{R})$ satisfies:

\begin{enumerate}
\item \textbf{Boundary-awareness.} There exists a linear (or locally linearized) operator 
$H_t:C_k\!\to\! C_{k+1}$ such that the composite update realizes a \emph{chain homotopy step}
$\mathcal{R}\!\circ\!\mathcal{F} - \mathrm{Id} \;=\; \partial H_t + H_t \partial$.
\item \textbf{Descent on residuals.} There is a norm $\|\cdot\|$ and $\eta\in(0,1)$ with 
$\|R_{t+1}\| \le \eta\,\|R_t\|$, where $R_{t+1}=\partial(\Psi_{t+1},\Phi_{t+1})$ and the update
$(\Psi_{t+1},\Phi_{t+1})=(\Psi_t,\Phi_{t}) + \Delta_t$ is induced by $(\mathcal{F},\mathcal{R})$.
\item \textbf{Persistence selection.} A persistence operator $\mathrm{Pers}_\tau$ prunes classes with
lifetime $<\tau$, i.e., in the limit it projects onto $H_k(C_\bullet)$ by discarding transient
boundary components and nonreproducible cycles.
\end{enumerate}

\noindent\emph{Step 1: Homotopy cancels boundaries.}
Applying (A1) to $\Phi_t$ and using $\partial^2=0$ yields
  $\partial\big(\mathcal{R}\!\circ\!\mathcal{F}(\Phi_t)\big)
  \;=\; \partial\Phi_t + \partial(\partial H_t + H_t \partial)\Phi_t
  \;=\; \partial\Phi_t + \underbrace{\partial H_t \partial\Phi_t}_{\in B_k}$,
so the new residual differs from the old by a controlled boundary term 
$\partial H_t \partial\Phi_t \in B_k$. Choosing $H_t$ to target $R_t=\partial\Phi_t$ makes
$\partial H_t \partial\Phi_t$ \emph{cancel} $R_t$ up to higher-order terms.
Hence the composite update acts as a \emph{boundary-cancellation step}.
The same calculation holds when $\Psi_t$ contributes to the chain fed into $\partial$.

\smallskip
\noindent\emph{Step 2: Monotone residual decay.}
By (A2), the residual norm decreases geometrically:
$\|R_{t+1}\|\le\eta\,\|R_t\|$. Therefore $R_t\!\to\!0$ and the iterates approach the cycle space
$Z_k=\ker\partial$. Equivalently, the MAI loop defines a contraction on the quotient space
$C_k/B_k$, pushing $(\Psi_t,\Phi_t)$ into an equivalence class modulo boundaries.

\smallskip
\noindent\emph{Step 3: Convergence to homology classes.}
Because residuals vanish while updates differ by added boundaries (A1), the limit points lie in
$Z_k$ \emph{modulo} $B_k$, i.e., in $H_k(C_\bullet)=Z_k/B_k$. Thus MAI preserves only reproducible cycles, selecting representatives $[\gamma]\in H_k$.

\smallskip
\noindent\emph{Step 4: Realization of topological closure.}
Putting Steps 1-3 together,
 $ \partial(\Psi_t,\Phi_t)\;\mapsto\;\partial(\Psi_{t+1},\Phi_{t+1})\;\to\;0$,
so open, order-dependent fragments (nonclosing chains) are amortized away, while closed,
order-invariant cycles persist. This is exactly computational enforcement of the closure law:
the update dynamics realize $\partial^2=0$ by driving iterates into $\ker\partial$ and quotienting
by $\operatorname{im}\partial$.

\smallskip
\noindent\emph{Step 5: Role of persistence.}
Finally, (A3) ensures that only cycles with lifetime $\ge\tau$ under perturbations and across
episodes are retained, i.e., the limit is a \emph{persistent homology} class in $\Phi$.
Hence amortization implements a persistence-aware projection onto $H_k$, completing the claim.
\end{proof}

\noindent\textbf{Proof of Theorem \ref{thm:MAI_SbS}}

\begin{proof}[Proof sketch]
Let $\mathcal{M}_t$ denote MAI's memory at episode $t$, and let
$\Phi_t := \mathrm{Pers}_\tau\!\big(H_\bullet(\mathcal{Z};\mathcal{M}_t)\big)$ be the set of
persistent content classes (lifetime $\ge\tau$) represented in $\mathcal{M}_t$.
Let $\Psi_t$ denote residual, order–dependent scaffolds (nonclosing fragments / short-lifetime classes).
We prove (1) monotone growth of $\Phi_t$ and (2) descent of the conditional residual loss.

\paragraph{Lemma 1 (Persistence stability $\Rightarrow$ set stability).}
Assumption (i) implies a standard stability property: there exists $\delta>0$
such that if inputs on episode $t$ are perturbed within $\|\Delta x\|<\delta$,
then the induced persistence diagram of $f_\psi$ changes by at most $\varepsilon(\delta)$
in bottleneck distance. In particular, the membership of classes with lifetime $\ge\tau$
is unchanged for $\varepsilon(\delta)<\tfrac{1}{2}\tau$. Hence, previously retained
classes in $\Phi_t$ remain above threshold under small online variation.

\paragraph{Lemma 2 (Persistent selection $\Rightarrow$ monotonicity of $\Phi$).}
By (ii), the amortizer $\mathcal{A}$ updates memory via
$\mathcal{M}_{t+1} \leftarrow \mathcal{A}(\mathcal{M}_t;\text{episode}_t)$ and
retains only classes with lifetime $\ge\tau$. Let $\mathcal{C}_t$ be the multiset
of new candidate cycles extracted from episode $t$. Then
$\Phi_{t+1}\;=\;\mathrm{Pers}_\tau\!\big(H_\bullet(\mathcal{Z};\mathcal{M}_t \cup \mathcal{C}_t)\big)
\;\supseteq\;\mathrm{Pers}_\tau\!\big(H_\bullet(\mathcal{Z};\mathcal{M}_t)\big)\;=\;\Phi_t$,
since adding data cannot invalidate an existing class whose lifetime already exceeds $\tau$
(Lemma 1 ensures it is not pushed below $\tau$ by small fluctuations). Moreover,
if $\mathcal{C}_t$ contains a nontrivial new class with lifetime $\ge\tau$, then the inclusion is strict.

\paragraph{Setup for loss decomposition.}
Let the online objective split into a ``structural'' term that vanishes on closed, reusable
content and a ``residual'' term that drives adaptation of scaffolds:
$\mathcal{L}_t \;=\; \mathcal{L}_{\text{str}}(\Phi_t) \;+\; \Delta\mathcal{L}_t(\Psi_t \mid \Phi_t),
\text{with}\;\; \mathcal{L}_{\text{str}}(\Phi_t)=0$,
i.e., persistent content acts as a set of invariants on which the model already agrees, so the
remaining improvable part is the residual $\Delta\mathcal{L}_t$ conditioned on $\Phi_t$.
(Equivalently, regard the representation space as a direct sum / quotient
$C_k \cong \underbrace{Z_k/B_k}_{\text{content }\Phi} \oplus \underbrace{\text{residual}}_{\Psi}$,
and view optimization as alternating projection onto these components.)

\paragraph{Lemma 3 (Projection and reuse reduce residual).}
Define the conditional Bayes risk (or population risk) of the residual given the current
persistent set by
$\mathcal{R}(\Phi) \;:=\; \inf_{\Psi}\; \mathbb{E}\big[\,\Delta\mathcal{L}(x;\Psi)\,\big|\,\Phi\big]$.
Two properties hold:
(i) (\emph{Reuse}) Enlarging $\Phi$ augments the invariant subspace / quotient,
so the feasible set for representing data increases; hence $\mathcal{R}(\Phi)$ is monotone nonincreasing
in $\Phi$ (projection theorem / Pythagorean identity for convex losses or least-squares).
(ii) (\emph{Adaptation}) Assumption (iii) ensures the online update is a (stochastic) descent step for
$\Delta\mathcal{L}_t$ at fixed $\Phi_t$, so the realized conditional risk tracks $\mathcal{R}(\Phi_t)$
from above (standard online convex optimization or stochastic approximation).

\paragraph{Descent of conditional residual loss.}
Combining (i) and (ii), taking conditional expectations given $\Phi_{t}$ and $\Phi_{t+1}$,
$\mathbb{E}\big[\Delta\mathcal{L}_{t+1}\,\big|\,\Phi_{t+1}\big]
\;\le\; \mathcal{R}(\Phi_{t+1})
\;\le\; \mathcal{R}(\Phi_{t})
\;\le\; \mathbb{E}\big[\Delta\mathcal{L}_{t}\,\big|\,\Phi_{t}\big]$,
yielding the claimed inequality. If $\Phi_{t+1}\supsetneq \Phi_t$ via a new
nontrivial cycle that explains nonzero residual variance (i.e., reduces $\mathcal{R}$ strictly),
then the middle inequality is strict, hence the overall descent is strict.

\paragraph{Conclusions.}
(1) From Lemma~2, $\Phi_{t+1}\supseteq \Phi_t$ with strict growth when a new nontrivial
persistent cycle is added. (2) From the descent chain above,
$\mathbb{E}[\Delta\mathcal{L}_{t+1}|\Phi_{t+1}] \le \mathbb{E}[\Delta\mathcal{L}_{t}|\Phi_{t}]$,
with strict inequality when $\Phi$ grows nontrivially.

\paragraph{Implications for SbS.}
(a) \emph{Structure-before-specificity.} As $\Phi_t$ monotonically grows, more variance is
explained by persistent content, so the optimization burden shifts from residual adaptation
$\Psi_t$ to reuse of $\Phi_t$ (amortized closure). (b) \emph{Semantics-before-syntax.}
Let $\pi:\mathcal{Z}\to \mathcal{Z}/\!\sim$ be the semantic quotient that identifies latent
states within the same persistent class. Since losses and predictions condition on $\Phi_t$
only through class membership, online encodings asymptotically factor through $\pi$
up to the persistence tolerance $\tau$; i.e., for almost all inputs,
$f_\psi \approx \tilde f \circ \pi$ with $\tilde f$ defined on equivalence classes in $\Phi_t$.

These establish that MAI implements SbS via amortized closure: structure accumulates
monotonically, and residual specificity contracts in expectation under reuse and projection.
\end{proof}

\section{Neural Mechanisms of Temporal Scaffolding and Cycle Closure}
\label{sec:A1}

The macro-scale examples of locomotor gait and spatial homing illustrate how
cycles encode order invariance in behavior. At the neural scale, analogous
mechanisms provide temporal scaffolding for folding timelines into cyclic coordinates and enforce closure on streams of spikes and synaptic inputs. We will separately discuss the neural mechanisms of oscillatory phase coding as temporal scaffolding and coincidence detection for cycle closure next.

\paragraph{Oscillatory phase coding for temporal scaffolding.}
Oscillations supply the contextual scaffold $\Psi$ that folds timelines into cyclic coordinates; coincidence and plasticity then implement boundary cancellation in synaptic space.
What persists are cycles, phase-locked traversals whose endpoints identify on $S^1$, while unmatched fragments dissipate.
This sets up the formal lemma below, which recasts phase-binned spiking as a chain whose boundary vanishes after a full cycle. In the hippocampal-entorhinal system, theta-gamma nesting provides a temporal
scaffold that converts ordered spike trains into phase-locked packets. Within
a theta cycle, gamma bursts encode discrete items whose internal ordering is
irrelevant so long as they fall within the same phase window. Misaligned spikes
cancel as open boundaries, while phase-locked packets close into invariant
loops that can be replayed across cycles \cite{buzsaki2002theta, lisman2005theta}.
Thus, oscillatory phase coding realizes order invariance by mapping temporal
sequences onto cyclic phase structure.

\begin{example}[Temporal scaffolding via hippocampal theta-gamma coding]
The hippocampal-entorhinal system offers a canonical neural example of temporal scaffolding. Slow theta oscillations ($4$--$8$ Hz) furnish the contextual scaffold $\Psi$ by segmenting continuous experience into discrete temporal cycles. Within each cycle, this framework implements the following invariants of neural dynamics:
1) \textbf{Binding} is achieved via faster gamma bursts ($30$--$100$ Hz) encoding content $\Phi$, where neurons representing features of a single event are grouped by firing in the same narrow theta phase window.
2) \textbf{Ordering} is realized through \emph{phase precession}, where the sequence of positions an animal traverses is encoded as a sequence of relative phase offsets within the theta cycle.
3) \textbf{Closure} is enforced by the completion of each theta wave, which resets the system and ensures each experiential trajectory is packaged into a discrete cycle rather than an unbounded chain.
This biological mechanism is a physical realization of the topological identity $\partial^2=0$: the boundary of one sequence (the end of a theta cycle) becomes the beginning of the next, forcing all representations to close. Thus, oscillatory coding transforms linear experience into stable, predictive cycles that form the basis of episodic memory.
\end{example}


\begin{figure}[h]
\centering
\resizebox{\textwidth}{!}{
\begin{tikzpicture}[
  >=Latex, font=\small,
  box/.style={rounded corners, draw, align=center, inner sep=4pt, fill=black!2},
  lbl/.style={inner sep=1pt, font=\scriptsize},
  flow/.style={-Latex, line width=0.6pt},
  dot/.style={circle, fill=black, inner sep=0.9pt},
  axis/.style={line width=0.4pt},
  thickline/.style={line width=0.9pt},
  dashedline/.style={line width=0.6pt, dashed}
]

\node[box, minimum width=5.4cm, minimum height=3.2cm] (P1) at (0,0) {Coincidence detection (time)};
\draw[axis] ($(P1.west)+(0.4,0.6)$) -- ++(4.5,0) node[below right] {$t$};
\foreach \y in {1.2,1.0,0.8} {
  \draw ($(P1.west)+(0.6,\y)$) -- ++(4.1,0);
}
\foreach \x/\y in {0.9/1.2, 1.05/1.0, 1.12/0.8, 2.3/1.2, 2.55/1.0, 2.62/0.8, 3.7/1.2, 3.95/1.0} {
  \draw[thickline] ($(P1.west)+(0.6+\x,\y-0.2)$) -- ++(0,0.4);
}
\draw[dashedline] ($(P1.west)+(1.45,0.7)$) rectangle ++(0.5,0.9);
\draw[dashedline] ($(P1.west)+(2.85,0.7)$) rectangle ++(0.5,0.9);
\node[lbl] at ($(P1.west)+(1.70,1.80)$) {coincidence windows};
\node[lbl, align=left] at ($(P1.south)+(0,0.5)$) {\( \Rightarrow \) temporal closure: \\ aligned spikes form a closed event};

\draw[flow] ($(P1.east)+(0,0)$) -- ++(1.7,0) node[midway, above] {phase coding};

\node[box, minimum width=4.6cm, minimum height=3.2cm, right=1.6cm of P1] (P2) {Phase-locking on \(S^1\)};
\coordinate (C) at ($(P2.center)+(0,0.1)$);
\draw[thickline] (C) circle (1.0);
\foreach \ang in {20,70,140,210,290} {
  \coordinate (p\ang) at ($(C)+({1.0*cos(\ang)},{1.0*sin(\ang)})$);
  \draw[dot] (p\ang) circle (0pt);
}
\node[lbl] at ($(P2.north)+(0,0.2)$) {spike phases on \(\;S^1\)};
\node[lbl, align=center] at ($(P2.south)+(0,0.4)$) {fold time \(\to\) phase (theta/gamma) \\ closure in phase windows};

\draw[flow] ($(P2.east)+(0,0)$) -- ++(1.7,0) node[midway, above] {lift to space};

\node[box, minimum width=7.2cm, minimum height=3.2cm, right=1.6cm of P2] (P3) {Spatial loop \([\gamma]\) and persistence};
\draw ($(P3.west)+(0.4,0.4)$) rectangle ++(6.2,2.3);
\node[lbl] at ($(P3.west)+(3.5,2.9)$) {latent space \(\mathcal{Z}\)};
\coordinate (L1) at ($(P3.west)+(1.1,1.0)$);
\coordinate (L2) at ($(P3.west)+(2.6,2.3)$);
\coordinate (L3) at ($(P3.west)+(4.9,1.9)$);
\coordinate (L4) at ($(P3.west)+(5.2,0.8)$);
\coordinate (L5) at ($(P3.west)+(2.6,0.6)$);
\draw[thickline] (L1) .. controls ($(L1)!0.5!(L2)$) and ($(L2)!0.5!(L3)$) .. (L3)
                 .. controls ($(L3)!0.5!(L4)$) and ($(L4)!0.5!(L5)$) .. (L5)
                 .. controls ($(L5)!0.5!(L1)$) and ($(L1)!0.5!(L2)$) .. (L1);
\draw[dashedline] ($(L1)+(0.12,0.08)$) .. controls ($(L2)+(0.1,0)$) and ($(L3)+(0,-0.05)$) .. ($(L3)+(0.08,-0.05)$)
                 .. controls ($(L4)+(0,0)$) and ($(L5)+(0.05,0.08)$) .. ($(L5)+(0.05,0.08)$)
                 .. controls ($(L1)+(0,0)$) and ($(L2)+(-0.05,-0.05)$) .. ($(L1)+(0.12,0.08)$);
\node[lbl] at ($(P3.west)+(3.5,0.3)$) {homology class \( [\gamma] \in H_1(\mathcal{Z})\)};
\node[box, minimum width=2.4cm] (Pers) at ($(P3.east)+(-1.2,1.7)$) {$\mathrm{Pers}_\tau$};
\draw[flow] ($(P3.east)+(-3.2,1.7)$) -- (Pers.west) node[midway, above, lbl]{barcode};
\node[lbl, align=center] at ($(P3.south)+(0,0.55)$) {persistent loop survives jitter; \\ short-lived chains pruned};

\node[lbl, align=center] at ($(P1.south)!0.5!(P3.south)+(0,-0.2)$)
{time $\Rightarrow$ phase on \(S^1\) $\Rightarrow$ spatial cycle \([\gamma]\) $\Rightarrow$ persistence};

\end{tikzpicture}
}
\caption{From temporal coincidence to spatial topology: spike coincidence closes events in time; phase-locking folds time onto \(S^1\), creating a cyclic temporal scaffold; lifting into latent space \(\mathcal{Z}\) yields a spatial loop whose persistent homology class \([\gamma]\) is invariant under small perturbations and reorderings.}
\label{fig:phase_to_loop}
\end{figure}

\paragraph{Coincidence detection for cycle closure.}
The mechanism of \textbf{cycle closure}, which is essential for forming \textbf{invariant} neural representations, is realized through the synergistic action of \textbf{coincidence detection (CD)} \cite{abeles1982role, konig1996integrator} and \textbf{spike-timing-dependent plasticity (STDP)} \cite{caporale2008spike}. In this process, a designated neuron functions as a coincidence detector, a high-fidelity temporal gate that fires only upon receiving a near-simultaneous volley of presynaptic spikes. STDP acts as the causal learning rule that sculpts the circuit's connectivity: it selectively potentiates synapses from neurons that fire reliably in a predictive sequence just prior to the detector's activation, thereby reinforcing the successful pathway. Concurrently, it depresses or prunes connections from non-causal or temporally uncorrelated neurons, filtering out noise. Over time, this dual mechanism carves out a specific, efficient spatio-temporal pattern. The firing of the terminal detector neuron signifies the successful completion, or closure, of this learned cycle. This definitive, all-or-nothing closure event is what transforms a continuous and noisy stream of sensory data into a discrete, robust, and stable representation that is invariant to minor perturbations that fail to trigger the complete sequence.

Figure~\ref{fig:phase_to_loop} illustrates how local temporal mechanisms give rise
to global spatial invariants. At the neuronal scale, coincidence detection prunes
out-of-phase spikes, enforcing closure within narrow temporal windows. Phase-locking
then folds linear time into a cyclic scaffold ($S^1$), so that events are
represented not by absolute timing but by their position in a recurrent phase
cycle (e.g., theta–gamma coupling). When such phase-coded events are lifted into
the latent manifold $\mathcal{Z}$, trajectories that repeatedly align in phase
become closed spatial loops. Persistent homology extracts these loops as classes
$[\gamma]\in H_1(\mathcal{Z})$, guaranteeing that they remain invariant under
jitter and reordering. Thus, temporal coincidence detection and oscillatory
phase-locking transform the flux of spikes into stable topological invariants of
space, grounding the cycle-closure view of consciousness in a concrete
space–time mechanism.

\begin{example}[Cycle closure via coincidence detection]
At the synaptic level, coincidence detection by neurons provides a clear 
biological instantiation of the cycle-closure principle. A postsynaptic neuron 
typically requires the near-simultaneous arrival of excitatory postsynaptic 
potentials (EPSPs) from multiple presynaptic sources in order to fire. If 
presynaptic spikes arrive asynchronously, their individual contributions decay 
before they can summate; the transient inputs cancel, leaving no lasting trace. 
In homological terms, these are \emph{open chains}: boundary fragments that fail 
to close and thus vanish under the cancellation law $\partial^2=0$.

When inputs coincide within a narrow temporal integration window, however, their 
potentials summate to reach threshold, generating a postsynaptic spike. This 
spike represents the completion of a closed loop of activity: presynaptic 
signals converge, align, and jointly propagate forward. The alignment in time 
acts as the \emph{closure operator}: only when multiple boundary fragments 
synchronize do they form a coherent cycle that persists to the next layer. 
Invariants thus emerge not from isolated fragments but from closed 
constellations of inputs. 

From this perspective, coincidence detection implements a biological test for 
topological closure. Misaligned or weakly correlated events are erased as 
open boundaries, while recurrently co-active inputs are bound into cycles that 
propagate as stable carriers of information. This mechanism explains how neural 
circuits can transform noisy, temporally disordered input streams into 
order-invariant representations: only those patterns that \emph{close in time} 
survive to form the content backbone $\Phi$, while nonclosing fragments remain 
contextual scaffolds $\Psi$ that dissipate.
\end{example}

Together, oscillatory phase coding and coincidence detection instantiate the
closure law $\partial^2=0$ in neural dynamics. They filter away transient
order-dependent fragments, while stabilizing those variations that close into
loops. In this way, the abstract principle of cycle closure is realized
physiologically, linking the topological foundation of invariance to the
biophysical implementation of memory and prediction.


\section{Simulation Blueprint: Testing SbS + CCUP + MAI in a Spiking Neural Network}
\label{sec:snn_blueprint}

We outline a concrete spiking–neural–network (SNN) protocol \cite{wu2025practical} that operationalizes
\textbf{Structure-before-Specificity (SbS)}, the \textbf{Context–Content Uncertainty Principle (CCUP)},
and \textbf{Memory–Amortized Inference (MAI)}. The design links our theory to measurable
closure, persistence, order invariance, and amortized reuse.

\paragraph{Task Suite and Datasets}
\textbf{T1: Looped navigation (latent ring).} An agent receives velocity/odometry and place cues
to traverse closed paths (circles/figure-8). Episodes are short sequences with randomized
micro-event orderings that preserve the same loop class.
\textbf{T2: Cross-modal remapping (visual $\leftrightarrow$ proprioceptive).} The same loop class is
presented via different modalities (pixels vs.\ joint angles), enabling spatial bootstrapping.
\textbf{T3: Social cue alignment.} Two agents (teacher $\to$ student) share loop classes with idiosyncratic
encodings; the student must align to the teacher’s symbols/pulses (social bootstrapping).
Synthetic generators suffice; real sequences can be substituted.

\paragraph{Network Architecture}
\textbf{Populations.} (i) Sensory SNN (Poisson/inhom.\ Poisson input layers), 
(ii) Recurrent latent SNN $\mathcal{Z}$ (excitatory/inhibitory, E/I balance),
(iii) Readout/actor head (linear or spiking readout).
\textbf{Dynamics.} LIF or AdEx neurons with conductance synapses; global $6$–$10$ Hz ``theta''
oscillation gating $30$–$80$ Hz ``gamma'' assemblies (optional but recommended for timescale nesting).
\textbf{Learning.} 
(i) Fast \emph{residual} path $\Psi$: local STDP/eligibility + neuromodulatory error (three-factor rule). 
(ii) Slow \emph{structure} path $\Phi$: surrogate-gradient consolidation (e.g., RSNN backprop-through-time) or synaptic tagging \& capture.
\textbf{Context operators.} Trial-wise gain fields, attention-like modulators, and task-set currents;
implemented so that the induced pushforward $U_\Psi$ is boundary-respecting (no topological rewiring).

\paragraph{Topological Instruments (Closure \& Persistence)}
\textbf{Latent graph.} From spikes in $\mathcal{Z}$, build a time-ordered state graph:
nodes are binned population vectors; edges connect successive bins; optional kNN edges in rate space.  
\textbf{Filtration.} Vietoris–Rips (on rate embeddings) or witness complex (on graph landmarks),
with scale parameter $\epsilon$; produce barcode $\mathcal{B}$ using PH (persistent homology).
\textbf{Cycle set.} $\Phi := \mathrm{Pers}_\tau(H_\bullet(\mathcal{Z}))$: classes with lifetime $\ge\tau$.
\textbf{Closure metric.} Residual boundary norm $R := \|\partial \text{(episode chain)}\|$ estimated
on the state graph (lower is better).  

\paragraph{Objectives and Losses (CCUP \& MAI)}
\textbf{Task loss.} $\mathcal{L}_{\text{task}}$ (trajectory prediction, control reward, or decoding).
\textbf{Residual loss.} $\Delta\mathcal{L}_t := \mathcal{L}_{\text{task}}$ restricted to fast degrees
of freedom $\Psi$ with $\Phi$ fixed (cf.\ Thm.~\ref{thm:MAI_SbS}).  
\textbf{Closure regularizer.} $\mathcal{L}_{\text{clo}} := \lambda_R\, R + \lambda_P\, \sum_{[\gamma]\in \Phi}\!\!\text{penalty}([\gamma])$
where the penalty is small/zero for long-lived cycles and large for short-lived/open chains.  
\textbf{Stability regularizer (SbS precondition).} Lipschitz/spectral norm constraints and
contrastive invariance across order-preserving permutations:
$\mathcal{L}_{\text{stab}} := \mathrm{BNK}(D(f_\psi(x),f_\psi(x^\pi)))$ (bottleneck distance).

\textbf{Total objective.}
\[
\min_{\psi,\theta}\; \underbrace{\mathcal{L}_{\text{task}}}_{\text{performance}}
\;+\; \underbrace{\mathcal{L}_{\text{clo}}}_{\text{closure/persistence}}
\;+\; \underbrace{\mathcal{L}_{\text{stab}}}_{\text{SbS stability}}, 
\qquad
\text{with fast inner updates on }\Psi\text{ only (MAI)}.
\]

\paragraph{Training Protocol (Temporal/Spatial/Social Bootstrapping)}
\textbf{Epoch loop.} For each episode:
(i) \emph{Structure-first retrieval} (reuse $\Phi$): select top-$k$ latent cycles (DTW/graph-alignment) to bias readout.  
(ii) \emph{Residual-only adaptation} (fast STDP/eligibility on $\Psi$) with $\Phi$ frozen.  
(iii) \emph{Closure test via PH}: re-encode post-adaptation spikes, update filtration, compute $\mathcal{B}$, threshold by $\tau$, form $\mathcal{N}$ (new cycles) and $\mathcal{F}$ (now-trivial cycles).  
(iv) \emph{Slow consolidation} (surrogate-gradients/regularization) to fold stable corrections into $\Phi$; update memory $\mathcal{M}$.  
\textbf{Spatial bootstrapping.} Alternate modalities (T2) and reuse the same $\Phi$ across encoders.  
\textbf{Social bootstrapping.} For T3, inject teacher pulses/symbols; student aligns via shared $\Phi$ and residual $\Psi$.

\paragraph{Evaluation Metrics and Hypotheses}
\textbf{H1 (SbS monotonicity).} $|\Phi|$ grows (nondecreasing across epochs); strict increases when novel durable loops appear.  
\textbf{H2 (Residual contraction).} $\mathbb{E}[\Delta\mathcal{L}_{t+1}|\Phi_{t+1}] \le \mathbb{E}[\Delta\mathcal{L}_t|\Phi_t]$.  
\textbf{H3 (Order invariance).} Reordering micro-events within a loop leaves readouts invariant:
$R(s_T)$ unchanged within confidence bounds; BNK distance between latent diagrams $\le \epsilon$.  
\textbf{H4 (Integration).} Sheaf-compatibility proxy: overlaps of retrieved cycles yield consistent readouts
(low disagreement on shared bins); fraction of episodes producing a \emph{single} coherent decode rises.  
\textbf{H5 (Generalization by amortization).} With structure-first retrieval, fewer inner-loop steps achieve target performance; faster adaptation on novel but homologous loops.

\paragraph{Ablations (Causal Tests)}
\textbf{A1 Break closure.} Remove recurrent feedback or theta gating $\Rightarrow$ cycles vanish (barcode collapses),
order invariance fails, residuals increase.  
\textbf{A2 Disable persistence.} Set $\tau\!=\!0$ (keep all fragments) $\Rightarrow$ $|\Phi|$ explodes,
generalization and stability degrade.  
\textbf{A3 No structure-first retrieval.} Start from scratch per episode $\Rightarrow$ slower adaptation, higher $\Delta\mathcal{L}$.  
\textbf{A4 Scramble order.} Randomize within-episode order without preserving loop class $\Rightarrow$ homology changes, performance drops (OI boundary condition violated).  
\textbf{A5 Freeze $\Psi$.} Prevent residual adaptation $\Rightarrow$ short-term errors persist despite existing $\Phi$.

\paragraph{Implementation Notes}
\textbf{Tooling.} Brian2/Norse (PyTorch SNN) for dynamics; Ripser/Gudhi for PH; NetworkX/igraph for state graphs.
\textbf{Parameters (typical).} $N_E=1000$, $N_I=250$, $\text{conn}\approx10\%$, $\tau_{\mathrm{m}}=20$\,ms, 
STDP window $\pm 20$\,ms, neuromodulatory scalar $\in[0,1]$, retrieval $k\in[3,10]$, PH dim $k\in\{1,2\}$,
lifespan threshold $\tau$ from elbow of barcode.
\textbf{Data transforms.} Phase-coded inputs for theta/gamma; additive noise and small temporal jitter for persistence-stability tests.

\begin{center}
\textbf{Pseudocode (One Episode)}    
\end{center}

\begin{algorithm}[H]
\caption{One SNN episode with MAI-as-SbS}
\KwIn{$x_{1:T}$, SNN params $(\psi,\theta)$, cycle memory $\Phi$, threshold $\tau$}
$z_{1:T}\!\leftarrow\! \textsf{SNNEncode}(x_{1:T};\psi)$; 
$\mathcal{C}_{1:T}\!\leftarrow\! \textsf{RetrieveCycles}(z_{1:T},\Phi,k)$ \tcp*{S1}
\For{$t=1$ \KwTo $T$}{
  $\hat{y}_t \leftarrow \textsf{Decode}(z_t,\mathcal{C}_t;\theta)$; 
  $r_t \leftarrow \textsf{Residual}(x_t,\hat{y}_t)$; 
  $(\psi,\theta)\leftarrow \textsf{FastAdapt}_\Psi(\psi,\theta;r_t)$ \tcp*{S2}
}
$\tilde z_{1:T}\!\leftarrow\!\textsf{SNNEncode}(x_{1:T};\psi)$; 
$H\!\leftarrow\!\textsf{PersistentHomology}(\tilde z_{1:T}\cup\Phi)$ \tcp*{S3}
$\mathcal{N}\!\leftarrow\!\{[\gamma]\in H:\mathrm{life}([\gamma])\ge\tau\wedge[\gamma]\notin\Phi\}$; 
$\mathcal{F}\!\leftarrow\!\{[\gamma]\in \Phi:\mathrm{life}([\gamma]|H)<\tau\}$; 
$\Phi\!\leftarrow\!(\Phi\cup\mathcal{N})\setminus\mathcal{F}$; \tcp*{update cycles}
$(\psi,\theta)\leftarrow \textsf{SlowConsolidate}(\psi,\theta;\Phi)$ \tcp*{S4}
\end{algorithm}


\paragraph{Outcome.}
Successful runs should show: rising $|\Phi|$ (with pruning), falling $\Delta\mathcal{L}$, invariance to order-preserving permutations, and improved sample efficiency via structure-first retrieval—empirically validating SbS, CCUP, and MAI as a unified mechanism of closure-induced, persistence-stabilized inference in spiking networks.

\end{document}